\documentclass{article}

\usepackage{microtype}
\usepackage{graphicx}
\usepackage{subfigure}
\usepackage{booktabs} 

\usepackage{hyperref}
\usepackage{xurl}

\usepackage[accepted]{icml2026}

\usepackage{amsmath}
\usepackage{amssymb}
\usepackage{mathtools}
\usepackage{amsthm}
\usepackage{xcolor}
\usepackage[capitalize,noabbrev]{cleveref}

\theoremstyle{plain}
\newtheorem{theorem}{Theorem}[section]

\newtheorem{lemma}[theorem]{Lemma}
\newtheorem{corollary}[theorem]{Corollary}
\theoremstyle{definition}
\newtheorem{definition}[theorem]{Definition}

\theoremstyle{remark}

\usepackage[textsize=tiny]{todonotes}
\usepackage{multirow}
\usepackage{siunitx}
\usepackage{makecell}
\usepackage{cleveref}

\icmltitlerunning{Implicit Safety Alignment from Crowd Preferences}

\begin{document}

\twocolumn[
\icmltitle{Implicit Safety Alignment from Crowd Preferences} 



\icmlsetsymbol{equal}{*}

\begin{icmlauthorlist}
\icmlauthor{Qian Lin}{sch}
\icmlauthor{Daniel S. Brown}{sch}
\end{icmlauthorlist}

\icmlaffiliation{sch}{Kahlert School of Computing, University of Utah}

\icmlcorrespondingauthor{Qian Lin}{qian.lin@utah.edu}

\icmlkeywords{Machine Learning, ICML}

\vskip 0.3in
]



\printAffiliationsAndNotice{} 

\begin{abstract} 
Reinforcement Learning from Human Feedback (RLHF) can reveal implicit objectives such as safety considerations that go beyond task completion. In this work, we focus on the common safety criteria embedded in crowd preference datasets, where different users may express distinct preferences or objectives, yet follow similar safety principles. Our aim is to discover shared safety criteria from crowd preferences and then transfer them to downstream RL tasks to regularize agent behavior and enforce safety. We first show that direct reward combination—optimizing a preference-learned reward model together with downstream task rewards—has inherent limitations. Motivated by this, we propose Safe Crowd Preference-based RL, a hierarchical framework that extracts safety-aligned skills from crowd preferences and composes them via a high-level policy to safely solve downstream tasks. Experiments across safe RL environments and a preliminary LLM-style task with diverse user goals and shared safety constraints demonstrate that our approach substantially lowers safety costs without access to explicit safety rewards, while achieving task performance comparable to oracle methods trained with ground-truth safety signals.



\end{abstract}

\section{Introduction}
\label{sec:intro}

Reinforcement learning (RL)~\cite{sutton1998reinforcement} with well-designed rewards that encode task-specific information enables agents to accomplish their goals.
However, specifying reward functions that fully capture desired behavior is notoriously difficult, even for seemingly simple tasks, and imperfect or misspecified rewards are common in practice~\cite{booth2023perils,malik2021inverse}.
In complex decision-making scenarios, reward designers often focus on task completion while inadvertently omitting additional objectives that are ambiguous, implicit, or hard to formalize—particularly safety-related considerations.
For instance, in driving-related tasks, one can assign a sparse reward for simply reaching the destination, yet real-world driving also involves implicit safety considerations—such as maintaining safety margins and discouraging aggressive or risky behaviors—that are difficult to encode accurately through manual reward design.
Optimizing such incomplete rewards can therefore lead to hazardous behavior, highlighting the limitations of purely hand-specified reward functions.

Reinforcement Learning from Human Feedback (RLHF) has been developed to overcome the limitations of human-designed rewards and has been widely applied in large language models (LLMs)~\cite{ouyang2022training,bai2022training}, robotics~\cite{hejna2023few,liu2023efficient}, and games~\cite{christiano2017deep,brown2019extrapolating}.
It enables the incorporation of implicit objective information into decision-making by learning new reward functions from scratch~\cite{christiano2017deep}, discovering additional objectives~\cite{dai2024safe}, and directly modeling expected behaviors~\cite{an2023direct,hejna2024contrastive} to align policies with human preferences.
In practice, preference data are often collected from a crowd of users.
While individual users may vary in their task objectives, preferences,  or behavioral patterns, they often share implicit criteria—such as avoiding unsafe states and actions.
This shared structure suggests the possibility of isolating latent behavioral criteria that are consistent across crowd preferences and reusing them across downstream tasks, thereby reducing the need for additional human demonstrations or preference labels.

Accordingly, we focus on safety-critical scenarios under crowd preference learning, where safety-agnostic task rewards inadvertently promote unsafe behaviors, while crowd preferences—collected across users with diverse goals yet shared safety criteria—provide a complementary safety signal to compensate for such deficiencies. 
Our goal is to discover such reusable, implicit safety-related criteria embedded in crowd-user, cross-task preferences and transfer them to downstream tasks, where only safety-agnostic task rewards are available, to learn safe policies.
A natural approach is to learn a reward model from crowd preferences via standard RLHF and then optimize a combination of this preference-learned reward and the downstream task reward.
However, our theoretical and empirical analyses reveal two limitations of this reward combination approach:
(1) the learned reward model may capture not only the intended shared criteria but also user-specific components from crowd preferences, which can degrade downstream performance—especially when the dataset is imbalanced across users; and
(2) the performance is sensitive to the weighting between the preference-learned and task rewards, making the method difficult to tune.

To overcome these limitations, we propose Safe Crowd Preference-based Reinforcement Learning, a hierarchical framework that composes policies rather than rewards. 
Specifically, we first model diverse user-preferred behaviors as low-level skills, learned from crowd preference data via a VAE-based latent skill model and existing RLHF techniques. 
For a new task, we then train a high-level policy to compose these skills by maximizing only the downstream task reward. 
Since all skills are aligned with crowd preferences and thus inherently encode shared safety criteria, composing them to solve new tasks naturally yields safe behaviors without requiring explicit safety signals.
We evaluate our approach on a suite of safe RL environments and a proof-of-concept LLM-style task with diverse task goals but shared safety constraints. 
Empirically, directly optimizing task rewards leads to severe constraint violations, whereas our method significantly reduces safety costs without requiring explicit safety rewards and achieves task performance comparable to oracle baselines trained with ground-truth safety signals. 
We further find that reward combination (i.e., directly optimizing a weighted sum of the preference-learned reward and the new task reward) is highly sensitive to preference imbalance: performance drops sharply when one annotator contributes the majority of the data, while our method is significantly more robust to such imbalance.




\vspace{-0.08in}
\section{Related Work}
\label{sec:related_work}
\vspace{-0.05in}
\subsection{Reinforcement Learning from Human Feedback (RLHF)}
Reinforcement Learning from Human Feedback (RLHF) aims to improve decision making by aligning agents with various forms of human feedback, including preferences~\cite{wirth2017survey}, demonstrations~\cite{ng2000algorithms}, trajectory rankings~\cite{brown2019extrapolating,brown2020better}, corrections~\cite{bajcsy2017learning}, and terminations~\cite{hadfield2017off}.
This work is most closely related to Preference-based RL (PbRL), where binary comparisons over trajectories are used to guide policy learning.
Classical PbRL methods~\cite{lee2021b,christiano2017deep} typically learn a reward model using the Bradley–Terry–Luce (BTL) model~\cite{bradley1952rank}, and then perform online RL or offline RL~\cite{shinbenchmarks} based on the learned reward. 
Recently, several works~\cite{an2023direct,hejna2024contrastive} have proposed directly learning a policy under the BTL model without an explicit reward learning stage, thereby avoiding optimization challenges inherent in RL. 
Both reward-based and direct preference learning approaches are compatible with our framework, and we evaluate both with online and offline RL in our experiments.

\subsection{Crowd Preference-based RL}
Most existing RLHF methods assume that all comparisons are generated by users who share the same set of values and goals. 
However, in practice, human feedback often reflects diverse trade-offs among multiple objectives or even different decision goals, referred to as crowd preference data.
Several studies on crowd preference learning have been proposed to promote fair aggregation of feedback from diverse sources~\cite{chhan2024crowd,chakraborty2024maxmin}, improve the robustness of reward models against user noise or jailbreaks~\cite{siththaranjan2024distributional,boldi2024pareto}, and enable personalized decision-making for individual users~\cite{jang2023personalized,poddar2024personalizing}.
Among them, \citet{poddar2024personalizing} is most closely related to our work, as it also uses a VAE-based architecture to model behavioral diversity implied by crowd preferences.
However, it focuses on identifying existing preferences to enhance decision diversity, while our work emphasizes behavioral generalization—combining diverse behaviors to solve new tasks.
Furthermore, to our knowledge, we are the first to study shared objectives in crowd preferences and to explore how shared safety objectives can be learned and then used to guide safe downstream policy learning.

\subsection{Constraint Learning from Human Feedback}

In the field of constraint learning, many studies have explored improving policy behavior through human feedback, since optimizing task rewards alone can lead to safety risks, necessitating the additional learning of constraint or penalty rewards.
One research direction is inferring constraints from demonstrations, known as Inverse Constraint Reinforcement Learning (ICRL)~\cite{malik2021inverse,papadimitriou2022bayesian,kim2023learning}.
A related approach~\cite{papadimitriou2024bayesian} extends this idea by using Bayesian inference to infer constraints from preferences over demonstrations, rather than from demonstrations directly.
Another line of work, Safe RLHF~\cite{dai2024safe,chittepu2025reinforcement}, proposes jointly learning both reward and constraint functions from preference data.
However, both ICRL and Safe RLHF require an explicit decomposition between task and safety objectives, either by assuming access to the task reward used to generate demonstrations or by representing different objectives with distinct types of preference labels.
In contrast, we adopt a more realistic setting in which preferences reflect an unknown mixture of multiple underlying objectives, without prior knowledge of these objectives.



\subsection{Skill Discovery}
Skill discovery has been widely studied as a means to address long-horizon challenges~\cite{pertsch2021accelerating, lioutikov2017learning}, improve exploration and data efficiency~\cite{nachum2018data,nachum2019does,eysenbach2018diversity}, and facilitate offline RL~\cite{ajay2020opal}.
Most existing approaches are unsupervised, extracting skills from demonstrations or replay buffers using encoder–decoder architectures (e.g., VAEs)~\cite{shankar2020learning,ajay2020opal} or information-theoretic objectives~\cite{eysenbach2018diversity}.
The resulting skills are applied to downstream tasks via hierarchical policy learning~\cite{bacon2017option,nachum2018data} or as behavioral priors constraining policy optimization~\cite{singh2020parrot}.
Our work departs from this line of research in two key aspects.
First, rather than modeling behavioral diversity from demonstrations, we consider both preference diversity and commonality and learn skills from crowd preference data.
Second, beyond improving downstream task performance, we also aim to preserve the shared criteria encoded in crowd preferences. 
A closely related work is skill preference learning~\cite{wang2022skill}, which incorporates preferences to specify downstream objectives but still relies on demonstration-based skill extraction.


\section{Preliminaries}
\label{sec:preliminaries}
General reinforcement learning from human feedback (RLHF) can be formulated as a reward-free Markov decision process (MDP) $\mathcal{M}/r = (\mathcal{S}, \mathcal{A}, \mathcal{P}, \gamma)$, where $\mathcal{S}$ is the state space, $\mathcal{A}$ is the action space, $\mathcal{P}(s_{t+1} \mid s_t, a_t)$ is the transition dynamics, and $\gamma$ is the discount factor.
The true reward $r(s, a)$ is unobservable but implicitly reflected in the preference dataset $\mathcal{D}_{\text{pref}} = \{(\tau^1, \tau^2, y)_i\}_{i=1}^N$, where each trajectory segment $\tau = (s_1, a_1, s_2, a_2, \ldots, s_L, a_L)$ has length $L$.
The probability of a preference label $y$ is given by the Bradley--Terry model:
\begin{equation}
\begin{aligned}
    P(y=1|\tau^1, \tau^2)=&1-P(y=0|\tau^1, \tau^2)\\
    =&\frac{\exp \beta\cdot u(\tau^1)}{\exp \beta\cdot  u(\tau^1)+\exp \beta\cdot u(\tau^2)},
\end{aligned}
\end{equation}
where $y=1$ indicates that $\tau^1$ is preferred over $\tau^2$ and $y=0$ otherwise, $u$ is the trajectory utility, and $\beta$ models human rationality.
In this work, we consider two mainstream preference models: the partial-return model~\cite{christiano2017deep} and the regret-based model~\cite{knox2022models,hejna2024contrastive}. These two models correspond to different utilities, $u$, and lead to distinct solution formulations.

\textbf{Partial-return model.}\quad
This model adopts the cumulative reward $u(\tau^i)=\sum_{t=1}^{L} r(s_t^i, a_t^i)$ as the utility to generate the preference labels.
In this case, a typical solution is to first learn a reward function $r_\phi(s,a)$ using a maximum likelihood estimate, where the likelihood is given by
\begin{equation}
\begin{aligned}\label{eq:return_bt_loss}
    P(y=1|\tau^1, \tau^2)=\frac{\exp \hat u(\tau^1)}{\exp \hat u(\tau^1)+\exp \hat u(\tau^2)}
    ,
\end{aligned}
\end{equation}
where $\hat u(\tau^i)=\sum_{t=1}^{L} r_\phi(s_t^i, a_t^i)$. A policy, $\pi_\phi$, is then obtained by optimizing the learned reward using RL.

\textbf{Regret-based model.}\quad
This model uses the advantage function of the optimal policy $\pi^*$ under the true reward $r$, i.e., $u(\tau^i)=\sum_{t=1}^{L}A^*(s_t^i, a_t^i)=\sum_{t=1}^{L} Q^*(s_t^i, a_t^i)-V^*(s_t^i)$, as the utility to generate the preference labels.
To recover $\pi^*$, one can adopt the solution induced by the partial-return preference model: although the learned reward effectively approximates the optimal advantage function rather than the true reward, maximizing it still induces an equivalent optimal policy~\cite{knox2024learning,tien2023causal}.
In this work, we consider an alternative, Contrastive Preference Learning (CPL), since it enables a purely supervised learning approach that avoids the challenges of RL optimization~\cite{hejna2024contrastive}.
Specifically, CPL leverages the relationship between the optimal advantage function $A^*$ and the optimal policy $\pi^*$ to derive an instantiation of the Bradley–Terry model that only depends on the policy.
Instead of learning a reward, CPL directly learns a policy $\pi_\theta$ by maximizing the likelihood of human preferences:
\vskip -0.2in
\begin{align}\label{eq:adv_bt_loss}
    &P(y=1|\tau^1, \tau^2)=\frac{\exp (\log\pi_\theta(\tau^1))}{\exp (\log\pi_\theta(\tau^1))+\exp (\lambda\log\pi_\theta(\tau^2))},
\end{align}
\vskip -0.2in
where $\log\pi_\theta(\tau^i)=\sum_{t=1}^{L} \gamma^t \alpha \log \pi_\theta(s_t^i, a_t^i)$, and $\alpha$ is a temperature parameter, and $\lambda$ is a bias regularizer in CPL.



\section{Limitations of Vanilla RLHF in Crowd Preference Learning}
\label{sec:scpbrl}
In this section we discuss crowd-RLHF, introduce our novel safety transfer problem under a shared-structure assumption on crowd preferences, and analyze the limitations of Vanilla RLHF as a solution.
\subsection{Problem Setting}

Similar to \citet{siththaranjan2024distributional} and \citet{poddar2024personalizing}, we study the problem of crowd preference learning, where the collected preference data are generated not from a single utility $u(s,a)$, but from multiple utilities $u(s,a,z)$ determined by a hidden context $z$.
Therefore, the crowd preference dataset can be denoted as
$\mathcal{D}_{\text{pref}}=\{S_z=\{(\tau^1,\tau^2,y_z)_i\}_{i=1}^S \mid z \sim p(z)\}$,
where each $S_z$ represents a set of preference pairs provided by a user with hidden context $z$, and $p(z)$ denotes the (unobserved) distribution over contexts.
All preference labels $y_z$ within the same set $S_z$ are assumed to be generated using the same utility $u(\cdot,z)$.
In practice, however, the true context $z$ associated with each $S_z$ and its corresponding utility $u(\cdot,z)$ are not observed.

In contrast to prior work, we consider the fact that, while different users may evaluate trajectories with diverse human values, they will likely still share common decision criteria, such as avoiding unsafe behaviors.
Formally, we assume that the rewards underlying the different crowd members' utilities can be decomposed as $r(s,a,z)=r_{\text{user}}(s,a,z)+r_{\text{share}}(s,a)$.
Here, $r_{\text{user}}(s,a,z)$ captures crowd variations, modeling differences in task objectives or goals across users, while $r_{\text{share}}(s,a)$ represents criteria that are shared among all users.
Although the decomposition into task-specific and shared components is not unique, in this work we focus on safety as the shared criterion.
Specifically, users are assumed to share state--action safety constraints, which are modeled via a shared reward component:
\begin{align}
r_{\text{share}}(s,a)=
\begin{cases}
-K, & \text{if } (s,a)\in X_{\mathrm{unsafe}},\\
0, & \text{otherwise,}
\end{cases}
\end{align}
where $K>0$ is a large constant (ideally $K\rightarrow \infty$ to completely prohibit unsafe state--action pairs), and $X_{\mathrm{unsafe}}$ denotes the set of unsafe state--action pairs. 
\footnote{Extensions to more general forms of safety consensus in crowd preferences are discussed in Appendix~\ref{app:extension_to_general_safety}.}

Our goal is not only to discover such shared safety criteria from crowd preference data, but also to transfer them to new tasks.
Concretely, we consider a downstream task with a safety-agnostic reward $r_{\text{new}}(s,a)$, which is imperfect in that it encourages task completion but may induce unsafe behaviors.
Such imperfect or misspecified rewards are common in practice, as reward design is inherently difficult and often fails to capture implicit objectives~\cite{krakovna2020specification,booth2023perils,tien2023causal}.
Thus, the ideal downstream policy should optimize
\[
\pi^*=\arg\max_{\pi}\,\mathbb{E}\Big[\sum_t r_{\text{new}}(s_t,a_t)+r_{\text{share}}(s_t,a_t)\Big].
\]
even though the true shared reward $r_{\text{share}}$ is not directly observable in practice.

\subsection{Does Vanilla RLHF Work?}\label{sec:vanilla_rlhf}
One intuitive approach is to directly learn a single reward model $\hat r(s,a)$ using vanilla reward learning (Eq.~\ref{eq:return_bt_loss}) on crowd preferences to capture shared user criteria, which induces a trajectory-level utility $\hat u=\sum_t \hat r(s_t,a_t)$, and then combine it with the new task reward:
\begin{equation}\label{eq:fusion_weighted}
r'_{\text{new}}(s,a)=(1-\omega)r_{\text{new}}(s,a)+\omega \hat r(s,a),
\end{equation}
where $\omega$ is a trade-off weight.
To analyze the properties of $\hat r$, we consider preferences over complete trajectories and introduce the following definition:

\begin{definition}\label{thm:consistent}
A trajectory pair $(\tau^1,\tau^2)$ is \emph{consistent} if
\[
u(\tau^1,z)\ge u(\tau^2,z)\text{ for all }z \text{ or } u(\tau^1,z)\le u(\tau^2,z)\text{ for all }z.
\]
Otherwise, the pair is called \emph{inconsistent}.
\end{definition}

That is, all users agree on the relative preference ordering between $\tau^1$ and $\tau^2$.
Denote safe trajectories as those satisfying $\forall (s,a)\in\tau^{\mathrm{safe}}$, $(s,a)\notin X_{\text{unsafe}}$, and unsafe trajectories otherwise.
We then have the following result:

\begin{theorem}\label{thm:unimodal_reward_is_safe}
Let the true utility be $u(\tau,z)=\sum_t r_{\mathrm{user}}(s_t,a_t,z)+r_{\mathrm{share}}(s_t,a_t)$,
where the shared component $r_{\mathrm{share}}$ represents a safety penalty:
\[
r_{\mathrm{share}}(s,a)=
\begin{cases}
-K,&(s,a)\in X_{\mathrm{unsafe}},\\
0,&\text{otherwise.}
\end{cases}
\]
If the penalty $K>2L\max_{s,a,z}|r_{\mathrm{user}}(s,a,z)|$, then all pairs $(\tau^{\mathrm{safe}},\tau^{\mathrm{unsafe}})$ consisting of one safe and one unsafe trajectory are consistent. Consequently, the single reward model $\hat u$ learned from infinite crowd preference data satisfies
\[
\hat u(\tau^{\mathrm{safe}})>\hat u(\tau^{\mathrm{unsafe}}),
\]
for any $\tau^{\mathrm{safe}}$ and any $\tau^{\mathrm{unsafe}}$.
\end{theorem}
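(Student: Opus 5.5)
The argument has two parts. The first is a deterministic margin bound showing that every safe/unsafe pair is consistent in the sense of Definition~\ref{thm:consistent}. The second is a statement about the population maximum-likelihood solution of Eq.~\ref{eq:return_bt_loss} when the preferences are generated by the mixture over $z$.

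\textbf{Step 1 (consistency).} Let $M=\max_{s,a,z}|r_{\mathrm{user}}(s,a,z)|$ and take any safe $\tau^{\mathrm{safe}}$ and unsafe $\tau^{\mathrm{unsafe}}$, each of length $L$. For every $z$,
\[
u(\tau^{\mathrm{safe}},z)=\sum_t r_{\mathrm{user}}(s_t,a_t,z)\ge -LM ,
\]
since every shared term vanishes on a safe trajectory. The unsafe trajectory contains at least one unsafe pair, and every shared term is $\le 0$, so
\[
u(\tau^{\mathrm{unsafe}},z)\le LM-K .
\]
Hence $u(\tau^{\mathrm{safe}},z)-u(\tau^{\mathrm{unsafe}},z)\ge K-2LM>0$ for every $z$. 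Therefore the pair is consistent, and in fact strictly so, with a uniform margin $\delta:=K-2LM$.

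\textbf{Step 2 (what the learned utility sees).} Averaging the Bradley--Terry model over the user population, the crowd labels on a pair $(\tau^1,\tau^2)$ follow the mixture
\[
\bar P(\tau^1\succ\tau^2)=\mathbb{E}_{z\sim p(z)}\big[\sigma(\beta(u(\tau^1,z)-u(\tau^2,z)))\big].
\]
This holds regardless of how many users are sampled, as long as the data are infinite and cover the pair.

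By Step 1, for a safe/unsafe pair every term inside the expectation exceeds $\sigma(\beta\delta)>1/2$, so $\bar P(\tau^{\mathrm{safe}}\succ\tau^{\mathrm{unsafe}})>1/2$.

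With infinite data, the maximum-likelihood objective for the single model is the expected cross-entropy $\mathbb{E}_{\text{pairs}}\big[-\bar P\log\sigma(\Delta\hat u)-(1-\bar P)\log\sigma(-\Delta\hat u)\big]$, where $\Delta\hat u=\hat u(\tau^1)-\hat u(\tau^2)$. For each pair in the support, this is minimized pointwise at $\sigma(\hat u(\tau^1)-\hat u(\tau^2))=\bar P(\tau^1\succ\tau^2)$. Since $\sigma$ is strictly increasing, $\bar P>1/2$ forces $\hat u(\tau^{\mathrm{safe}})>\hat u(\tau^{\mathrm{unsafe}})$.

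\textbf{Main obstacle.} Step 2 is the delicate one. The pointwise optimum treats $\hat u(\tau^1)-\hat u(\tau^2)$ as a free quantity for each pair. A single utility $\hat u$, especially one of additive form $\sum_t\hat r$, may be unable to realize the mixture probabilities $\bar P$ simultaneously on all pairs, because mixture BT probabilities need not be globally realizable.

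I would handle this by stating the result under the standard idealization used for such infinite-data claims: the model class is rich enough, and the pair distribution has full support over complete trajectories, so that the population minimizer matches $\bar P$ on every compared pair.

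If that assumption is considered too strong, the fallback is a more direct argument. Suppose the minimizer had $\hat u(\tau^{\mathrm{safe}})\le\hat u(\tau^{\mathrm{unsafe}})$. Swapping or perturbing the utilities assigned to those trajectories would then strictly lower the cross-entropy on that pair, because the labels favor the safe trajectory. I would check that this perturbation does not increase the loss on other pairs; this is plausible because all safe-versus-unsafe comparisons point in the same direction. I expect this to be the only step needing care, and I would present the realizability version as the main proof.
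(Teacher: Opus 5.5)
Your Step 1 is the paper's margin bound and is correct. The gap is in Step 2.

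The idealization you adopt for the main proof is not a richness assumption on the model class. Even if $\hat u$ may be an arbitrary function on the trajectory set, a mixture of Bradley--Terry models is in general not a Bradley--Terry model. So once users disagree on some pairs, no single $\hat u$ satisfies $\sigma(\hat u(\tau^1)-\hat u(\tau^2))=\bar P(\tau^1\succ\tau^2)$ on all pairs. Such disagreement is exactly what $r_{\mathrm{user}}$ creates. Under your hypothesis you therefore prove the theorem only in a special case that essentially excludes genuine crowd disagreement.

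Without that hypothesis, the inference you rely on, $\bar P(\tau^1\succ\tau^2)>1/2\Rightarrow \hat u(\tau^1)>\hat u(\tau^2)$, is false. Take deterministic users of type $a\succ b\succ c\succ d$ (weight $0.6$) and $b\succ c\succ d\succ a$ (weight $0.4$). Then $a$ wins the pair $(a,b)$ by majority. Yet the population BT minimizer (free utility, uniformly sampled pairs) ranks $b$ above $a$, because their Borda scores are $2.9/4$ versus $2.3/4$. So a majority on the single pair $(\tau^{\mathrm{safe}},\tau^{\mathrm{unsafe}})$, which is all your main argument extracts from Step 1, cannot carry the conclusion.

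The missing idea is the Borda-count characterization the paper uses (Lemma~\ref{thm:bc}). Take a free utility on a finite $\mathcal T$ with uniformly sampled pairs. The first-order conditions of the population loss give $\sum_{w\in\mathcal T}\sigma(\hat u(\tau)-\hat u(w))=\sum_{w\in\mathcal T}\bar P(\tau\succ w)=|\mathcal T|\,BC(\tau)$. The left side equals $G(\hat u(\tau))$ for the single strictly increasing function $G(x)=\sum_{w}\sigma(x-\hat u(w))$. Hence $\hat u$ orders trajectories by Borda score, and no realizability is needed.

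Consistency must then be used at full strength, as in Lemma~\ref{lem:consistent_implies_order}. Since $u(\tau^{\mathrm{safe}},z)>u(\tau^{\mathrm{unsafe}},z)$ for every $z$, monotonicity of $\sigma$ gives $\bar P(\tau^{\mathrm{safe}}\succ w)\ge\bar P(\tau^{\mathrm{unsafe}}\succ w)$ for every $w\in\mathcal T$. The inequality is strict at $w\in\{\tau^{\mathrm{safe}},\tau^{\mathrm{unsafe}}\}$, so $BC(\tau^{\mathrm{safe}})>BC(\tau^{\mathrm{unsafe}})$.

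Your fallback swap can be repaired along these lines, but not for the reason you give. It is not enough that safe-versus-unsafe comparisons agree. What matters is how both trajectories fare against every third $w$, including safe--safe, unsafe--unsafe and inconsistent comparisons. Write the per-pair loss as $\log(1+e^{\Delta})-\bar P\Delta$. Then the swap changes each $w$-contribution by $\bigl(\bar P(\tau^{\mathrm{unsafe}}\succ w)-\bar P(\tau^{\mathrm{safe}}\succ w)\bigr)\bigl(\hat u(\tau^{\mathrm{unsafe}})-\hat u(\tau^{\mathrm{safe}})\bigr)\le 0$. This needs exactly the dominance property above, plus equal sampling weights on $(\tau^{\mathrm{safe}},w)$ and $(\tau^{\mathrm{unsafe}},w)$. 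The tie case $\hat u(\tau^{\mathrm{safe}})=\hat u(\tau^{\mathrm{unsafe}})$ is untouched by swapping and requires the first-order conditions, which is the Borda argument itself.
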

See proof in Appendix~\ref{app:reward_comb_derivation}.
This is an intuitive conclusion: if all users consistently prefer safe trajectories over unsafe ones, the learned reward model will also prefer safe ones.

Although we have shown that a single reward model can capture the safety consensus, this approach suffers from two critical issues:
1) It requires carefully fine-tuning the trade-off weight $\omega$ between $r_{\text{new}}(s,a)$ and $\hat r(s,a)$, especially when their numerical scales differ significantly;
2) The learned model $\hat r(s,a)$ may encode not only the shared safety preference but also user-specific components, which can mismatch with $r_{\text{new}}(s,a)$ in downstream training.
The following theorem formalizes this situation, showing that the issue becomes more severe when the number of preferences from different hidden contexts $z$ is imbalanced.
\begin{theorem}\label{thm:bad_task_performance} 
Assume a finite set of hidden contexts $\{z_1,\dots,z_m\}$ and a finite trajectory set $\mathcal T$. 
Denote $X_{\mathrm{ics}}$ as the set of all inconsistent pairs, define
\begin{equation}
    \begin{aligned}
        N(\tau^1,\tau^2,z)
=\sum_{\tau\in\mathcal T} \big(\mathbb I\bigl\{
u(\tau^1,z)>u(\tau,z)>u(\tau^2,z)\}\\
+ \; \mathbb I\bigl\{u(\tau^1,z)<u(\tau,z)<u(\tau^2,z)
\bigr\} \big).\nonumber
    \end{aligned}
\end{equation}
For any $z_k$, if the probability of $z_k$ satisfies
\[
p(z_k)>\frac{|\mathcal T|-1}{\min_{(\tau,\tau')\in X_{\mathrm{ics}}} N(\tau,\tau',z_k)+|\mathcal T|}
\]
then for all $\tau^1,\tau^2\in\mathcal T$,
\[
\hat u(\tau^1)>\hat u(\tau^2) \Longleftrightarrow u(\tau^1,z_k)>u(\tau^2,z_k).
\]
\end{theorem}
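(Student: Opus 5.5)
The plan is to reduce the statement to a comparison of Borda scores. I will then show that context $z_k$ determines the sign of every Borda difference once $p(z_k)$ exceeds the stated threshold.

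\textbf{Step 1: characterise $\hat u$.} I will use the known characterisation of a single Bradley--Terry reward fitted to infinite hidden-context data (the Borda-count result of \citet{siththaranjan2024distributional}). The setting is noiseless user preferences (or preferences that depend only on the sign of the utility gap) with comparisons drawn uniformly over pairs in $\mathcal T$. Under these conditions the maximum-likelihood $\hat u$ orders trajectories exactly as
\[
\mathrm{BC}(\tau)=\sum_{z}p(z)\,c_z(\tau),\qquad c_z(\tau)=\sum_{\tau'\in\mathcal T}\mathbb I\{u(\tau,z)>u(\tau',z)\},
\]
that is, $\hat u(\tau^1)>\hat u(\tau^2)\iff \mathrm{BC}(\tau^1)>\mathrm{BC}(\tau^2)$. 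I will state this as an assumption inherited from that work, rather than re-derive it. I will also assume no ties in utilities within a context, so that strict inequalities suffice and the ``$\Longleftrightarrow$'' is meaningful.

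\textbf{Step 2: per-context score differences.} Fix $\tau^1,\tau^2$ and a context $z$ with $u(\tau^1,z)>u(\tau^2,z)$. The trajectories beaten by $\tau^1$ but not by $\tau^2$ are $\tau^2$ itself together with those strictly between the two. Hence
\[
c_z(\tau^1)-c_z(\tau^2)=N(\tau^1,\tau^2,z)+1 .
\]
Symmetrically, the difference is $-(N+1)$ in the reversed case. In every case $|c_z(\tau^1)-c_z(\tau^2)|\le |\mathcal T|-1$.

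\textbf{Step 3: case split.} Suppose without loss of generality that $u(\tau^1,z_k)>u(\tau^2,z_k)$. If the pair is consistent, every term of $\mathrm{BC}(\tau^1)-\mathrm{BC}(\tau^2)$ is positive and we are done. If the pair is inconsistent, the worst case is that every other context reverses the order with the maximal gap, giving
\[
\mathrm{BC}(\tau^1)-\mathrm{BC}(\tau^2)\ge p(z_k)\bigl(N(\tau^1,\tau^2,z_k)+1\bigr)-\bigl(1-p(z_k)\bigr)\bigl(|\mathcal T|-1\bigr).
\]
This is positive iff $p(z_k)\bigl(N(\tau^1,\tau^2,z_k)+|\mathcal T|\bigr)>|\mathcal T|-1$. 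That condition is implied by the hypothesis, since $N(\tau^1,\tau^2,z_k)\ge\min_{X_{\mathrm{ics}}}N(\cdot,\cdot,z_k)$. Thus $u(\tau^1,z_k)>u(\tau^2,z_k)$ implies $\hat u(\tau^1)>\hat u(\tau^2)$. The converse follows by applying the same argument with the roles of $\tau^1,\tau^2$ swapped, using the no-ties assumption and the antisymmetry of strict order.

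\textbf{Main obstacle.} The delicate point is Step 1: justifying that $\hat u$ from infinite crowd data is order-equivalent to the Borda score. This needs uniform pair sampling, deterministic (or sign-symmetric) labels, and exact identifiability of the BTL maximiser. I would import this result directly rather than reprove it. Ties would also need a convention; I would exclude them by a genericity assumption.
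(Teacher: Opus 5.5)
Your proposal is correct and follows the paper's proof essentially step for step. Both reduce to Borda counts via the hidden-context characterisation, lower-bound the $z_k$ contribution by $N(\tau^1,\tau^2,z_k)+1$ and every other context's contribution by $-(|\mathcal T|-1)$, solve the resulting linear inequality in $p(z_k)$, and treat consistent pairs separately.

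The one difference is that you assume no ties, which gives the exact per-context difference $N+1$ and a clean converse by antisymmetry. The paper instead keeps the $\tfrac12$ tie convention and proves only $1+N\le S_i\le|\mathcal T|-1$. Your assumption is in fact needed for the biconditional: a tie under $z_k$ can be broken by the other contexts, giving $\hat u(\tau^1)>\hat u(\tau^2)$ even though $u(\tau^1,z_k)=u(\tau^2,z_k)$.
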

The proof is provided in Appendix~\ref{app:reward_comb_derivation}.
This result provides an upper bound: once the proportion of preference data from a task $z_k$ exceeds this threshold, the learned model’s preference ordering aligns completely with $u(\tau,z_k)$.
Consequently, maximizing the learned utility yields trajectories that are optimal for task $z_k$, i.e., $\arg\max_\tau \hat u(\tau)=\arg\max_\tau u(\tau,z_k)$.
However, since the optimal trajectories under different context $z$ can be potentially disjoint, the learned reward will bias policy optimization toward the optimal trajectories under $z_k$, deviating from those optimal under $r_{\text{new}}$.
This mismatch in task rewards directly harms the performance on the target task.

We empirically validate these conclusions by training a reward model on an imbalanced crowd preference dataset, where the majority of users favor a specific target and contribute ten times more data than the others.
In Run, the majority prefers a lower target velocity (0) over a higher velocity (5), while all trajectories must avoid constrained regions; in HalfCheetah, the majority prefers positive over negative velocity, subject to velocity limits (see Appendix \ref{app:exp_details} for details).
Figure~\ref{fig:fusion_front_num2} shows that the learned reward assigns higher values to safe trajectories than unsafe ones, consistent with Theorem~\ref{thm:unimodal_reward_is_safe}, but also over-rewards trajectories aligned with the majority preference, confirming Theorem~\ref{thm:bad_task_performance}.
We further investigate the downstream performance of reward combination (i.e., Eq.~\ref{eq:fusion_weighted}) under different trade-off weights $\omega$ in Section~\ref{sec:balanced_vs_imbalanced}, and show that it suffers from strong sensitivity to the trade-off weight as well as performance degradation induced by preference imbalance, as predicted by the Theorem~\ref{thm:bad_task_performance}.



\begin{figure}[t!]
\centering
\includegraphics[width=\columnwidth]{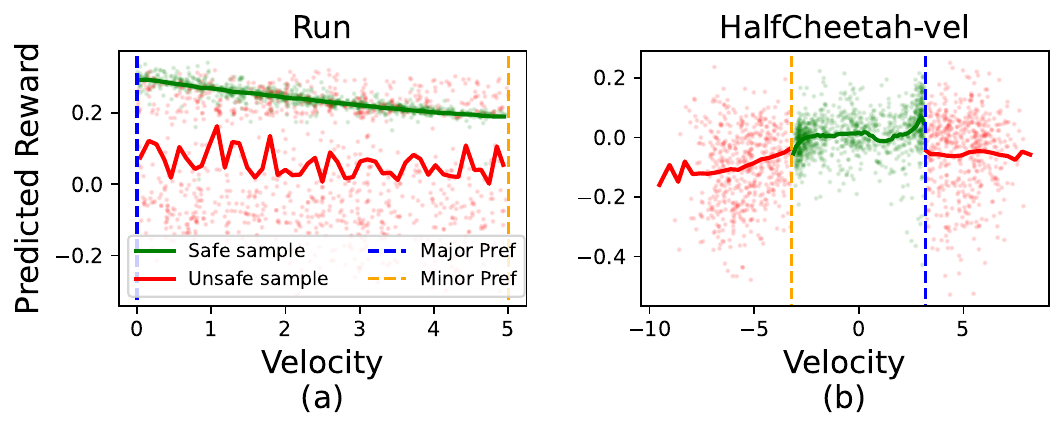}
\vskip -0.1in
\caption{
Predicted rewards on safe and unsafe samples under the \textit{imbalanced} crowd-preference setting. 
Safe trajectories (green) receive higher rewards than unsafe ones (red), while trajectories aligned with the majority preference (blue) are over-rewarded compared to minority-preferred ones (yellow). }
\label{fig:fusion_front_num2}
\vskip -0.3in
\end{figure}

\vspace{-0.1in}
\section{Safe Crowd Preference-based Reinforcement Learning}
\subsection{Policy Composition Rather Than Reward Combination}

Since it is difficult to disentangle the user-specific component and the shared criteria, it may not be effective to directly incorporate the learned reward into downstream optimization as in Eq.~\ref{eq:fusion_weighted}.
Instead, our idea is to optimize only the downstream reward $r_{\text{new}}$ while restricting the policy search to behaviors consistent with the crowd preferences.
Specifically, we constrain the agent to compose behaviors drawn from the preference-aligned skill space learned from the crowd preferences.
This approach is motivated by two insights:
\textbf{(1)} constraining policy optimization to preference-aligned behaviors—rather than the raw action space—naturally enforces the shared criteria (e.g., avoiding unsafe state–action pairs in this work); and
\textbf{(2)} behaviors from users with diverse task objectives provide useful primitives that facilitate downstream task learning.
Formally, inspired by hierarchical learning, we treat preference-aligned policies $\pi_{l}(a|s,z)$ as low-level skills and learn a high-level policy $\pi_h(z|s)$ that composes these skills to solve the downstream task:
\begin{equation}
\max_{\tau\sim\pi_h\cdot \pi_l} \mathbb{E}\big[ \sum_t r_{\text{new}}(s_t,a_t)\big].
\end{equation}



\vspace{-0.2in}
\subsection{Skill Discovery from Crowd Preference}
There exist several RLHF techniques that can learn preference-aligned policies, as discussed in Section~\ref{sec:preliminaries}.
However, without access to the true label $z$ for each preference pair, we cannot directly model the behaviors of different users in crowd preference data.
Inspired by Variational Preference Learning (VPL)~\cite{poddar2024personalizing}, we address this challenge by inferring a latent variable $z'$ from crowd preference data using a Variational Auto-Encoder (VAE), where $z'$ serves as a proxy for the underlying (unobserved) user context $z$ that governs preference behavior. 
Here, $z$ denotes the true but unobserved context, while $z'$ is a learned latent variable that approximates this context and is used to condition both the reward model and the policy.
Specifically, we first learn a latent-conditioned reward:
\begin{equation}
\begin{aligned}\label{eq:vpl_loss}
    \mathbb E_{S_z\sim \mathcal{D}_{\text{pref}}}&\big[\mathbb E_{z'\sim q_\psi(z'|S_z)}[\sum_{(\tau^1,\tau^2,y_z)\in S_z}\log P(y|\tau^1,\tau^2,z') ]\\
    &-D_{KL}( q_\psi(z'|S_z)||p(z'))\big]
\end{aligned}
\end{equation}
where 
$q_\psi$ is an encoder to predict the latent variable $z'$ based on a set of preferences from some user with hidden context $z$ and $p(z')$ is a prior distribution.
$P(y|\tau^1,\tau^2,z')$ follows the definition of Eq.~\ref{eq:return_bt_loss} except that $r_\phi(s,a)$ is replaced with $r_\phi(s,a,z')$ as a decoder to predict the preference relationship under $z'$:
\begin{equation}
\begin{aligned}\label{eq:vpl_reward_loss}
    P(y=1|\tau^1, \tau^2, z')=\frac{\exp \hat u(\tau^1,z')}{\exp \hat u(\tau^1,z')+\exp \hat u(\tau^2,z')}
    ,
\end{aligned}
\end{equation}
where $\hat u(\tau^i,z')=\sum_{t=1}^{k} r_\phi(s_t^i, a_t^i,z')$.
Then a latent-conditioned $\pi_\theta(a|s,z')$ as our low-level policy can be learned via offline RL methods (e.g., IQL~\cite{kostrikov2021offline} in the implementation): for each $z'\sim p(z')$, 
\begin{equation}\label{eq:iql_policy_loss}
    \max_{\pi_\theta(a|s,z')}\mathbb E_{\tau\sim \mathcal D_{\tau}}[\sum_t r_\phi(s_t,a_t,z')], 
\end{equation}
where $\mathcal D_{\tau}$ is an offline trajectory dataset.

However, this approach requires access to an additional offline trajectory dataset $\mathcal D_{\tau}$ and is susceptible to RL optimization instability~\cite{hejna2024contrastive}.
Therefore, we also propose an alternative approach that incorporates Contrastive Preference Learning (CPL) into the VPL framework under the regret-based preference setting, i.e., by defining $P(y|\tau^1,\tau^2,z')$ in Eq.~\ref{eq:vpl_loss} as
\begin{align}\label{eq:vpl_cpl_loss}
P(y=1|\tau^1, \tau^2, z')
= \frac{\exp f(\tau^1| z')}{\exp f(\tau^1| z')+\exp \lambda f(\tau^2| z')},
\end{align}
where $f(\tau^i|z')=\sum_{t=1}^{k} \gamma^t \alpha \log\pi_\theta(a_t^i|s_t^i, z')$.
We empirically validate VPL and its novel CPL variant in Section~\ref{sec:experiment} using crowd preferences under the partial-return and regret-based models, respectively.

\subsection{Downstream Learning}
For a new downstream task, instead of predicting actions directly from states, the high-level policy outputs a latent variable that indexes a preference-aligned low-level behavior to generate the actions, i.e.,
$a \sim \pi_l(a \mid s, z'=\pi_h(s))$.\footnote{Here, the high-level policy can switch skills every step. In Appendix~\ref{app:switch_freq}, we further discuss the effect of switching frequency.}
This skill-based action selection keeps the agent within the distribution of preference-aligned behaviors, thereby inheriting the safety preferences encoded in crowd data and avoiding unsafe actions and states.
Moreover, given that the VAE objective in Eq.~\ref{eq:vpl_loss} encourages preference-aligned behaviors to be encoded near the prior $p(z')$, we regularize the high-level policy via $L_{\text{reg}} = \log p(z'=\pi_h(s))$,
which keeps the latent skill $z'$ selected by $\pi_h$ close to the latent structure learned from crowd preferences and prevents out-of-distribution skills that could lead to unsafe behavior.
Following this design, we adopt the TD3~\cite{fujimoto2018addressing} framework for high-level policy training using the loss:
\begin{equation}\label{eq:online_downstream}
L_{\pi_h}=-\mathbb{E}_{a\sim\pi_h\cdot\pi_l }\big[Q(s,a)+\beta_{\mathrm{reg}} L_{\mathrm{reg}}\big],
\end{equation}
where $\beta_{\mathrm{reg}}$ weights the regularization term, $Q$ is the estimated value for $r_{\text{new}}$, and the low-level policy $\pi_l$ is frozen so that gradients propagate through $Q$ and $\pi_l$ into $\pi_h$.

Eq.~\eqref{eq:online_downstream} assumes that new interaction samples are available during downstream training (referred to as the \textit{online downstream setting}), which may incur additional costs. 
Given that $\mathcal{D}_{\text{pref}}$ is typically constructed from an existing offline dataset, we also consider a more practical setting: performing downstream training using only the offline dataset together with $r_{\text{new}}(s,a)$ (referred to as \textit{the offline downstream setting}), and we adopt this setup in our experiments.\footnote{Results for the online downstream setting are also discussed in Section~\ref{sec:main_comparison} and reported in Appendix~\ref{sec:online_downstream_results}.}
In this setting, an offline RL dataset is used without additional online interaction:
\begin{equation}
L_{\pi_h}^{\text{offline}}\label{eq:offline_downstream}
= -\mathbb{E}_{\substack{(s_D,a_D)\sim \mathcal D_\tau \\ a \sim \pi_h\cdot\pi_l(s_D)}}
\Big[
Q(s_D, a)
+ \beta_{\mathrm{reg}}\, L_{\text{reg}}
+ \beta_{\mathrm{BC}}\, L_{\text{BC}}
\Big],
\end{equation}
where $\mathcal D_\tau$ is the offline dataset, and
$L_{\text{BC}} = \lVert a - a_D\rVert_2^2$, weighted by $\beta_{\mathrm{BC}}$, is a behavior cloning term that discourages the policy from producing out-of-distribution actions beyond the support of $\mathcal D_\tau$~\cite{fujimoto2021minimalist}.

\subsection{Theoretical Insights into Skill Composition}

Here, we briefly outline theoretical insights into the safety and task performance of downstream policies obtained via skill composition. Formal statements and proofs are deferred to Appendix~\ref{app:policy_composition_derivation}.
We first simplify the analysis by viewing the encoder $q(z' \mid S_z)$ as a classifier. Each trajectory is assigned to a latent variable $z'$ by the encoder and thus grouped into a subset $\mathcal{D}_{\text{pref}}^{z'}$, which is subsequently used to train a conditional utility function $\hat u(\cdot, z')$.
Under this abstraction, we obtain two key conclusions.
First, similar to Theorem~\ref{thm:unimodal_reward_is_safe}, the latent-conditioned utility $\hat u(\cdot, z')$ learned on infinite crowd preferences prefers safe trajectories to unsafe trajectories across all $z$.
As a result, if each low-level policy is optimal with respect to its corresponding conditioned utility, any downstream policy obtained via skill composition is guaranteed to preserve safety; see Corollary~\ref{thm:composition_safe}.
Moreover, we provide an upper bound on downstream safety violations when accounting for the suboptimality of low-level policies (Theorem~\ref{thm:composition_suboptimal}), which shows that the number of violations scales with suboptimality.

Second, we seek to explore the positive correlation between skill discovery performance and downstream task performance.
Specifically, Theorems~\ref{thm:preserve_rank} and~\ref{thm:divergence_gap} establish a connection between 
(a) how well the VAE-generated labels match the true contexts and 
(b) how effectively the learned latent-conditioned utilities align with diverse user preferences.
Part (a) characterizes the quality of skill learning, while part (b) determines the diversity of skills available for downstream task execution and thus influences final performance.
In conclusion, more accurate modeling of latent variables leads to more versatile preference-aligned skills, thereby improving downstream task performance.

\vspace{-0.1in}
\section{Safe RL Experiments}
\label{sec:experiment}

\begin{table*}[!h]
    \vskip -0.04in
    \centering 
    \caption{Normalized performance under offline downstream settings. 
    Overall, our methods significantly reduce safety cost compared to Task-Only baselines, while achieving task performance comparable to the Oracle.}\label{tb:norm_data_offline} 
    \vskip -0.04in
    \small
     
     \begin{tabular}{llcccccc}
    \toprule

        & & \multicolumn{4}{c}{Baselines} & \multicolumn{2}{c}{Ours} \\
        \cmidrule(lr){3-6} \cmidrule(lr){7-8}    
    Environment & Stats & Oracle & Task-Only & SOPL & \makecell{$\mathrm{RC} (\omega=0.5)$} & Safe-VPL & Safe-CPL \\ \midrule
\multirow{2}{*}{Reach} & Norm Rew & 1.00 & 1.04 {\scriptsize $\pm$ .00} & 0.98 {\scriptsize $\pm$ .03} & 0.83 {\scriptsize $\pm$ .01} & 0.98 {\scriptsize $\pm$ .04} & 0.98 {\scriptsize $\pm$ .04} \\
& Norm Cost & 0.038 & 1.000 {\scriptsize $\pm$ .01} & 0.024 {\scriptsize $\pm$ .02} & 0.101 {\scriptsize $\pm$ .02} & 0.166 {\scriptsize $\pm$ .04} & 0.069 {\scriptsize $\pm$ .02} \\
\midrule
\multirow{2}{*}{Run} & Norm Rew & 1.00 & 1.00 {\scriptsize $\pm$ .00} & 0.99 {\scriptsize $\pm$ .00} & 1.00 {\scriptsize $\pm$ .00} & 0.95 {\scriptsize $\pm$ .08} & 0.97 {\scriptsize $\pm$ .01} \\
& Norm Cost & 0.000 & 1.000 {\scriptsize $\pm$ .08} & 0.000 {\scriptsize $\pm$ .00} & 0.000 {\scriptsize $\pm$ .00} & 0.000 {\scriptsize $\pm$ .00} & 0.000 {\scriptsize $\pm$ .00} \\
\midrule
\multirow{2}{*}{Circle} & Norm Rew & 1.00 & 1.27 {\scriptsize $\pm$ .02} & 1.04 {\scriptsize $\pm$ .01} & 1.09 {\scriptsize $\pm$ .00} & 0.90 {\scriptsize $\pm$ .01} & 0.91 {\scriptsize $\pm$ .04} \\
& Norm Cost & 0.000 & 1.000 {\scriptsize $\pm$ .02} & 0.000 {\scriptsize $\pm$ .00} & 0.000 {\scriptsize $\pm$ .00} & 0.000 {\scriptsize $\pm$ .00} & 0.000 {\scriptsize $\pm$ .00} \\
\midrule
\multirow{2}{*}{Ant-vel} & Norm Rew & 1.00 & 1.23 {\scriptsize $\pm$ .02} & 0.98 {\scriptsize $\pm$ .01} & 0.77 {\scriptsize $\pm$ .20} & 0.90 {\scriptsize $\pm$ .01} & 0.76 {\scriptsize $\pm$ .10} \\
& Norm Cost & 0.002 & 1.000 {\scriptsize $\pm$ .01} & 0.007 {\scriptsize $\pm$ .00} & 0.078 {\scriptsize $\pm$ .12} & 0.001 {\scriptsize $\pm$ .00} & 0.007 {\scriptsize $\pm$ .00} \\
\midrule
\multirow{2}{*}{Swimmer-vel} & Norm Rew & 1.00 & 2.39 {\scriptsize $\pm$ .04} & 1.34 {\scriptsize $\pm$ .03} & 0.83 {\scriptsize $\pm$ .07} & 0.88 {\scriptsize $\pm$ .01} & 0.99 {\scriptsize $\pm$ .04} \\
& Norm Cost & 0.000 & 1.000 {\scriptsize $\pm$ .09} & 0.014 {\scriptsize $\pm$ .01} & 0.000 {\scriptsize $\pm$ .00} & 0.001 {\scriptsize $\pm$ .00} & 0.005 {\scriptsize $\pm$ .00} \\
\midrule
\multirow{2}{*}{\small HalfCheetah-vel} & Norm Rew & 1.00 & 1.85 {\scriptsize $\pm$ .36} & 0.93 {\scriptsize $\pm$ .00} & 0.44 {\scriptsize $\pm$ .11} & 0.96 {\scriptsize $\pm$ .02} & 0.92 {\scriptsize $\pm$ .02} \\
& Norm Cost & 0.000 & 1.000 {\scriptsize $\pm$ .19} & 0.014 {\scriptsize $\pm$ .00} & 0.107 {\scriptsize $\pm$ .08} & 0.004 {\scriptsize $\pm$ .00} & 0.018 {\scriptsize $\pm$ .02} \\
\midrule
\specialrule{1.2pt}{0pt}{0pt}
\multirow{2}{*}{Average} & Norm Rew & 1.00 & 1.46 & 1.04 & 0.82 & 0.93 & 0.92 \\
& Norm Cost & 0.01 & 1.00 & 0.01 & 0.05 & 0.03 & 0.02 \\
\bottomrule

    \end{tabular}

    \label{fig:offline_performance}
    \vskip -0.2in
    \end{table*}


\textbf{Environments.} \quad
For evaluation, we extend six environments from the Bullet-Safety~\cite{Gronauer2022BulletSafetyGym} and Safety-Gymnasium~\cite{ji2023safety} suites to crowd preference settings by constructing a set of goal-specific reward functions, together with a common safety reward, for preference annotation and downstream tasks.
In these environments, goal-specific rewards differ by task objectives (e.g., navigation targets or preferred velocities), while common safety constraints (e.g., avoiding unsafe areas or obeying speed limits) are imposed via a shared safety reward.
During preference annotation, a subset of the goal-specific rewards is selected to represent crowd variation (i.e., the user-specific component $r_{\text{user}}$) and combined with the safety reward (i.e., the shared component $r_{\text{share}}$) to form the annotation rewards.
All goal-specific rewards—including those not used for annotation—are then treated as safety-agnostic task rewards, thereby defining a collection of downstream tasks.

\textbf{Dataset.} \quad
To generate preference datasets, we first train an unsafe SAC agent~\cite{haarnoja2018soft} using only goal-specific reward and a safe SAC agent using the combined goal-specific and safety rewards for each designed goal.  
The replay buffers of these agents are combined to form the offline dataset $\mathcal D_\tau$.
We then sample trajectory pairs to construct a crowd preference dataset, which is annotated either using Eq.~\ref{eq:return_bt_loss} with the ground-truth reward or using Eq.~\ref{eq:adv_bt_loss} with the optimal advantage.  
The latter is used for CPL-based skill discovery, while the former is used for VPL-based skill discovery and other RLHF baselines.
In addition, the offline dataset $\mathcal D_\tau$ is used for low-level policy training in Eq.~\ref{eq:iql_policy_loss} and offline downstream learning in Eq.~\ref{eq:offline_downstream}.  
See more environment and dataset details in Appendix~\ref{app:exp_details}.

\textbf{Baselines.} \quad
We evaluate three groups of baselines.
\textbf{(1) Oracle.} 
These methods have access to the true safety reward $r_{\text{share}}$.
For offline downstream learning, we apply CDT~\cite{liu2023constrained}, an offline safe RL algorithm, to optimize $\mathbb E_{\mathcal D_\tau}[\sum r_{\text{new}}]$ while enforcing $\mathbb E_{\mathcal D_\tau}[\sum r_{\text{share}}] = 0$, which corresponds to zero constraint violations since $r_{\text{share}} \le 0$.
For online downstream learning, we use SAC to maximize $\mathbb E_\pi[\sum r_{\text{new}} + r_{\text{share}}]$.
\textbf{(2) Task-Only.} 
These baselines optimize only the task reward and ignore the safety reward using TD3+BC~\cite{fujimoto2021minimalist} and TD3~\cite{fujimoto2018addressing} for offline and online downstream settings, respectively.
\textbf{(3) Safety-Only Preference Learning (SOPL).} 
To isolate the effect of entanglement between user-specific and shared components, SOPL follows Safe RLHF~\cite{dai2024safe} by using preference data generated solely from the safety reward $r_{\text{share}}$, learning a unimodal reward $\hat r_{\text{SOPL}}$, and optimizing $r_{\text{new}} + \hat r_{\text{SOPL}}$ in the downstream stage. 
In contrast, all other RLHF methods operate on coupled preference data without access to such separated signals.
\textbf{(4) }\textbf{Reward Combination (RC). }
\textbf{RC ($\omega$)} adopts TD3+BC (or TD3 for online downstream settings) to optimize a weighted combination of the downstream reward $r_{\text{new}}$ and the unimodal reward $\hat r$ learned from crowd preference datasets, with the trade-off weight $\omega \in \{0.1, \ldots, 0.9, 1.0\}$ as defined in Eq.~\ref{eq:fusion_weighted}.
\textbf{Safe-VPL} and \textbf{Safe-CPL} denote our proposed approaches, which use the original VPL or CPL-variant for skill discovery, respectively.

\textbf{Evaluation Protocol} \quad
We evaluate all methods in terms of both downstream task performance and shared safety performance.
In the main paper, task performance is reported in normalized form: $R_{\text{norm}}(\text{Algo})=\frac{R(\text{Algo})-R(\text{Random})}{R(\text{Oracle})-R(\text{Random})},$ where $R(\cdot)$ denotes the cumulative task reward $\sum_t r_{\text{new}}$ averaged over all downstream tasks.
Safety performance is normalized as $C_{\text{norm}}(\text{Algo})=\frac{C(\text{Algo})}{C(\text{Task-only baseline})},$ where $C(\cdot)$ is the cumulative safety violations.
See Appendix~\ref{app:raw_results} for the main experimental results with all raw data.
All experiments are run with three random seeds, and we report means and standard deviations.
Code and implementation details are provided in Appendix~\ref{app:impl_details}.

\subsection{Recovering Shared Criteria in Crowd Preferences}\label{sec:main_comparison}

\begin{figure*}[t]
\vskip -0.1in
    \centering
    \includegraphics[width=0.8\linewidth]{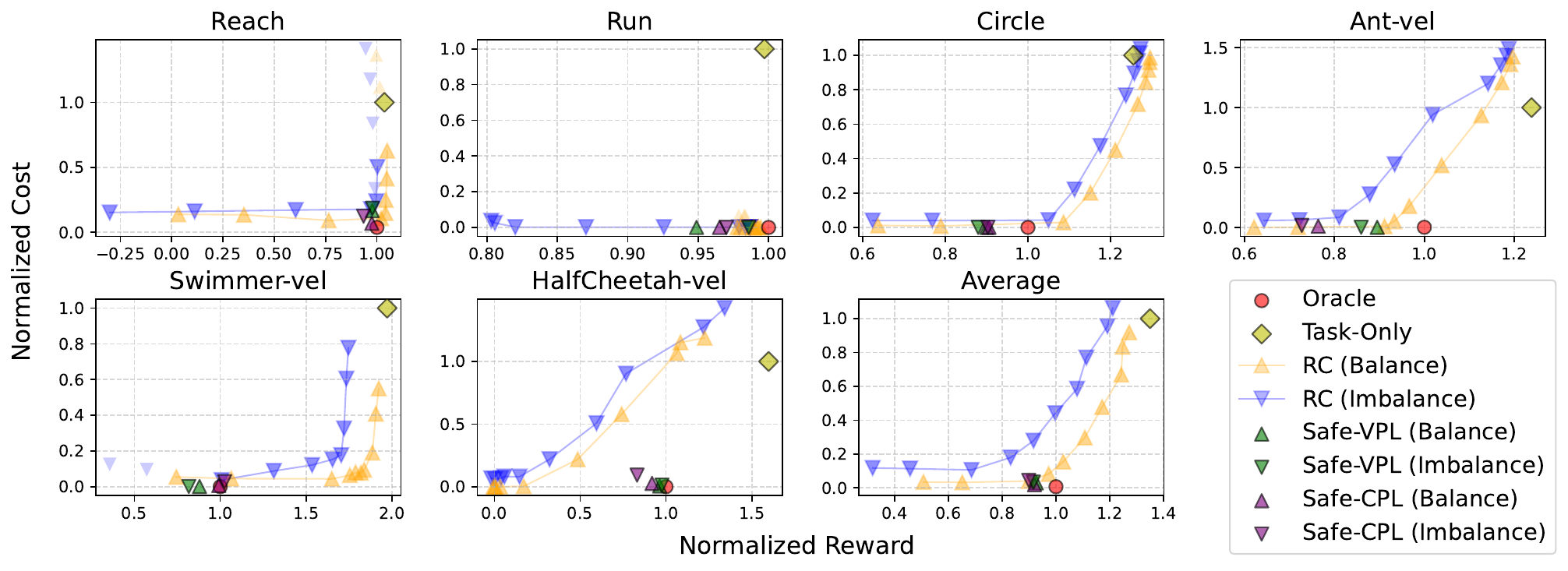}
    \vskip -0.08in
    \caption{
    Performance points of different algorithms and RC ($\omega$) with various weight $\omega$ under balanced and imbalanced preference settings. 
    The corresponding numerical results are reported in Table~\ref{tab:bal_imb_ours} in the appendix.
    RC is highly sensitive to the reward trade-off weight and degrades under preference imbalance, whereas our method remains consistently close to the Oracle across both settings.
    }
    
    \label{fig:fusion_front_num6}
\vskip -0.2in
\end{figure*}

Table~\ref{tb:norm_data_offline} summarizes the performance of all methods under the offline downstream setting.
The Task-Only baseline achieves strong task performance but incurs severe safety violations, demonstrating that optimizing an imperfect task reward alone—without capturing underlying safety objectives—can lead to hazardous behavior.
Moreover, the SOPL baseline achieves performance comparable to the Oracle, as both have access to separated safety signals, while RC($\omega=0.5$), despite a similar training pipeline with SOPL, operates on coupled preference data and performs significantly worse, highlighting the challenge of entanglement between user-specific and shared components.
In contrast, our method substantially reduces cost compared to the Task-Only baseline (e.g., 1.0 vs. 0.01--0.02 in average normalized cost), while achieving task performance comparable to both the Oracle and SOPL. 
This indicates that, despite not observing the true safety reward or safety-only preference data, our method can effectively discover and transfer underlying safety objectives from crowd preferences via preference-aligned skill learning and composition, without significantly sacrificing task performance.

We report results for the online downstream setting in Appendix~\ref{sec:online_downstream_results}, where we observe conclusions consistent with the offline case, as well as improved sample efficiency compared to the oracle baseline trained from scratch.
Moreover, performance in the online setting is generally better than in the offline case, suggesting that additional interaction further improves performance, while our offline formulation remains effective under limited interaction.

\vspace{-0.05in}
\subsection{Comparison to Reward Combination under Balanced and Imbalanced Preference Settings}\label{sec:balanced_vs_imbalanced}

In this section, we consider both \textbf{balanced settings}, where preferences from different user-specific rewards are equally represented, and \textbf{imbalanced settings}, where one dominant user-specific preference accounts for ten times more data than the others.
Figure~\ref{fig:fusion_front_num6} visualizes RC performance under different $\omega \in \{0.1,\ldots,1.0\}$ as reward–cost frontiers (numerical values are reported in Table~\ref{tab:bal_imb_ours}).
Across environments, RC is highly sensitive to the choice of $\omega$, often yielding either strong task performance at the expense of high safety cost, or vice versa.
Moreover, under imbalanced preference data, the entire RC performance frontier shifts noticeably away from the Oracle, consistent with the conclusion in Section~\ref{sec:vanilla_rlhf} that preference imbalance induces bias in the learned reward, which in turn misguides downstream optimization and degrades task performance.
In contrast, our method remains close to the Oracle under both balanced and imbalanced settings, with average normalized reward and cost decreasing by less than 0.02 across environments, whereas RC (averaged over $\omega$) degrades by at least 0.1.

While the above results summarize average performance across downstream tasks, Figure~\ref{fig:reward_vs_task_imbalance} highlights per-task downstream performance under imbalanced settings.
When the imperfect task reward dominates ($\omega = 0.1$), RC achieves high task reward but incurs substantial safety cost on many downstream tasks; when the preference-learned reward dominates ($\omega = 0.9$), RC remains safe across all tasks but only achieves high reward on those aligned with the major preference, while degrading on minor tasks.
In contrast, our method is less affected by preference imbalance, achieving strong task performance with low safety cost across tasks.

\begin{figure}[t]
\vskip -0.05in
    \centering
    \includegraphics[width=\columnwidth]{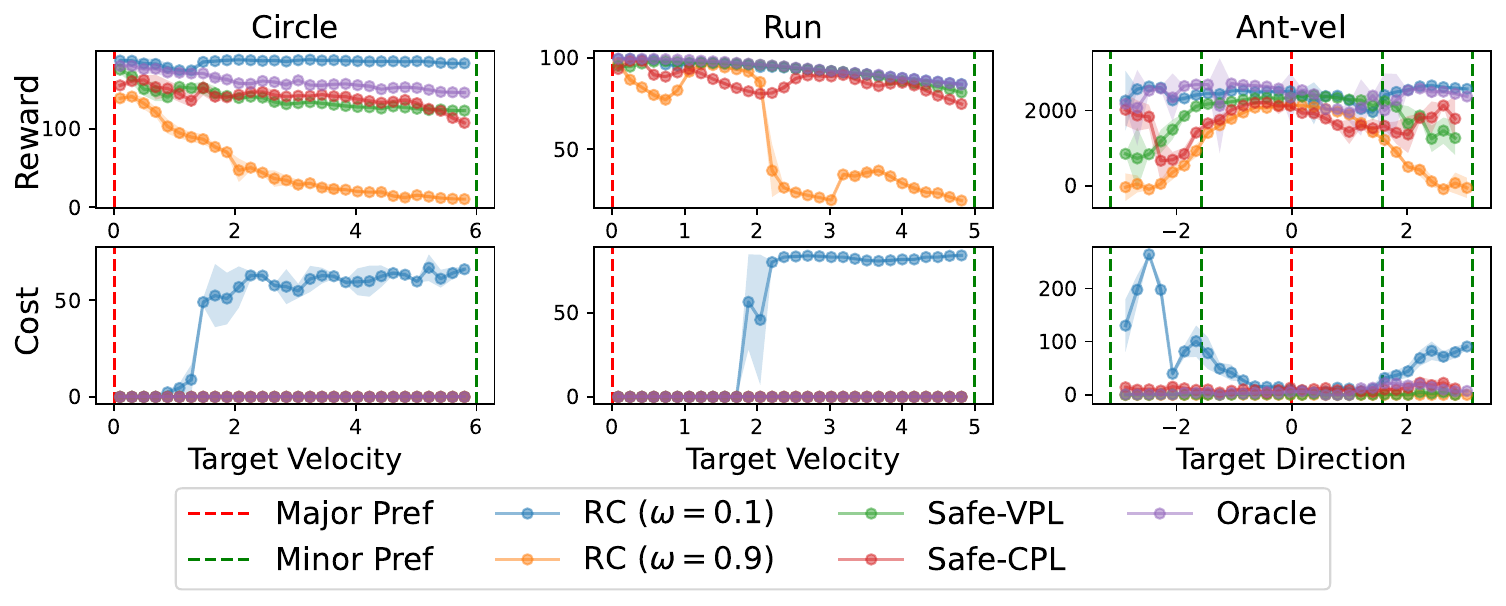}
    \vskip -0.1in
    \caption{Performance on different downstream tasks under imbalanced settings; the x-axis corresponds to task IDs. RC either incurs high safety cost or performs well only on a subset of tasks, whereas our method remains robust across tasks.}
    \label{fig:reward_vs_task_imbalance}
    \vskip -0.18in
\end{figure}

\vspace{-0.1in}
\subsection{Ablation Studies}
\begin{figure}[t]
    \centering
    \includegraphics[width=\columnwidth]{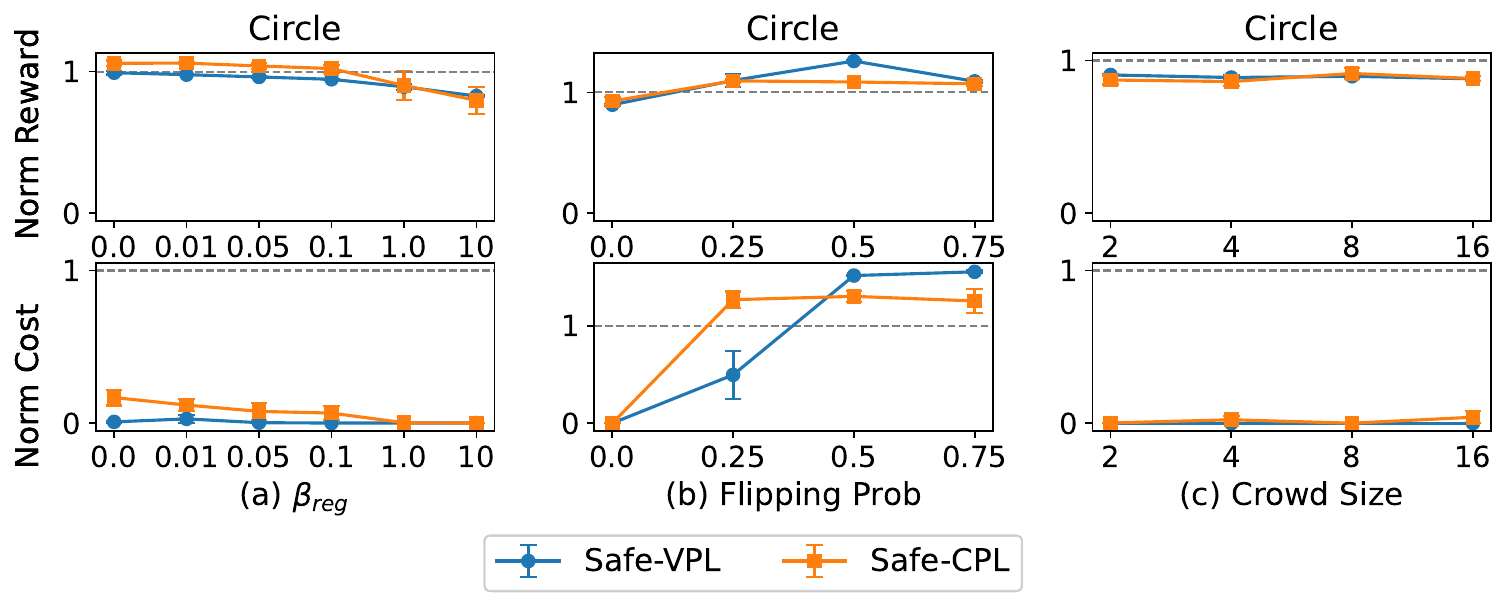}
    \vskip -0.05in
    \caption{Performance under different regularization weights $\beta_{\mathrm{reg}}$, 
    noise ratios, and crowd sizes. 
    Dashed lines denote $R_{\text{norm}}(\text{Oracle})$ in the first row and $C_{\text{norm}}(\text{Task-Only})$ in the second. 
    Our method is robust to $\beta_{\mathrm{reg}}$ and crowd size; while preference noise degrades safety performance, task reward remains largely unaffected.}
    \label{fig:ablation}
\end{figure}
Figure~\ref{fig:ablation} summarizes ablation studies that analyze the sensitivity of our method to key design choices.
Figure~\ref{fig:ablation}(a) shows the effect of the prior regularization weight $\beta_{\mathrm{reg}}$ in Eq.~\ref{eq:online_downstream}, which controls the optimization space of the high-level policy.
Larger $\beta_{\mathrm{reg}}$ enforces conservative skill selection aligned with the crowd preference distribution, yielding safer but less performant behavior, while smaller $\beta_{\mathrm{reg}}$ increases skill flexibility and improves task performance, with only mild safety degradation that remains much better than directly optimizing the task reward.
Overall, as a hyperparameter in our method, $\beta_{\mathrm{reg}}$ is more robust than the weight $\omega$ in the RC baseline; across a wide range of values, it has little impact on task reward while consistently improving safety, making it easy to tune in practice.

An implicit assumption in our setting is that crowd preferences accurately and noiselessly reflect a shared safety consensus across all users.
To analyze the impact of preference noise on our methods, we introduce preference noise by randomly flipping labels with a given probability and report results in Figure~\ref{fig:ablation}(b).
As noise increases, the safety signal encoded in preferences is progressively corrupted, leading to degraded cost performance; however, task performance remains largely unaffected, suggesting that preference noise does not substantially reduce skill diversity leveraged for downstream composition.
Figure~\ref{fig:ablation}(c) further analyzes the effect of the number of crowd users, where we vary crowd size by sampling target velocities or directions at regular intervals within a fixed range to construct user-specific preferences. 
As the number of users increases, performance exhibits only a modest degradation while maintaining substantially better safety than the task-only baseline, demonstrating the robustness of our method. 
See Appendix~\ref{sec:ablation_on_more_env} for ablation results across more environments and settings.

\begin{table}[t]
\centering
\small
\setlength{\tabcolsep}{3pt}
\begin{tabular}{llccc}
\toprule
 &Task-only & \makecell{RC({\scriptsize  $\omega=0.25$)}} & \makecell{RC({\scriptsize  $\omega=0.75$)}} & Ours \\
\midrule
Reward   & $0.95${\scriptsize $\pm$ .01} & $0.94$ {\scriptsize $\pm$ .01}& $0.50${\scriptsize $\pm$ .01} & $0.75$ {\scriptsize $\pm$ .01} \\
Cost & $0.24${\scriptsize $\pm$ .00} & $0.22${\scriptsize $\pm$ .03} & $0.00$ {\scriptsize $\pm$ .00} & $0.00$ {\scriptsize $\pm$ .00}\\
\bottomrule
\end{tabular}
\vskip 0.05in
\caption{LLM evaluation results averaged on two downstream tasks. Our method achieves strong task performance while avoiding harmful responses, whereas Task-only violates safety and RC suffers from the reward trade-off under imbalanced preferences.}
\label{tab:llm_prelim}
\vskip -0.3in
\end{table}


\vspace{-0.1in}
\section{LLM Evaluation}
While our primary evaluation focuses on continuous control, we also conduct a proof-of-concept evaluation in a simplified language-based setting to provide evidence of the broader applicability of our approach.

\vspace{-0.1in}
\textbf{Setup}\quad
Inspired by the PET dataset setup in VPL~\cite{poddar2024personalizing}, we construct a simplified RLHF-style conversational bandit setting with one query and three response categories ($A$, $B$, $C$), each containing both safe and harmful responses. 
Given two candidate responses as the state, the agent selects between them according to two objectives: 
(1) a safety objective of not selecting harmful responses, and 
(2) a task-specific objective of selecting the higher-ranked response when both candidates are safe. 
We construct an imbalanced crowd-preference dataset where User~1 ($80\%$) and User~2 ($20\%$) prefer safe responses over harmful ones but follow different rankings over $(A,B,C)$ on safe responses. 

Each downstream task is associated with a distinct ranking over response categories and provides a safety-agnostic task reward $r_{\text{new}}$ that rewards only task-specific ranking consistency, without considering the implicit safety objective.
We evaluate two downstream tasks with rankings that do not match any crowd user preference ordering, meaning that simply following the majority preference or imitating a single user cannot maximize task reward. 
Instead, successful performance requires composing crowd preferences and generalizing beyond the preference structures  observed in crowd data. 
See Appendix~\ref{app:llm_exp} for full experimental details.


\textbf{Results}\quad
Table~\ref{tab:llm_prelim} shows that Task-only, which optimizes only the task reward, frequently selects harmful responses to maximize reward, resulting in severe violations of the implicit safety objective. 
Moreover, RC either violates safety when the task reward dominates ($\omega=0.25$) or sacrifices task performance due to bias induced by imbalanced preferences ($\omega=0.75$). 
In contrast, our method substantially reduces unsafe selections compared to Task-only while achieving significantly stronger task performance than the safe but overly conservative RC baseline.

\vspace{-0.1in}
\section{Conclusion}
\vspace{-0.03in}

In this work, we focus on the hidden safety objectives embedded in crowd preferences, with the goal of discovering and transferring them to downstream tasks to encourage safe behavior.
We theoretically and empirically discuss the feasibility and limitations of direct reward combination, where a reward model learned from crowd preferences is combined with the task reward for policy optimization. 
Then we propose a hierarchical approach that composes skills—rather than rewards—extracted from crowd preferences.
Specifically, we learn latent skill representations using both a VAE reward learning and a novel CPL-based, reward-free variant, after which the high-level policy outputs latent variables for downstream tasks to optimize the task reward.
We extend several safe RL environments to crowd preference settings and conduct a proof-of-concept language-domain evaluation, showing that our method achieves competitive task performance while substantially reducing safety costs, even without access to explicit safety rewards.

\section*{Acknowledgements}
This work was conducted in the Aligned, Robust, and Interactive Autonomy (ARIA) Lab at the University of Utah. ARIA Lab research is supported in part by the NSF (IIS-2310759, IIS2416761), the NIH (R21EB035378), ARPA-H, the ARL STRONG program, and Coefficient Giving.

\section*{Impact Statement}

This paper presents work whose goal is to advance the field of machine learning. 
One potential benefit is that our approach enables the reuse of crowd preference data across tasks, even when such preferences are not collected for a specific downstream task, potentially reducing repeated preference collection and the associated cost of human feedback. 
More broadly, by leveraging implicit safety signals from crowd preferences, our method may help mitigate challenges in reward design, where it is common for human-specified rewards to be incomplete or misspecified.
By learning safe behaviors without explicit safety criteria, our method may facilitate RL application in domains where safety specifications are difficult to define.

At the same time, our approach assumes that crowd preference data reflect a shared consensus on safety, which may not always hold in practice due to noisy or intentionally misleading feedback. Consequently, care is required when applying this method in real-world settings, and improving robustness to misaligned or adversarial preferences remains an important direction for future work. Overall, we believe that, when applied with appropriate care, the potential benefits of our approach outweigh these limitations.

\bibliography{example_paper}
 \bibliographystyle{icml2026}

\newpage
\appendix
\onecolumn
\section{Derivation}
\label{app:derivation}

\subsection{Reward Combination}
\label{app:reward_comb_derivation}

\begin{lemma}[Distributional Preference Learning~\cite{siththaranjan2024distributional}, Theorem 3.1]\label{thm:bc}
When preference learning follows the Bradley–Terry formulation, the utility $\hat u$ learned on crowd preference implicitly aggregates hidden contexts according to the Borda count in the limit of infinite data. 
Formally, let $\mathcal T$ denote a finite trajectory set.
For any two trajectories (alternatives) $a,b\in\mathcal T$,
\[
\hat u(a)>\hat u(b)
\;\Longleftrightarrow\;
BC(a)>BC(b),
\]
where the Borda count is defined as
\[
BC(a)=\frac{1}{|\mathcal T|}\sum_{b\in\mathcal T}p_{u,D_z}(a,b),\qquad
p_{u,D_z}(a,b)=\mathbb E_{z\sim p(z)}[O_u(a,b,z)],
\]
and
\[
O_u(a,b,z)=
\begin{cases}
\tfrac{1}{2}, & u(a,z)=u(b,z),\\
\mathbb{I}\{u(a,z)>u(b,z)\}, & \text{otherwise.}
\end{cases}
\]
\end{lemma}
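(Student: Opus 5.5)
The plan is to write down the population (infinite-data) Bradley--Terry negative log-likelihood and read the ordering of $\hat u$ directly off its first-order optimality conditions. I take the data-generating process that the lemma implicitly assumes. A pair $(a,b)$ is drawn uniformly from $\mathcal T\times\mathcal T$, a context is drawn as $z\sim p(z)$, and the label is produced by $O_u(a,b,z)$. The expected label for the pair is then $p_{u,D_z}(a,b)$, and $p_{u,D_z}(a,b)+p_{u,D_z}(b,a)=1$. Writing $\sigma$ for the logistic function, the limiting objective is
\[
\mathcal L(\hat u)= -\sum_{a,b\in\mathcal T}\Big[p_{u,D_z}(a,b)\log\sigma\big(\hat u(a)-\hat u(b)\big)+p_{u,D_z}(b,a)\log\sigma\big(\hat u(b)-\hat u(a)\big)\Big],
\]
up to a constant factor. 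This is a convex function of the vector $(\hat u(a))_{a\in\mathcal T}$, since $-\log\sigma$ is convex and is applied to linear maps of $\hat u$.

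\textbf{Stationarity.} I differentiate with respect to $\hat u(a)$ and set the result to zero. Using $\sigma'(x)=\sigma(x)\sigma(-x)$ and the complement relation, this gives, for every $a$,
\[
F(a):=\sum_{b\in\mathcal T}\sigma\big(\hat u(a)-\hat u(b)\big)=\sum_{b\in\mathcal T}p_{u,D_z}(a,b)=|\mathcal T|\,BC(a).
\]
So at any minimizer the Borda counts are exactly the values $F(a)/|\mathcal T|$.

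\textbf{Monotonicity.} It then suffices to show that $F(a)>F(a')$ if and only if $\hat u(a)>\hat u(a')$. I expand
\[
F(a)-F(a')=\sum_b\Big[\sigma\big(\hat u(a)-\hat u(b)\big)-\sigma\big(\hat u(a')-\hat u(b)\big)\Big].
\]
Because $\sigma$ is strictly increasing, every summand has the sign of $\hat u(a)-\hat u(a')$; this includes the diagonal terms $b=a$ and $b=a'$. Hence $F(a)-F(a')$ has the same sign as $\hat u(a)-\hat u(a')$, with equality exactly when the two values are equal. Combined with the stationarity identity, this yields $\hat u(a)>\hat u(b)\Leftrightarrow BC(a)>BC(b)$.

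\textbf{Main obstacle: existence of a finite minimizer.} Because $O_u$ is a deterministic indicator, some $p_{u,D_z}(a,b)$ may equal $0$ or $1$. For example, the safe-versus-unsafe pairs in Theorem~\ref{thm:unimodal_reward_is_safe} are unanimous. In that case $\mathcal L$ may have no finite minimizer, since its infimum is approached only as some utility differences diverge. I would handle this with a vanishing $\ell_2$ regularizer $\frac{\lambda}{2}\|\hat u\|^2$. For $\lambda>0$ the objective is strictly convex and coercive, so it has a unique minimizer $\hat u_\lambda$. Its stationarity condition becomes $F(a)+\lambda\hat u_\lambda(a)=|\mathcal T|BC(a)$. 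The left-hand side is still strictly monotone in $\hat u_\lambda(a)$ in the comparative sense above, because the extra term $\lambda(\hat u(a)-\hat u(a'))$ has the same sign. Therefore the Borda ordering holds exactly for every $\lambda>0$. I would then define the learned utility as the small-$\lambda$ limit of $\hat u_\lambda$ in the ordering sense.

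When all $p_{u,D_z}\in(0,1)$, as happens under the noisy $\beta$-rational model, this detour is unnecessary. The objective is then strictly convex on the subspace orthogonal to constant shifts and coercive there, so a minimizer exists and is unique up to an additive constant, which does not affect the ordering.
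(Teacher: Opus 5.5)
Your proof is correct. It differs from the paper in that the paper gives no proof at all: Lemma~\ref{thm:bc} is imported from Theorem~3.1 of Siththaranjan et al.\ and used as a black box.

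Your two steps check out. The stationarity computation holds: once you use $p_{u,D_z}(a,b)+p_{u,D_z}(b,a)=1$, the gradient with respect to $\hat u(a)$ is proportional to $\sum_{b}[\sigma(\hat u(a)-\hat u(b))-p_{u,D_z}(a,b)]$. The diagonal terms match because $\sigma(0)=\tfrac12=p_{u,D_z}(a,a)$. The termwise monotonicity of $\sigma$ then gives both directions of the equivalence, ties included.

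The citation buys brevity. Your version makes explicit three hypotheses the paper leaves unstated:
\begin{itemize}
\item comparison pairs are drawn uniformly from $\mathcal T\times\mathcal T$, which the identity $F(a)=|\mathcal T|\,BC(a)$ depends on;
\item $\hat u$ ranges over all of $\mathbb R^{\mathcal T}$, rather than the additive form $\sum_t r_\phi(s_t,a_t)$ the paper actually trains;
\item there is a rule for selecting ``the'' learned utility.
\end{itemize}

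The last point matters exactly where the paper applies the lemma. In Theorem~\ref{thm:unimodal_reward_is_safe} every safe trajectory unanimously beats every unsafe one, so $p_{u,D_z}(\tau^{\mathrm{unsafe}},\tau^{\mathrm{safe}})=0$ across the whole safe/unsafe cut. Shifting all safe utilities up by a constant then strictly lowers the loss, so the unregularized population loss has no finite minimizer. Moreover, an arbitrary minimizing sequence can break Borda ties: perturb two tied trajectories by $\pm 1/n$ while the unanimous gaps diverge. So without a selection rule the forward implication can fail. Your vanishing-$\ell_2$ path resolves this cleanly, since along it the ordering is exactly the Borda ordering for every $\lambda>0$.

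One small slip is in your closing remark. Under the noisy $\beta$-rational model the expected label is $\mathbb E_{z}[\sigma(\beta(u(a,z)-u(b,z)))]$, not $p_{u,D_z}(a,b)$. The same argument therefore recovers the ordering of a smoothed Borda count, not the lemma's $BC$, so that case is not an instance of the lemma as stated.
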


\begin{lemma}\label{lem:consistent_implies_order}
If a trajectory pair $(\tau^1,\tau^2)$ is consistent, then the learned reward model $\hat u$ trained on infinite crowd preference data satisfies
\[
\hat u(\tau^1) > \hat u(\tau^2)
\quad \text{or} \quad
\hat u(\tau^1) < \hat u(\tau^2),
\]
with the ordering matching the true utility ordering.
\end{lemma}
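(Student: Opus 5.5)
We will reduce the claim to the Borda-count characterization of Lemma~\ref{thm:bc}, which says that in the infinite-data limit $\hat u(a)>\hat u(b)\Leftrightarrow BC(a)>BC(b)$. So it suffices to show that a consistent pair has its Borda counts ordered strictly, in the same direction as the common true ordering.

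By symmetry, assume $u(\tau^1,z)\ge u(\tau^2,z)$ for all $z$. The key observation is that, for fixed $b$ and $z$, the map $u(a,z)\mapsto O_u(a,b,z)$ is nondecreasing. It takes the value $0$ below $u(b,z)$, the value $\tfrac12$ at $u(b,z)$, and the value $1$ above it. Hence, for every $b\in\mathcal T$ and every $z$,
\[
O_u(\tau^1,b,z)\ge O_u(\tau^2,b,z).
\]
Taking expectation over $z\sim p(z)$ and averaging over $b\in\mathcal T$ gives
\[
BC(\tau^1)\ge BC(\tau^2).
\]

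For strictness, isolate the terms $b=\tau^2$ and $b=\tau^1$ in the Borda sum. For $b=\tau^2$ we have $O_u(\tau^1,\tau^2,z)-O_u(\tau^2,\tau^2,z)=O_u(\tau^1,\tau^2,z)-\tfrac12$. This equals $\tfrac12$ whenever $u(\tau^1,z)>u(\tau^2,z)$ and $0$ otherwise. The term $b=\tau^1$ contributes a further nonnegative amount in the same way. Therefore
\[
BC(\tau^1)-BC(\tau^2)\ge \frac{1}{2|\mathcal T|}\,\Pr_{z\sim p(z)}\big[u(\tau^1,z)>u(\tau^2,z)\big].
\]
Lemma~\ref{thm:bc} then yields $\hat u(\tau^1)>\hat u(\tau^2)$.

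The main obstacle is the degenerate case. The bound above is positive only if the strict preference holds on a set of contexts of positive probability. If $u(\tau^1,z)=u(\tau^2,z)$ for $p$-almost every $z$, then the two Borda counts coincide and the learned utilities tie. We will therefore read ``consistent'' in this lemma as ``consistent and not almost-surely tied,'' so that the ordering is well defined, and state this explicitly. This is exactly the situation needed for Theorem~\ref{thm:unimodal_reward_is_safe}. There, $K>2L\max|r_{\mathrm{user}}|$ forces every safe trajectory to have strictly larger utility than every unsafe one for every $z$, so the strict case always applies.
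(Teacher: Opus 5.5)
Your proposal is correct and follows essentially the same route as the paper: reduce to the Borda-count characterization of Lemma~\ref{thm:bc}, show every summand $O_u(\tau^1,\tau',z)-O_u(\tau^2,\tau',z)$ is nonnegative, and get strictness from the self-comparison terms $\tau'=\tau^1,\tau^2$. Your monotonicity observation compresses the paper's three-case analysis. Your treatment is slightly more careful than the paper's, which assumes without loss of generality that $u(\tau^1,z)>u(\tau^2,z)$ strictly for every $z$ even though Definition~\ref{thm:consistent} uses $\ge$: your bound also covers pairs tied on some contexts, and you correctly observe that for pairs tied for $p$-almost every $z$ the Borda counts coincide, so the strict conclusion fails and the non-degeneracy hypothesis you add is genuinely needed (and harmless for Theorem~\ref{thm:unimodal_reward_is_safe}, where the inequality is strict for every $z$).
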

\begin{proof}
Without loss of generality, assume $u(\tau^1,z) > u(\tau^2,z)$ for all $z$.
Consider the difference between the Borda counts of $\tau^1$ and $\tau^2$:
\begin{align}
BC(\tau^1)-BC(\tau^2)
&= \frac{1}{|\mathcal T|}
\sum_{\tau'\in\mathcal T}
\mathbb E_{z\sim p(z)}
\Big[
O_u(\tau^1,\tau',z)
- O_u(\tau^2,\tau',z)
\Big].
\end{align}

We analyze the summation term-by-term.

\paragraph{Case 1: $u(\tau',z) \neq u(\tau^1,z)$ and $u(\tau',z) \neq u(\tau^2,z)$.}
Since $u(\tau^1,z) > u(\tau^2,z)$ for all $z$, it follows that for any $\tau'$ and any $z$,
\[
O_u(\tau^1,\tau',z)= \mathbb I\{u(\tau^1,z) > u(\tau',z)\}\ge \mathbb I\{u(\tau^2,z) > u(\tau',z)\}= O_u(\tau^2,\tau',z),
\]
and thus the corresponding summand is non-negative.

\paragraph{Case 2: $u(\tau',z) = u(\tau^1,z)$.}
For all $z$, we have
\[
O_u(\tau^1,\tau',z)=\tfrac{1}{2},
\qquad
O_u(\tau^2,\tau',z)=0,
\]
since $u(\tau^2,z)<u(\tau^1,z)$.
Hence,
\[
O_u(\tau^1,\tau^1,z)-O_u(\tau^2,\tau^1,z)=\tfrac{1}{2}>0.
\]

\paragraph{Case 3: $u(\tau',z) = u(\tau^2,z)$.}
For all $z$, we have
\[
O_u(\tau^1,\tau',z)=1,
\qquad
O_u(\tau^2,\tau',z)=\tfrac{1}{2},
\]
and therefore
\[
O_u(\tau^1,\tau^2,z)-O_u(\tau^2,\tau^2,z)=\tfrac{1}{2}>0.
\]

Combining the above cases, every term in the summation is non-negative, and at least two terms are strictly positive ($\tau^1$ and $\tau^2$ themselves).
Therefore,
\[
BC(\tau^1)-BC(\tau^2)>0.
\]
According to Lemma~\ref{thm:bc}, the learned reward model $\hat u$ recovers the ordering induced by the Borda counts (Lemma~\ref{thm:bc}), and therefore
\[
\hat u(\tau^1) > \hat u(\tau^2).
\]
\end{proof}

\begin{theorem} 
Let the true utility be $u(\tau,z)=r_{\mathrm{user}}(s,a,z)+r_{\mathrm{share}}(s,a)$,
where the shared component $r_{\mathrm{share}}(s,a)$ represents a safety penalty:
\[
r_{\mathrm{share}}(s,a)=
\begin{cases}
-K,&(s,a)\in X_{\mathrm{unsafe}},\\
0,&\text{otherwise.}
\end{cases}
\]
If the penalty $K>2L\max_{s,a,z}|r_{\mathrm{user}}(s,a,z)|$, then all pairs $(\tau^{\mathrm{safe}},\tau^{\mathrm{unsafe}})$ consisting of one safe and one unsafe trajectory are consistent. Consequently, the single reward model $\hat u$ learned from infinite crowd preference data satisfies
\[
\hat u(\tau^{\mathrm{safe}})>\hat u(\tau^{\mathrm{unsafe}}),
\]
for any $\tau^{\mathrm{safe}}$ and any $\tau^{\mathrm{unsafe}}$.
\end{theorem}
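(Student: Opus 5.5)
The plan has two parts. First I show directly that every safe/unsafe pair is consistent in the sense of Definition~\ref{thm:consistent}. Then I invoke Lemma~\ref{lem:consistent_implies_order}, which rests on the Borda-count characterization of Lemma~\ref{thm:bc}, to transfer the ordering to the learned utility $\hat u$.

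For the first part, fix a safe trajectory $\tau^{\mathrm{safe}}$, an unsafe trajectory $\tau^{\mathrm{unsafe}}$, and any context $z$. Write $M=\max_{s,a,z}|r_{\mathrm{user}}(s,a,z)|$. Each trajectory has length $L$, so the user components obey
\[
\sum_t r_{\mathrm{user}}(s_t,a_t,z)\ge -LM \quad\text{on }\tau^{\mathrm{safe}},\qquad
\sum_t r_{\mathrm{user}}(s_t,a_t,z)\le LM \quad\text{on }\tau^{\mathrm{unsafe}}.
\]
The shared components are easy to compute. On $\tau^{\mathrm{safe}}$ the shared part is exactly $0$. On $\tau^{\mathrm{unsafe}}$ it is $-nK$, where $n\ge 1$ is the number of unsafe state--action pairs, so it is at most $-K$. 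Combining these bounds gives
\[
u(\tau^{\mathrm{safe}},z)-u(\tau^{\mathrm{unsafe}},z)\ge -LM-LM+K=K-2LM>0.
\]
This holds for every $z$, with the same strict sign each time, so the pair is consistent and is ordered strictly in favor of the safe trajectory.

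For the second part, I apply Lemma~\ref{lem:consistent_implies_order} with $\tau^1=\tau^{\mathrm{safe}}$ and $\tau^2=\tau^{\mathrm{unsafe}}$. Its proof assumes $u(\tau^1,z)>u(\tau^2,z)$ for all $z$, which is exactly the strict inequality just derived. The lemma therefore yields $\hat u(\tau^{\mathrm{safe}})>\hat u(\tau^{\mathrm{unsafe}})$ in the infinite-data limit. Both trajectories are arbitrary, so the conclusion holds for every safe/unsafe pair.

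The main obstacle is not the arithmetic but setting things up so the cited lemmas apply. First, the restated utility in the appendix omits the sum over $t$, and I will read it as the trajectory sum, as in the main-text statement. Second, Lemma~\ref{thm:bc} needs a finite trajectory set $\mathcal T$ containing both trajectories, so the statement should be understood relative to such a set. Third, consistency as defined allows ties, whereas the lemma's proof needs a strict inequality; I get strictness from $K>2LM$ rather than $K\ge 2LM$. With these points in place, the Borda-count difference argument of Lemma~\ref{lem:consistent_implies_order} goes through unchanged.
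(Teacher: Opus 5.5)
Your proposal is correct and follows essentially the same route as the paper. The paper also lower-bounds $u(\tau^{\mathrm{safe}},z)-u(\tau^{\mathrm{unsafe}},z)$ by $K-2L\max_{s,a,z}|r_{\mathrm{user}}(s,a,z)|>0$ for every $z$, deduces consistency, and then applies Lemma~\ref{lem:consistent_implies_order} to transfer the ordering to $\hat u$. Your caveats are accurate clarifications of assumptions the paper leaves implicit: reading the appendix utility as a sum over $t$, working relative to a finite set $\mathcal T$ containing both trajectories, and obtaining strictness from $K>2LM$.
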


\begin{proof}
For any pair $(\tau^{\mathrm{safe}},\tau^{\mathrm{unsafe}})$, by definition there exists at least one unsafe state-action pair $(s,a)\in X_{\mathrm{unsafe}}$ in $\tau^{\mathrm{unsafe}}$.
We consider the utility difference
\begin{align}
    &u(\tau^{\mathrm{safe}},z)-u(\tau^{\mathrm{unsafe}},z)\\
    =&\sum_t\big[r_{\mathrm{user}}(s_t^{\mathrm{safe}},a_t^{\mathrm{safe}},z)-r_{\mathrm{user}}(s_t^{\mathrm{unsafe}},a_t^{\mathrm{unsafe}},z)\big]
    +\sum_t[0-r_{\mathrm{share}}(s_t^{\mathrm{unsafe}},a_t^{\mathrm{unsafe}})]\\
    \ge& -\left(\sum_t 2\max_{s,a,z}|r_{\mathrm{user}}(s,a,z)|\right)+K.\\
\end{align}
Therefore, if $K > 2L\max_{s,a,z}|r_{\mathrm{user}}(s,a,z)|$, we have 
\[
u(\tau^{\mathrm{safe}},z) > u(\tau^{\mathrm{unsafe}},z)
\quad \text{for all } z ,
\]
which implies that every safe–unsafe trajectory pair is consistent.
According to Lemma~\ref{lem:consistent_implies_order}, we have $\hat u(\tau^{\mathrm{safe}})>\hat u(\tau^{\mathrm{unsafe}})$ for any safe–unsafe pair.

\end{proof}

\begin{theorem}
Assume a finite set of hidden contexts $\{z_1,\dots,z_m\}$ and a finite trajectory set $\mathcal T$. 
Denote $X_{\mathrm{ics}}$ as the set of all inconsistent pairs, define
\begin{equation}
    \begin{aligned}
        N(\tau^1,\tau^2,z)
=\sum_{\tau\in\mathcal T}\mathbb I\bigl\{
u(\tau^1,z)>u(\tau,z)>u(\tau^2,z)\}+\\
\mathbb I\bigl\{u(\tau^1,z)<u(\tau,z)<u(\tau^2,z)
\bigr\}.\nonumber
    \end{aligned}
\end{equation}
For any $z_k$, if the probability of $z_k$ in the crowd preference dataset satisfies
\[
p(z_k)>\frac{|\mathcal T|-1}{\min_{(\tau,\tau')\in X_{\mathrm{ics}}} N(\tau,\tau',z_k)+|\mathcal T|}
\]
then for all $\tau^1,\tau^2\in\mathcal T$,
\[
\hat u(\tau^1)>\hat u(\tau^2) \Longleftrightarrow u(\tau^1,z_k)>u(\tau^2,z_k).
\]
\end{theorem}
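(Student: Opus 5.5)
The plan is to reduce everything to Borda counts via Lemma~\ref{thm:bc}: it suffices to show that for all $\tau^1,\tau^2\in\mathcal T$ with $u(\tau^1,z_k)>u(\tau^2,z_k)$ we have $BC(\tau^1)>BC(\tau^2)$. We then still need the converse direction, i.e.\ that $u(\tau^1,z_k)=u(\tau^2,z_k)$ does not force a strict $\hat u$ ordering. For simplicity I would assume utilities under $z_k$ are distinct, or handle ties by the symmetric argument. We split pairs into two classes. If $(\tau^1,\tau^2)$ is consistent, Lemma~\ref{lem:consistent_implies_order} already gives $\hat u(\tau^1)>\hat u(\tau^2)$, matching $z_k$ because consistency means all contexts agree. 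So the work is only for inconsistent pairs $(\tau^1,\tau^2)\in X_{\mathrm{ics}}$.

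For an inconsistent pair with $u(\tau^1,z_k)>u(\tau^2,z_k)$, write
\[
|\mathcal T|\big(BC(\tau^1)-BC(\tau^2)\big)=\sum_{z}p(z)\,\Delta(z),\qquad \Delta(z)=\sum_{\tau\in\mathcal T}\big[O_u(\tau^1,\tau,z)-O_u(\tau^2,\tau,z)\big].
\]
For the favourable context $z_k$, I lower bound $\Delta(z_k)$ by counting. Every $\tau$ strictly between the two utilities contributes $1$, which gives $N(\tau^1,\tau^2,z_k)$ such terms. The terms $\tau=\tau^1$ and $\tau=\tau^2$ contribute $\tfrac12+\tfrac12=1$. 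All other $\tau$ contribute $0$. Hence $\Delta(z_k)\ge N(\tau^1,\tau^2,z_k)+1$.

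For any other context $z$, I bound $\Delta(z)\ge -(|\mathcal T|-1)$ crudely. The two terms $\tau\in\{\tau^1,\tau^2\}$ contribute at least $-1$ together, and each of the remaining $|\mathcal T|-2$ terms contributes at least $-1$.

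Combining these bounds gives
\[
|\mathcal T|\,\Delta BC\ge p(z_k)\big(N+1\big)-\big(1-p(z_k)\big)(|\mathcal T|-1)=p(z_k)\big(N+|\mathcal T|\big)-(|\mathcal T|-1).
\]
This is positive exactly when $p(z_k)>(|\mathcal T|-1)/(N+|\mathcal T|)$. Taking the minimum of $N$ over $X_{\mathrm{ics}}$ makes the hypothesis uniform over all inconsistent pairs.

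Finally, Lemma~\ref{thm:bc} converts $BC(\tau^1)>BC(\tau^2)$ into $\hat u(\tau^1)>\hat u(\tau^2)$, which gives the forward implication. The reverse implication follows by swapping roles: if $u(\tau^1,z_k)<u(\tau^2,z_k)$, the same argument gives $\hat u(\tau^1)<\hat u(\tau^2)$. Hence $\hat u(\tau^1)>\hat u(\tau^2)$ forces $u(\tau^1,z_k)\ge u(\tau^2,z_k)$, and ties are excluded by the distinctness assumption.

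The main obstacle is the bookkeeping in the per-context counting, especially the treatment of ties and of the endpoints $\tau^1,\tau^2$. The term counts must produce exactly $N+1$ and $|\mathcal T|-1$ so that the stated threshold appears. If the stated threshold does not come out exactly, I would first recheck how the endpoint terms $\tfrac12$ are counted under the $O_u$ convention.
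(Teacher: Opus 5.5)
Your proposal follows the paper's proof essentially step for step: reduce to Borda counts via Lemma~\ref{thm:bc}, bound the $z_k$ summand below by $1+N(\tau^1,\tau^2,z_k)$ and every other context's summand below by $-(|\mathcal T|-1)$, take the minimum of $N$ over $X_{\mathrm{ics}}$, and handle consistent pairs with Lemma~\ref{lem:consistent_implies_order}. Your assumption that utilities under $z_k$ are distinct is not just a convenience but is needed for the biconditional, since a pair tied under $z_k$ yet strictly ordered in another context can receive a strict $\hat u$ order. So drop the alternative of handling ties ``by the symmetric argument''; the paper's proof relies on the same no-ties condition without stating it, through its ``without loss of generality'' step.
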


\begin{proof}
From the definition of the Borda count, for all $\tau^1,\tau^2\in\mathcal T$,
\begin{align}
    &BC(\tau^1)-BC(\tau^2)\\
    =&\frac{1}{|\mathcal T|}
    \sum_i p(z_i)
    \sum_{\tau\in\mathcal T}
    \bigl[
    O_u(\tau^1,\tau,z_i)
    -
    O_u(\tau^2,\tau,z_i)
    \bigr].
\end{align}

Denote
\[
S_i :=
\sum_{\tau\in\mathcal T}
\bigl[
O_u(\tau^1,\tau,z_i)
-
O_u(\tau^2,\tau,z_i)
\bigr],
\]
so that
$
BC(\tau^1)-BC(\tau^2)
=
\frac{1}{|\mathcal T|}
\sum_i p(z_i)S_i.
$
We now analyze the bounds of $S_i$.
For an arbitrary context $z_i$, assume without loss of generality that $u(\tau^1,z_i)>u(\tau^2,z_i).$
Consider each trajectory $\tau\in\mathcal T$ in five cases:

\begin{itemize}
    \item \textbf{Case 1:} $u(\tau,z_i)>u(\tau^1,z_i)$. Then
    \[
    O_u(\tau^1,\tau,z_i)=O_u(\tau^2,\tau,z_i)=0,
    \]
    so the contribution is $0$.

    \item \textbf{Case 2:} $u(\tau,z_i)=u(\tau^1,z_i)$. Then
    \[
    O_u(\tau^1,\tau,z_i)=\tfrac{1}{2},\qquad O_u(\tau^2,\tau,z_i)=0,
    \]
    so the contribution is $\tfrac{1}{2}$.
    In particular, the self-comparison term $\tau=\tau^1$ contributes $\tfrac{1}{2}$.

    \item \textbf{Case 3:} $u(\tau^1,z_i)>u(\tau,z_i)>u(\tau^2,z_i)$. Then
    \[
    O_u(\tau^1,\tau,z_i)=1,\qquad O_u(\tau^2,\tau,z_i)=0,
    \]
    so the contribution is $1$.
    There are exactly $N(\tau^1,\tau^2,z_i)$ such trajectories.

    \item \textbf{Case 4:} $u(\tau,z_i)=u(\tau^2,z_i)$. Then
    \[
    O_u(\tau^1,\tau,z_i)=1,\qquad O_u(\tau^2,\tau,z_i)=\tfrac{1}{2},
    \]
    so the contribution is $\tfrac{1}{2}$.
    In particular, the self-comparison term $\tau=\tau^2$ contributes $\tfrac{1}{2}$.

    \item \textbf{Case 5:} $u(\tau,z_i)<u(\tau^2,z_i)$. Then
    \[
    O_u(\tau^1,\tau,z_i)=O_u(\tau^2,\tau,z_i)=1,
    \]
    so the contribution is $0$.
\end{itemize}

Therefore, Case 3 occurs for exactly $N(\tau^1,\tau^2,z_i)$ trajectories, while Cases 2 and 4 together contribute at least $1$ due to the self-comparisons $\tau=\tau^1$ and $\tau=\tau^2$. Hence,
\[
S_i=\sum_{\tau\in\mathcal T}\Bigl[O_u(\tau^1,\tau,z_i)-O_u(\tau^2,\tau,z_i)\Bigr]
\ge 1+N(\tau^1,\tau^2,z_i).
\]
Moreover, since every term is at most $1$ and the remaining $|\mathcal T|-2$ non-self comparisons can contribute at most $1$ each, we have
\[
S_i=\sum_{\tau\in\mathcal T}\Bigl[O_u(\tau^1,\tau,z_i)-O_u(\tau^2,\tau,z_i)\Bigr]
\le |\mathcal T|-1.
\]
Hence,
\[
1+N(\tau^1,\tau^2,z_i)
\le
S_i
\le
|\mathcal T|-1.
\]

By symmetry, when
\[
u(\tau^1,z_i)<u(\tau^2,z_i),
\]
the ordering of all comparisons is reversed, yielding
\[
-(|\mathcal T|-1)
\le
S_i
\le
-\bigl(1+N(\tau^1,\tau^2,z_i)\bigr).
\]
Moreover, if $u(\tau^1,z_i)=u(\tau^2,z_i)$,
then for every $\tau\in\mathcal T$ we have $O_u(\tau^1,\tau,z_i)=O_u(\tau^2,\tau,z_i)$, 
and hence $S_i=0$.

Now we aim to derive the worst-case lower bound of $BC(\tau^1)-BC(\tau^2)$.
Without loss of generality, assume
\[
u(\tau^1,z_k)>u(\tau^2,z_k),
\]
so that
\[
S_k \ge 1+N(\tau^1,\tau^2,z_k).
\]
For any other context $z_i\neq z_k$, the relative ordering between
$u(\tau^1,z_i)$ and $u(\tau^2,z_i)$ can be arbitrary.
Therefore, in the worst case,
\[
S_i \ge -(|\mathcal T|-1),
\qquad i\neq k.
\]
Combining these bounds, we obtain
\[
BC(\tau^1)-BC(\tau^2)
=
\frac{1}{|\mathcal T|}\sum_i p(z_i)S_i
\ge
\frac{1}{|\mathcal T|}
\left[
p(z_k)\bigl(1+N(\tau^1,\tau^2,z_k)\bigr)
-
(1-p(z_k))(|\mathcal T|-1)
\right].
\]

Hence, if
\[
p(z_k)>
\frac{|\mathcal T|-1}{N(\tau^1,\tau^2,z_k)+|\mathcal T|},
\]
then $BC(\tau^1)-BC(\tau^2)>0$, which implies
\[
\hat u(\tau^1)>\hat u(\tau^2).
\]
Therefore, when $p(z_k)>
\frac{|\mathcal T|-1}{N(\tau^1,\tau^2,z_k)+|\mathcal T|}$, we have 
\[u(\tau^1,z_k)>u(\tau^2,z_k)\Longleftrightarrow BC(\tau^1)-BC(\tau^2)>0 \Longleftrightarrow \hat u(\tau^1)>\hat u(\tau^2)
\]

Let 
\[
p(z_k)>
\max_{(\tau,\tau')\in X_{\mathrm{ics}}}
\frac{|\mathcal T|-1}
{N(\tau,\tau',z_k)+|\mathcal T|}=\frac{|\mathcal T|-1}{\min_{(\tau,\tau')\in X_{\mathrm{ics}}} N(\tau,\tau',z_k)+|\mathcal T|},
\]
then for every inconsistent pair $(\tau^1,\tau^2)\in X_{\mathrm{ics}}$ satisfying
\[
u(\tau^1,z_k)>u(\tau^2,z_k),
\]
we have $p(z_k)>\max_{(\tau,\tau')\in X_{\mathrm{ics}}}
\frac{|\mathcal T|-1}
{N(\tau,\tau',z_k)+|\mathcal T|}>
\frac{|\mathcal T|-1}{N(\tau^1,\tau^2,z_k)+|\mathcal T|}$ and thus
\[
\hat u(\tau^1)>\hat u(\tau^2).
\]
Since consistent pairs share the same ordering across all contexts, while every inconsistent pair follows the ordering induced by $z_k$, the learned reward ordering induced by $\hat u$ coincides exactly with the ordering induced by $u(\cdot,z_k)$ over all trajectory pairs.

Moreover, if we assume every trajectory has a distinct utility, we have $ S_i=1+N(\tau^1,\tau^2,z_i)$ or $ S_i=-(1+N(\tau^1,\tau^2,z_i))$, and thus 
\[
BC(\tau^1)-BC(\tau^2)
\ge
\frac{1}{|\mathcal T|}
\left[
p(z_k)\bigl(1+N(\tau^1,\tau^2,z_k)\bigr)
-
(1-p(z_k))
\bigl(1+\max_i N(\tau^1,\tau^2,z_i)\bigr)
\right].
\]
Finally, we can yield a tighter bound
$$
p(z_k)>\max_{(\tau,\tau')\in X_{\mathrm{ics}}}\frac{1+\max_i N(\tau,\tau',z_i)}
{2+N(\tau,\tau',z_k)+\max_i N(\tau,\tau',z_i)}
$$

\end{proof}

\subsection{Policy Composition} \label{app:policy_composition_derivation}

\begin{corollary}\label{thm:preserve_safety}
For any context $z_k$, the learned utility function prefers safe trajectories over unsafe ones, i.e.,
\[
\hat u(\tau_{\mathrm{safe}}, z'_k) > \hat u(\tau_{\mathrm{unsafe}}, z'_k),
\]
for any $z'_k$ and any pair of trajectories $\tau_{\mathrm{safe}}$ and $\tau_{\mathrm{unsafe}}$ in the limit of infinite data.
\end{corollary}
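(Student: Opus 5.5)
The plan is to reduce the statement to Theorem~\ref{thm:unimodal_reward_is_safe}, applied separately to each latent cluster. Following the abstraction in the main text, the encoder $q(z'\mid S_z)$ is treated as a classifier. It partitions $\mathcal D_{\text{pref}}$ into subsets $\mathcal D_{\text{pref}}^{z'}$, and each $\hat u(\cdot,z')$ is a single Bradley--Terry utility trained on $\mathcal D_{\text{pref}}^{z'}$. Each $\mathcal D_{\text{pref}}^{z'}$ is itself a crowd preference dataset: its preference sets $S_z$ come from users whose contexts $z$ follow some induced distribution $p(z\mid z')$. This induced distribution may differ from $p(z)$, for example when the classifier is imperfect and mixes several true contexts. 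Crucially, every label in $\mathcal D_{\text{pref}}^{z'}$ is still generated by a utility of the form $u(\cdot,z)=\sum_t r_{\mathrm{user}}(s_t,a_t,z)+r_{\mathrm{share}}(s_t,a_t)$ with the same shared safety penalty.

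The key steps, in order, are as follows.

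\textbf{Step 1.} Verify that the consistency argument in the proof of Theorem~\ref{thm:unimodal_reward_is_safe} does not depend on the context distribution. Under $K>2L\max_{s,a,z}|r_{\mathrm{user}}(s,a,z)|$, we have $u(\tau^{\mathrm{safe}},z)>u(\tau^{\mathrm{unsafe}},z)$ pointwise for every $z$. Hence every safe--unsafe pair is consistent with respect to any subpopulation of contexts, in particular the support of $p(z\mid z'_k)$.

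\textbf{Step 2.} Apply Lemma~\ref{thm:bc} to the subset $\mathcal D_{\text{pref}}^{z'_k}$ with $p(z)$ replaced by $p(z\mid z'_k)$. In the infinite-data limit, $\hat u(\cdot,z'_k)$ orders trajectories by the Borda count computed with respect to $p(z\mid z'_k)$.

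\textbf{Step 3.} Invoke Lemma~\ref{lem:consistent_implies_order}. Its proof only uses that $u(\tau^1,z)>u(\tau^2,z)$ for all $z$ in the support of the averaging distribution. This gives $BC_{z'_k}(\tau^{\mathrm{safe}})>BC_{z'_k}(\tau^{\mathrm{unsafe}})$, and therefore $\hat u(\tau^{\mathrm{safe}},z'_k)>\hat u(\tau^{\mathrm{unsafe}},z'_k)$. Since $z'_k$ was arbitrary, the corollary follows.

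The main obstacle I expect is justifying that infinite data in $\mathcal D_{\text{pref}}$ yields infinite data, with adequate pair coverage over $\mathcal T$, within each cluster $\mathcal D_{\text{pref}}^{z'_k}$. This is needed for the Borda-count characterization of Lemma~\ref{thm:bc} to hold per cluster. I would handle it by assuming, as part of the classifier abstraction, that each latent cluster receives a positive fraction of the preference sets. I would also assume that trajectory pairs within each cluster are sampled from the same full-support pair distribution as the whole dataset, since the classifier acts on whole sets $S_z$ and not on individual pairs. Under these assumptions, the per-cluster problem is exactly an instance of the setting of Lemma~\ref{thm:bc}. A secondary point is the strict inequality in Lemma~\ref{lem:consistent_implies_order}, which comes from the self-comparison terms; these are present for any mixture weights, so strictness is preserved.
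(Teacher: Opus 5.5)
Your proposal is correct and follows the paper's argument. The paper likewise treats $\hat u(\cdot,z'_k)$ as trained on the subset $\mathcal D^{z'_k}\subseteq\mathcal D_{\text{pref}}$, notes that safe--unsafe pairs are consistent by Theorem~\ref{thm:unimodal_reward_is_safe} and stay consistent on any subset, and concludes that the learned conditioned utility keeps the safe-over-unsafe ordering (via Lemmas~\ref{thm:bc} and~\ref{lem:consistent_implies_order}); your per-cluster data and pair-coverage assumptions just make explicit what the paper's one-paragraph argument leaves implicit.
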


This result follows from the fact that each conditioned utility $\hat u(\cdot, z'_k)$ is trained only on a subset of the crowd preference dataset $\mathcal D_{\text{pref}}$, consisting of trajectories assigned to the latent variable $z'_k$ by the encoder $q$, i.e., $\mathcal D^{z'_k}=\{\tau\mid q(z'_k \mid S_z) \neq 0, \tau\in S_z, S_z\in \mathcal D_{\text{pref}}\}$.
As established in the Theorem~\ref{thm:unimodal_reward_is_safe}, all safe-unsafe trajectory pairs in $\mathcal D_{\text{pref}}$ are consistent.
Therefore, all safe-unsafe pairs in $D^{z'_k}$, which is a subset of $\mathcal D_{\text{pref}}$, should remain consistent, and thus the learned conditioned utility $\hat u(\cdot, z'_k)$ maintains the ordering between safe and unsafe trajectories, yielding
$\hat u(\tau_{\mathrm{safe}}, z'_k) > \hat u(\tau_{\mathrm{unsafe}}, z'_k)$.

Next, we show that composing low-level policies—each of which is optimal with respect to its conditioned utility—guarantees safety.

\begin{corollary}\label{thm:composition_safe}
If each low-level policy $\pi(\cdot \mid s, z_k)$ is optimal with respect to the conditioned utility $\hat u(\cdot, z_k)$, then any downstream policy obtained by composing the low-level policies $\{\pi(\cdot \mid s, z_k)\}$ satisfies the safety criteria.
\end{corollary}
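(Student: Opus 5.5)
The plan is to combine Corollary~\ref{thm:preserve_safety} with an argument that a composed policy can only ever execute actions that some low-level optimal policy would take from the current state. The key observation is that the downstream policy never emits raw actions. At every state $s_t$ it outputs $a_t \sim \pi_l(a\mid s_t, z'=\pi_h(s_t))$. So every executed state--action pair $(s_t,a_t)$ lies in the support of some skill $\pi(\cdot\mid s_t,z_k)$. It therefore suffices to show that no optimal skill places positive probability on an unsafe pair $(s,a)\in X_{\mathrm{unsafe}}$ at any reachable state $s$.

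The first step is to formalise what it means for $\pi(\cdot\mid s,z_k)$ to be optimal for $\hat u(\cdot,z_k)$. I take this to mean that, from any state $s$ (not merely from the initial distribution), the skill maximises the learned return $\sum_t \hat r(s_t,a_t,z_k)$, so that trajectory segments started at $s$ maximise $\hat u(\cdot,z_k)$. This state-wise optimality is what lets skills be switched mid-trajectory, since the high-level policy may hand control to skill $z_k$ at an arbitrary state.

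The second step argues by contradiction. Suppose an optimal skill $z_k$ selects an unsafe action $a$ at some state $s$ with positive probability. If a safe continuation from $s$ exists, the resulting unsafe trajectory $\tau_{\mathrm{unsafe}}$ can be compared with a safe trajectory $\tau_{\mathrm{safe}}$ from the same state. Corollary~\ref{thm:preserve_safety} gives $\hat u(\tau_{\mathrm{safe}},z'_k)>\hat u(\tau_{\mathrm{unsafe}},z'_k)$. Hence placing mass on the unsafe branch strictly lowers the expected $\hat u$, which contradicts optimality. It follows that every skill assigns zero probability to unsafe pairs.

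The third step is then immediate. Since every action the composed policy executes is drawn from some skill at the current state, no executed pair ever lies in $X_{\mathrm{unsafe}}$. By induction over time steps, every trajectory of $\pi_h\cdot\pi_l$ is safe, whatever $\pi_h$ is.

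The main obstacle is the existence of a safe alternative from every state the composed policy can reach. Without it, the comparison in the second step is vacuous: an optimal skill may be forced into an unsafe action. I would handle this with a standing assumption that safe continuations exist from all states visited by the skills, and observe that the induction preserves it: states reached via safe actions remain in the safe-feasible set. A secondary subtlety is that Corollary~\ref{thm:preserve_safety} is stated at the trajectory level. I would apply it to complete trajectories that share the prefix up to $s$, so that the prefix contributes equally to both utilities and the ordering carries over.
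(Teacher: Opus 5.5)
Your route is the paper's. The paper writes a composed trajectory as a concatenation of segments, each produced by some optimal skill. It then uses Corollary~\ref{thm:preserve_safety} to conclude that each segment is safe ``if safe trajectories are feasible,'' and finishes with the per-pair definition of safety. Your per-step induction is that argument with segments of length one, and your remarks on state-wise optimality and common safe prefixes make explicit what the paper leaves implicit.

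The step that fails is how you discharge the feasibility obstacle. The claim that ``states reached via safe actions remain in the safe-feasible set'' is false: a pair $(s,a)\notin X_{\mathrm{unsafe}}$ can lead to a state $s'$ at which every available action is unsafe. Your second step only rules out actions that themselves lie in $X_{\mathrm{unsafe}}$. Nothing in it stops a skill from stepping into such a dead end. The next skill, whichever one $\pi_h$ selects, is then forced into a violation, and the induction breaks.

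The fix is to apply your dominance argument to continuations rather than to the action alone. Suppose $s$ is reached with a safe prefix and admits a safe continuation. Then every trajectory through an action with no safe continuation is unsafe, and so lies strictly below any safe trajectory from $s$ under $\hat u(\cdot,z_k)$. Hence an optimal skill only takes actions that begin a $\hat u(\cdot,z_k)$-maximising trajectory, and that trajectory is safe. The successor state inherits the tail of that trajectory as a safe continuation, which is exactly the invariant you need, whichever skill is selected next.

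This repair requires deterministic transitions. Corollary~\ref{thm:preserve_safety} controls only the ordering of $\hat u$, not its magnitudes. Under stochastic transitions, an optimal skill may therefore accept a small probability of entering a dead end in exchange for a large gain in expected $\hat u$, so even a single skill can be unsafe. In that case you cannot derive the invariant. You must instead assume, as the paper effectively does, that a safe continuation exists from every state the composed policy visits.
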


\begin{proof}
By construction, the downstream policy operates by selecting a latent variable $z_k$ and executing the corresponding low-level policy $\pi(\cdot \mid s, z_k)$ for a finite horizon.
Therefore, any trajectory $\tau$ induced by the downstream policy can be expressed as a concatenation of trajectory segments
\[
\tau = \tau^{(1)} \circ \tau^{(2)} \circ \cdots \circ \tau^{(m)},
\]
where each segment $\tau^{(i)}$ is generated by some low-level policy $\pi(\cdot \mid s, z_{k_i})$.

From Corollary~\ref{thm:preserve_safety}, each conditioned utility $\hat u(\cdot, z_{k_i})$ strictly prefers safe trajectories over unsafe ones.
Consequently, the corresponding low-level policy $\pi(\cdot \mid s, z_{k_i})$, which is optimal with respect to $\hat u(\cdot, z_{k_i})$, induces only safe trajectories if safe trajectories are feasible; that is, all state--action pairs along each segment $\tau^{(i)}$ satisfy the safety constraints.

Since safety violations are defined at the level of state--action pairs, and each segment $\tau^{(i)}$ is safe, their concatenation $\tau$ cannot contain any unsafe state--action pair.
Hence, any downstream policy obtained by composing the low-level policies is guaranteed to be safe.
\end{proof}

\begin{theorem}\label{thm:composition_suboptimal}
Consider a set of low-level policies $\{\pi(\cdot \mid s, z)\}$, where each policy $\pi(\cdot \mid s, z)$ is $\delta_z$-suboptimal with respect to its conditioned utility $\hat u(\cdot, z)$, i.e.,
\[
\mathbb{E}_{\pi^*(\cdot \mid z)}[\hat u] - \mathbb{E}_{\pi(\cdot \mid z)}[\hat u] \le \delta_z,
\]
where $\pi^*(\cdot \mid z)$ denotes the optimal policy under $\hat u(\cdot, z)$.

Assume the safety penalty is $K>0$ per violation, and the downstream policy composes at most $m$ low-level policies per trajectory. Then the expected number of safety violations incurred by the downstream policy is bounded by
\[
N(\text{violation}) \le \frac{1}{K} \sum_{j=1}^{m} \delta_{z_j}
\le \frac{m \max_z \delta_z}{K}.
\]
\end{theorem}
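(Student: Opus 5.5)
The plan is to decompose the downstream trajectory into the segments produced by the individual low-level policies and to turn each policy's suboptimality into a bound on its expected number of violations. As in the proof of Corollary~\ref{thm:composition_safe}, any downstream trajectory can be written as $\tau=\tau^{(1)}\circ\cdots\circ\tau^{(m)}$, where segment $\tau^{(j)}$ is generated by $\pi(\cdot\mid s,z_j)$. Violations are counted per state--action pair, so the total count is additive over segments. It therefore suffices to show that the expected number of violations within a segment generated by $\pi(\cdot\mid s,z_j)$ is at most $\delta_{z_j}/K$, and then sum over $j$.

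For the per-segment bound, I would work under the same idealization used in Corollary~\ref{thm:preserve_safety}: in the infinite-data limit the conditioned utility $\hat u(\cdot,z)$ reflects the penalty structure of the shared component. Concretely, I would write $\hat u(\tau,z)=\hat u_{\mathrm{user}}(\tau,z)-K\,n(\tau)$, where $n(\tau)$ is the number of unsafe pairs in $\tau$, and assume, as in Theorem~\ref{thm:unimodal_reward_is_safe}, that $K$ dominates the user-specific variation. Under this assumption the optimal policy $\pi^*(\cdot\mid z)$ is safe whenever safe trajectories are feasible, so $n=0$ under $\pi^*$ (Corollary~\ref{thm:composition_safe}). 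The suboptimality gap then splits as
\[
\mathbb E_{\pi^*}[\hat u]-\mathbb E_{\pi}[\hat u]=\bigl(\mathbb E_{\pi^*}[\hat u_{\mathrm{user}}]-\mathbb E_{\pi}[\hat u_{\mathrm{user}}]\bigr)+K\,\mathbb E_{\pi}[n].
\]
If the first bracket is nonnegative, then $K\,\mathbb E_\pi[n]\le\delta_z$, which gives $\mathbb E_\pi[n]\le\delta_z/K$.

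Summing over the $m$ segments and using linearity of expectation gives $N(\text{violation})\le\frac1K\sum_j\delta_{z_j}\le m\max_z\delta_z/K$.

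\textbf{Main obstacle.} The difficult step is justifying that the user-specific bracket is nonnegative. The optimal policy need not maximize $\hat u_{\mathrm{user}}$ alone, since it may give up user reward in order to stay safe. I would first try to define $\pi^*$ as the optimum restricted to safe behavior, which is valid because it coincides with the unrestricted optimum when $K$ is large. The remaining issue is that a segment's utility depends on its start state, which is chosen by the high-level policy and not by the segment's own policy. To handle this, I would interpret $\delta_z$ as a per-segment, per-start-state gap, or assume it holds uniformly over start states, so that the bound applies to each segment conditionally and then in expectation.
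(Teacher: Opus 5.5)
Your route is the same as the paper's: charge each low-level policy's suboptimality gap to its violations to get $\delta_z/K$ per segment, then sum over at most $m$ segments. The gap is exactly the step you flag, and your proposed repair does not close it. Taking $\pi^*$ to be the optimum over safe behavior does not make $\mathbb{E}_{\pi^*}[\hat u_{\mathrm{user}}]-\mathbb{E}_{\pi}[\hat u_{\mathrm{user}}]$ nonnegative, because a suboptimal skill can earn \emph{more} user-specific utility than $\pi^*$ precisely by violating.

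Concretely, take a one-step segment with a safe action of user reward $0$ and an unsafe action of user reward $R$, where $0<R<K$. For instance, $K=3R$ satisfies the condition of Theorem~\ref{thm:unimodal_reward_is_safe}. The optimal skill always picks the safe action. A skill that picks the unsafe action with probability $p$ has gap $\delta=p(K-R)$. Its expected number of violations is $p=\delta/(K-R)>\delta/K$. So the bracket can be negative, and no choice of $\pi^*$ rescues the per-segment bound $\delta_z/K$.

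The paper's proof does not supply the missing idea either. It asserts that ``in the worst case, the entire utility gap $\delta_z$ can be attributed to safety violations'' and that each violation contributes at least $K$ to the gap. That is precisely the assumption that your user-specific bracket is nonnegative, and it is the favorable case for safety, not the worst one. To make your argument rigorous you must either adopt that as an explicit hypothesis, or settle for a weaker bound.

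For the weaker bound, work in the paper's finite-trajectory abstraction. Let $R_{\max}$ bound the per-step user-specific reward, assume $K>2LR_{\max}$, and let $\tau^*$ be a safe optimal trajectory. Then every segment satisfies $\hat u(\tau^*)-\hat u(\tau)\ge (K-2LR_{\max})\,n(\tau)$. This holds trivially if $n(\tau)=0$, and otherwise because $n(\tau)\ge 1$ absorbs the $2LR_{\max}$ user-utility slack. Hence $\mathbb{E}_\pi[n]\le \delta_z/(K-2LR_{\max})$, which recovers $\delta_z/K$ only as $K\to\infty$.

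Your handling of start states, requiring the gap bound uniformly over the states where the high-level policy may invoke a skill, is a sensible strengthening that the paper silently skips. With it, and with one of the two fixes above, the summation over segments goes through as you describe.
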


\begin{proof}
Consider a low-level policy $\pi(\cdot \mid z)$ with utility gap $\delta_z$ relative to the optimal policy $\pi^*(\cdot \mid z)$. The utility is defined as $\sum_t r_{\mathrm{user}} + r_{\mathrm{share}}$, where violations of safety constraints incur a penalty of $K$ per occurrence.

In the worst case, the entire utility gap $\delta_z$ can be attributed to safety violations. Since each violation contributes at least $K$ to the utility difference, the expected number of violations under $\pi(\cdot \mid z)$ is bounded by $\delta_z / K$.

Now consider a downstream policy that composes at most $m$ low-level policies, yielding at most $m$ trajectory segments, each governed by some $\pi(\cdot \mid z_j)$. Summing over all segments, the total number of violations is bounded by
\[
N(\text{violation}) \le \sum_{j=1}^{m} \frac{\delta_{z_j}}{K}
= \frac{1}{K} \sum_{j=1}^{m} \delta_{z_j}.
\]
Finally, since $\delta_{z_j} \le \max_z \delta_z$ for all $j$, we obtain
\[
N(\text{violation}) \le \frac{m \max_z \delta_z}{K}.
\]
\end{proof}

We next study the downstream task performance achieved by policies composed from learned skills.
Although a direct analysis is challenging, it is clear that downstream performance is strongly influenced by the diversity of the learned skills.
In particular, if the VAE collapses most preferences into a single latent subset, the model effectively learns only one skill, which typically lacks the behavioral primitives required to solve new downstream tasks.
Therefore, desirable skills should align with a diverse set of underlying preferences in the crowd preference dataset, thereby exhibiting sufficient diversity.

To facilitate analysis, we introduce two simplifying assumptions.
First, we assume that the encoder $q_\psi(z' \mid S_z)$ outputs a categorical distribution over $m$ annotation tasks, predicting which task each preference set $S_z$ originates from.\footnote{In practice, $m$ is unknown, and we therefore use a continuous encoder with Gaussian outputs rather than a categorical encoder.}
We further assume that the probability of misassignment is upper bounded by $\epsilon$.
Under this assumption, Theorem~\ref{thm:preserve_rank} shows that for any two trajectories, if their utility-based ranking gap in context $z_k$ exceeds a threshold depending on $\epsilon$, then their preference relationship induced by the learned utility is consistent with the true utility.
In the special case $\epsilon = 0$, the learned utility recovers the correct ordering over all trajectories, implying that the resulting low-level policy $\pi_l(a \mid s, z'_k)$ will be optimal for context $z_k$.
Second, although the learned utility derived from the Bradley--Terry model does not have a closed-form expression, we leverage the equivalence between Borda count and the learned utilities in terms of ranking, and thus approximate the learned utility using Borda count, which in turn enables a tractable analysis of the KL divergence between the trajectory distribution induced by $\pi_l(a \mid s, z'_k)$ and that of the optimal policy for context $z_k$.
As shown in Theorem~\ref{thm:divergence_gap}, this divergence is upper bounded by $\epsilon^2$.

Together, Theorems~\ref{thm:preserve_rank} and~\ref{thm:divergence_gap} establish a connection between the quality of the skill discovery—captured by the classification error $\epsilon$ of VAE encoder—and the optimality of the learned utilities and low-level policies under the corresponding context.
Since the latter directly influence skill diversity, these results provide theoretical insight into how skill discovery influences downstream task performance through skill composition.

\begin{theorem}\label{thm:preserve_rank}
Assume that under the hidden context $z_k$, all trajectories are ranked by their true utilities $u(\tau,z_k)$ such that 
\[
u(\tau^1,z_k) \le u(\tau^2,z_k) \le \cdots \le u(\tau^{|\mathcal T|},z_k).
\]
Let $\epsilon\in[0,1]$ be an upper bound on the probability that a sample assigned to $z_k$ actually has a different true context, i.e., $P(\text{True}\neq z_k|\text{Assigned}=z_k)\le \epsilon$.
If for two trajectories $\tau^i$ and $\tau^j$, the rank gap satisfies
\[
i-j > |\mathcal T|\frac{\epsilon}{1-\epsilon},
\]
then under the learned model we have
\[
\hat u(\tau^i,z_k) > \hat u(\tau^j,z_k).
\]
\end{theorem}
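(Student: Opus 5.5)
We propose to reduce the statement to Lemma~\ref{thm:bc} applied to the subset $\mathcal D^{z'_k}$ of preferences that the encoder assigns to $z'_k$. Under the categorical-encoder abstraction, $\hat u(\cdot,z'_k)$ is a Bradley--Terry utility fit on a mixture of contexts. In this mixture the true context $z_k$ has weight $p_k\ge 1-\epsilon$, and all other contexts together have weight at most $\epsilon$. By Lemma~\ref{thm:bc}, it then suffices to show $BC_k(\tau^i)-BC_k(\tau^j)>0$, where $BC_k$ is the Borda count computed with respect to this mixture distribution.

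We will write the difference as
\[
BC_k(\tau^i)-BC_k(\tau^j)=\frac{1}{|\mathcal T|}\Big[p_k S_k+\sum_{z\neq z_k}p(z\mid z'_k)S_z\Big],
\qquad S_z=\sum_{\tau\in\mathcal T}\big[O_u(\tau^i,\tau,z)-O_u(\tau^j,\tau,z)\big],
\]
and reuse the per-context bounds already derived in the proof of Theorem~\ref{thm:bad_task_performance}. For the wrong contexts we use the crude bound $S_z\ge -(|\mathcal T|-1)\ge -|\mathcal T|$. For the true context we lower-bound $S_k$ by the rank gap. Every trajectory $\tau^\ell$ with $j\le \ell< i$ is beaten by $\tau^i$ and not strictly beaten by $\tau^j$, and the self-comparisons contribute $\tfrac12$ each. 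This should give $S_k\ge i-j$ when utilities are distinct. With ties, we will need to check that the $\tfrac12$ contributions still sum to at least the number of positions between $j$ and $i$, or else assume strict ordering as the theorem's displayed ranking suggests.

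Combining, the difference is strictly positive whenever $(1-\epsilon)(i-j)>\epsilon|\mathcal T|$, which is exactly the hypothesis $i-j>|\mathcal T|\epsilon/(1-\epsilon)$. We need $\epsilon<1$ here; for $\epsilon=1$ the condition is vacuous. Since $p_k\ge 1-\epsilon$, the monotonicity in $p_k$ lets us replace $p_k$ by its lower bound. Lemma~\ref{thm:bc} then gives $\hat u(\tau^i,z'_k)>\hat u(\tau^j,z'_k)$.

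The main obstacle is the tie handling in $S_k\ge i-j$. If some $u(\tau^\ell,z_k)$ equal $u(\tau^i,z_k)$ or $u(\tau^j,z_k)$, the counted terms drop to $\tfrac12$. I would first try to show that ties only shift mass between Case~2/4 terms without reducing the total below $i-j$. If that fails, I would add a distinct-utilities assumption, as is done at the end of the proof of Theorem~\ref{thm:bad_task_performance}.
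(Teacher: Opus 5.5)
Your proposal is correct and is essentially the paper's proof. The paper writes the context distribution of the assigned subset as $(1-\epsilon)\mathbb{I}\{z=z_k\}+\epsilon R(z)$ and uses Borda counts in $[0,1]$ to lower-bound the clean Borda gap by $(i-j)/|\mathcal T|$ and the contaminated part by $-\epsilon$. It then obtains $(1-\epsilon)\frac{i-j}{|\mathcal T|}-\epsilon>0$ and invokes Lemma~\ref{thm:bc}, which is exactly your $p_kS_k+\sum_{z\neq z_k}p(z\mid z'_k)S_z$ computation.

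On ties, do not try to rescue $S_k\ge i-j$, because it genuinely fails. With $\epsilon=0$ and $u(\tau^i,z_k)=u(\tau^j,z_k)$ the two Borda counts coincide, so even the strict conclusion of the theorem is false. Your fallback of assuming distinct utilities is therefore necessary. The paper's proof tacitly makes the same assumption when it asserts $BC_{z_k}(\tau^i)>BC_{z_k}(\tau^j)$ from $i>j$ and counts each intermediate trajectory as contributing $1$.
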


\begin{proof}
Under our assumption that $P(\text{True}\neq z_k|\text{Assigned}=z_k)\le \epsilon$, the true context distribution of preferences within this training set $D^{z'_k}$ can be written as
\[
P'_{z_k}(z) = (1-\epsilon)\,\mathbb{I}\{z = z_k\} + \epsilon\,R(z),
\]
where $R(z)$ denotes an arbitrary distribution over contexts corresponding to misassigned preference data.

For any trajectory $\tau$, the Borda counts under the $D^{z'_k}$ are given respectively by
\begin{align}
    BC'_{z_k}(\tau) &= \frac{1}{|\mathcal T|}\sum_{\tau'}\mathbb E_{z\sim P'_{z_k}(z)}[O_u(\tau,\tau',z)],\\
    &=(1-\epsilon)BC_{z_k}(\tau) + \epsilon\,\tilde{BC}_{z_k}(\tau),
\end{align}
where 
\[
BC_{z_k}(\tau) = \frac{1}{|\mathcal T|}\sum_{\tau'} O_u(\tau,\tau',z_k),\qquad
\tilde{BC}_{z_k}(\tau) = \frac{1}{|\mathcal T|}\sum_{\tau'}\mathbb E_{z\sim R(z)}[O_u(\tau,\tau',z)]
\]  
Now consider two trajectories $\tau^i$ and $\tau^j$ with ranks $i>j$ in the true order, i.e. $BC_{z_k}(\tau^i) > BC_{z_k}(\tau^j)$.  
For convenience, let 
\[
\Delta_{ij} = BC_{z_k}(\tau^i) - BC_{z_k}(\tau^j) = \frac{1}{|\mathcal T|}[\sum_\tau O_u(\tau^i,\tau,z_k)-O_u(\tau^j,\tau,z_k)].
\]
Since $\tau^i$ is ranked at position $i$ and $\tau^j$ is ranked at position $j$, there are at least $i - j + 1$ trajectories whose utilities lie between them. Besides, at least one trajectories ($\tau^i$ / $\tau^j$) itself) share the same utility with $\tau^i$ / $\tau^j$. Therefore,
\[
\begin{aligned}
&\sum_{\tau} \big[ O_u(\tau^i, \tau, z_k) - O_u(\tau^j, \tau, z_k) \big]\\
\leq & [O_u(\tau^i, \tau^i, z_k)+0]+\sum_{j<l<i} \big[ O_u(\tau^i, \tau^l, z_k) - O_u(\tau^j, \tau^l, z_k) \big]+[O_u(\tau^i, \tau^j, z_k)-O_u(\tau^j, \tau^j, z_k)] = \frac{i - j}{|\mathcal T|}.
\end{aligned}
\]

Then, since $O_u(\cdot)\in[0,1]$, we have $0\le \tilde{BC}_{z_k}(\tau)\le 1$ and $0\le BC_{z_k}(\tau)\le 1$ for all $\tau$.
We can bound the noisy Borda counts as follows:
\[
\begin{aligned}
BC'_{z_k}(\tau^i) 
&= (1-\epsilon)BC_{z_k}(\tau^i) + \epsilon\,\tilde{BC}_{z_k}(\tau^i)
\ge (1-\epsilon)BC_{z_k}(\tau^i),\\[4pt]
BC'_{z_k}(\tau^j)
&= (1-\epsilon)BC_{z_k}(\tau^j) + \epsilon\,\tilde{BC}_{z_k}(\tau^j)
\le (1-\epsilon)BC_{z_k}(\tau^j) + \epsilon.
\end{aligned}
\]

Hence,
\[
BC'_{z_k}(\tau^i) - BC'_{z_k}(\tau^j)
\ge (1-\epsilon)\big(BC_{z_k}(\tau^i)-BC_{z_k}(\tau^j)\big) - \epsilon
\ge (1-\epsilon)\frac{i-j}{|\mathcal T|} - \epsilon.
\]

If $i-j > |\mathcal T|\frac{\epsilon}{1-\epsilon}$ satisfies, then 
\[
BC'_{z_k}(\tau^i) - BC'_{z_k}(\tau^j)>0
\]

Under this condition, the perturbed Borda order is consistent with the true order, implying that the learned model preserves the relative preference between $\tau^i$ and $\tau^j$ according to Lemma~\ref{thm:bc}:
\[
BC'_{z_k}(\tau^i) > BC'_{z_k}(\tau^j)
\;\Longrightarrow\;
\hat u(\tau^i,z_k) > \hat u(\tau^j,z_k).
\]
\end{proof}

\begin{theorem}\label{thm:divergence_gap}
Assume that under the hidden context $z_k$, the learned utility is approximated by Borda count.
Let $\epsilon\in[0,1]$ be an upper bound on the probability that a sample assigned to $z_k$ actually has a different true context satisfies
Under the MaxEnt framework with the entropy temperature $\alpha'$, let $p_{z_k}(\tau)\propto\exp\left(\frac{1}{\alpha'}BC(\tau)\right)$ denote the approximated optimal trajectory distribution for the utility $u(\cdot,z_k)$ under context $z_k$, and let $q_{z_k}(\tau)\propto\exp\left(\frac{1}{\alpha'}BC'(\tau)\right)$ denote the approximate distribution for the learned utility $\hat u(\cdot,z_k)$, where $BC(\tau)$ and $BC'(\tau)$ is the Borda count of $\tau$ under the $u(\cdot,z_k)$ and $\hat u(\cdot,z_k)$, respectively.
Then the Kullback--Leibler divergence admits the bound
\[
\mathrm{KL}\big(p_{z_k}\,\|\,q_{z_k}\big)\le \frac{\epsilon^2}{2\alpha'^2}.
\]
\end{theorem}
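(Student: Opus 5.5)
The plan is to reduce the KL divergence to a cumulant-generating-function gap and then apply Hoeffding's lemma. I reuse the decomposition from the proof of Theorem~\ref{thm:preserve_rank}. Write the true context distribution of the data assigned to $z_k$ as $P'_{z_k}(z)=(1-\epsilon_0)\mathbb{I}\{z=z_k\}+\epsilon_0 R(z)$, where $\epsilon_0\le\epsilon$ is the actual misassignment rate. The learned Borda count is then
\[
BC'(\tau)=(1-\epsilon_0)BC(\tau)+\epsilon_0\tilde{BC}(\tau).
\]
Hence the perturbation is $BC'(\tau)-BC(\tau)=\epsilon_0\,g(\tau)$ with $g(\tau):=\tilde{BC}(\tau)-BC(\tau)$. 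Since $O_u\in[0,1]$, both Borda counts lie in $[0,1]$, so $g(\tau)\in[-1,1]$ for every $\tau$. This boundedness is the only structural fact about the misassigned contexts that I need.

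Next I write out the KL divergence between the two Gibbs distributions. With $Z_p=\sum_\tau \exp(BC(\tau)/\alpha')$ and $Z_q=\sum_\tau\exp(BC'(\tau)/\alpha')$,
\[
\mathrm{KL}(p_{z_k}\|q_{z_k})=\mathbb{E}_{p}\Big[\tfrac{BC-BC'}{\alpha'}\Big]+\log\frac{Z_q}{Z_p}.
\]
The key observation is that $Z_q/Z_p=\mathbb{E}_{p}[\exp(h)]$, where $h(\tau):=(BC'(\tau)-BC(\tau))/\alpha'=\epsilon_0 g(\tau)/\alpha'$. Substituting gives
\[
\mathrm{KL}(p_{z_k}\|q_{z_k})=\log\mathbb{E}_p[e^{h}]-\mathbb{E}_p[h],
\]
which is exactly the centered log-moment-generating function of the bounded random variable $h(\tau)$ with $\tau\sim p_{z_k}$.

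Now $h$ takes values in $[-\epsilon_0/\alpha',\,\epsilon_0/\alpha']$, an interval of length $2\epsilon_0/\alpha'$. Hoeffding's lemma states that for a random variable $X\in[a,b]$, $\log\mathbb{E}e^{X}-\mathbb{E}X\le (b-a)^2/8$. Applying it here gives
\[
\mathrm{KL}\le \frac{(2\epsilon_0/\alpha')^2}{8}=\frac{\epsilon_0^2}{2\alpha'^2}\le\frac{\epsilon^2}{2\alpha'^2},
\]
with the last step using $\epsilon_0\le\epsilon$.

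The main step to get right is the rewrite of the partition-function ratio as an expectation under $p$. A looser route, such as bounding $\log(Z_q/Z_p)$ and $\mathbb{E}_p[BC-BC']$ separately, only gives a bound linear in $\epsilon/\alpha'$. The quadratic rate comes from Hoeffding's lemma applied to the centered quantity, and the tight range $[-1,1]$ of $g$ (rather than a cruder $[-2,2]$) is what delivers the constant $\tfrac12$.
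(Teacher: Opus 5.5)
Your proposal is correct and follows essentially the same route as the paper: the mixture decomposition $BC'=(1-\epsilon)BC+\epsilon\tilde{BC}$ gives a perturbation bounded by $\epsilon$; the identity $Z_q/Z_p=\mathbb{E}_p[e^{h}]$ turns the divergence into $\mathrm{KL}(p\|q)=\log\mathbb{E}_p[e^{h}]-\mathbb{E}_p[h]$; and Hoeffding's lemma on an interval of length $2\epsilon/\alpha'$ finishes the bound. Your separation of the actual misassignment rate $\epsilon_0$ from the bound $\epsilon$ is a small bit of extra care that the paper skips, and skipping it is harmless there because the arbitrary distribution $R$ can absorb the slack.
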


\begin{proof}


From the proof of Theorem~\ref{thm:preserve_rank}, for any trajectory $\tau$, the Borda counts under the $D^{z'_k}$ are given respectively by
\begin{align}
    BC'_{z_k}(\tau) &= \frac{1}{|\mathcal T|}\sum_{\tau'}\mathbb E_{z\sim P'_{z_k}(z)}[O_u(\tau,\tau',z)],\\
    &=(1-\epsilon)BC_{z_k}(\tau) + \epsilon\,\tilde{BC}_{z_k}(\tau),
\end{align}
where 
\[
BC_{z_k}(\tau) = \frac{1}{|\mathcal T|}\sum_{\tau'} O_u(\tau,\tau',z_k),\qquad
\tilde{BC}_{z_k}(\tau) = \frac{1}{|\mathcal T|}\sum_{\tau'}\mathbb E_{z\sim R(z)}[O_u(\tau,\tau',z)]
\]  
Let
\[
\Delta(\tau):=BC'_{z_k}(\tau)-BC_{z_k}(\tau)=\epsilon\big(\tilde{BC}_{z_k}(\tau)-BC_{z_k}(\tau)\big).
\]
Since each $O_u(\cdot)\in[0,1]$, both $BC_{z_k}(\tau)$ and $\tilde{BC}_{z_k}(\tau)$ lie in $[0,1]$, and hence
\[
|\Delta(\tau)|\le \epsilon\quad\text{for all }\tau.
\]

Under the maximum entropy RL framework, the optimal trajectory distribution induced by a reward function $r$ is given by
$p(\tau)=\exp\left(\frac{1}{\alpha'}u(\tau)\right)$~\cite{levine2018reinforcement}, where $u(\tau)$ denotes the utility (or cumulative reward) and $\alpha'$ is temperature parameter.
Since the trajectory-level utility learned via the Bradley-Terry loss does not admit a closed-form expression, we exploit the ranking equivalence between the learned utility and the Borda count to approximate the utility, i.e., $u(\tau,z)\approx BC_z(\tau)$.
Accordingly, we approximate the optimal trajectory distributions induced by the true utility $u(\cdot, z_k)$ and the learned utility $\hat u(\cdot, z_k)$ as follows:
\[
p(\tau)=\frac{\exp{\frac{1}{\alpha'}BC_{z_k}(\tau)}}{Z}, \qquad
q(\tau)=\frac{\exp{\frac{1}{\alpha'}BC'_{z_k}(\tau)}}{Z'},
\]
with partition functions
$Z=\sum_\tau \exp{\frac{1}{\alpha'}BC_{z_k}(\tau)}$
and
$Z'=\sum_\tau \exp{\frac{1}{\alpha'}BC'_{z_k}(\tau)}$.
It is worth noting that when the temperature parameter $\alpha'$ approaches $0$, the above approximation becomes exactly equivalent to the true optimal trajectory, since probability mass is assigned exclusively to trajectories that achieve the maximum utility, which coincide with those achieving the highest Borda count.

Compute the KL divergence:
\[
\begin{aligned}
\mathrm{KL}(p\|q)
&=\sum_{\tau} p(\tau)\log\frac{p(\tau)}{q(\tau)}
=\sum_{\tau} p(\tau)\Big(\frac{1}{\alpha'}BC_{z_k}(\tau)-\frac{1}{\alpha'}BC'_{z_k}(\tau)\Big)+\log\frac{Z'}{Z}\\
&=-\frac{1}{\alpha'}\mathbb{E}_p[\Delta]+\log\frac{Z'}{Z}.
\end{aligned}
\]
Noting that
\[
Z'=\sum_{\tau} \exp{\frac{BC'_{z_k}(\tau)}{\alpha'}}=\sum_{\tau} \exp{\frac{BC_{z_k}(\tau)}{\alpha'}}\exp{\frac{\Delta(\tau)}{\alpha'}}=Z\sum_\tau \frac{\exp{\frac{BC_{z_k}(\tau)}{\alpha'}}}{Z}\exp\frac{\Delta(\tau)}{\alpha'}=Z\;\mathbb{E}_p\big[\exp{\frac{\Delta(\tau)}{\alpha'}}\big],
\]
we have
\[
\log\frac{Z'}{Z}=\log\mathbb{E}_p\big[\exp{\frac{\Delta(\tau)}{\alpha'}}\big].
\]
Therefore the KL divergence can be written exactly as
\[
\mathrm{KL}(p\|q)=-\mathbb{E}_p[\frac{\Delta(\tau)}{\alpha'}]+\log\mathbb{E}_p\big[\exp{\frac{\Delta(\tau)}{\alpha'}}\big].
\]

Now after applying Hoeffding's lemma to the bounded random variable satisfying $\Delta(\tau)/\alpha'\in[-\epsilon/\alpha',\epsilon/\alpha']$ under distribution $p$,
we have
\[
\log\mathbb{E}_p\big[\exp{\frac{\Delta(\tau)}{\alpha'}}\big]\le \mathbb{E}_p[\frac{\Delta(\tau)}{\alpha'}]+\frac{(2\epsilon)^2}{8\alpha'^2}
=\mathbb{E}_p[\frac{\Delta(\tau)}{\alpha'}]+\frac{\epsilon^2}{2\alpha'^2}.
\]
Substituting this bound into the expression for the KL divergence yields
\[
\mathrm{KL}(p\|q)\le -\mathbb{E}_p[\frac{\Delta(\tau)}{\alpha'}]+\Big(\mathbb{E}_p[\frac{\Delta(\tau)}{\alpha'}]+\frac{\epsilon^2}{2\alpha'^2}\Big)
=\frac{\epsilon^2}{2\alpha'^2}.
\]

which completes the proof.
\end{proof}



\section{Implementation Details}\label{app:impl_details}

\subsection{Our Method}
Our code is available at \url{https://github.com/qianlin04/Implicit-Safety-Alignment-from-Crowd-Preferences}.

\paragraph{Skill Discovery}
Our implementation is based on the official VPL codebase\footnote{\url{https://github.com/WEIRDLabUW/vpl}}.
We consider two variants in this work.
The first variant follows the original VPL formulation: we first learn a latent-conditioned reward function $r_\phi(s,a,z')$ using the VPL losses in Eqs.~\ref{eq:vpl_loss},\ref{eq:vpl_reward_loss}, and then train a latent-conditioned low-level policy $\pi_\theta(a \mid s, z')$ via IQL~\cite{kostrikov2021offline}.
The second variant combines VAE-based loss in Eq.~\ref{eq:vpl_loss} with CPL (i.e., Eq.~\ref{eq:vpl_cpl_loss}), to directly learn the latent-conditioned policy $\pi_\theta(a \mid s, z')$.
However, we find that this variant jointly training the encoder $q_\psi(z' \mid S_z)$ and the policy $\pi_\theta(a \mid s, z')$ leads to unstable policy learning.
To address this issue, we first jointly train the encoder and the latent-conditioned reward using the standard VPL losses in Eqs.~\ref{eq:vpl_loss} and~\ref{eq:vpl_reward_loss}, following the first variant. 
We then discard the latent-conditioned reward, fix the encoder and update only the policy using the objectives in Eqs.~\ref{eq:vpl_loss} and~\ref{eq:vpl_cpl_loss}.
Unless otherwise specified, we adopt the default hyperparameters from the VPL and CPL implementations.
Hyperparameters that differ from the original implementations are summarized in Table~\ref{tab:hypermeter}.

\paragraph{Downstream Training}
We train the downstream policy using TD3 and TD3+BC for the online and offline downstream settings, respectively.
Our implementations are based on the official codebases\footnote{\url{https://github.com/sfujim/TD3}, \url{https://github.com/sfujim/TD3_BC}}.
To further prevent the exploitation of out-of-distribution skills, in addition to the prior regularization term, we explicitly constrain the output range of the high-level policy.
Specifically, we apply a $\tanh$ activation at the output of the policy network, yielding the high-level action
$$
a_h = A_{\max} \cdot \tanh(x),
$$
where $x$ denotes the output of the final layer of the policy network, and $A_{\max}$ is a manually specified bound.
The resulting latent variable is constructed as
$$
z' = \text{Mean}(p(z')) + \text{Std}(p(z')) \cdot a_h,
$$
which is then used as the input to the low-level policy $\pi_\theta(a \mid s, z')$ to produce the final environment action.
The specific hyperparameters used for downstream training are summarized in Table~\ref{tab:hypermeter}.

\subsection{Baselines}
TD3 and TD3+BC also serve as the backbone algorithms for both the Task-only and RC baselines.
These baselines directly learn action-level policies that output environment actions, whereas our approach optimizes a high-level policy while keeping the low-level policy fixed.
The Task-only baseline optimizes only the downstream task reward, consistent with our objective, while the RC baseline optimizes a fused reward that combines the downstream task reward with a single reward learned from the crowd preference dataset.
In addition, we adopt CDT as the oracle baseline under the offline downstream setting, implemented based on the OSRL framework\footnote{\url{https://github.com/liuzuxin/OSRL}}.

\paragraph{Constraint Decision Transformer (CDT)} CDT has access to real safety reward, and solves the following constrained optimization problem:
$$
\mathbb{E}_{\mathcal{D}_\tau}[r_{\text{new}}(s_t,a_t)] \ \mathrm{s.t.} \ 
\mathbb{E}_{\mathcal{D}_\tau}[r_{\text{share}}(s_t,a_t)] = 0.
$$
Specifically, CDT employs a transformer-based policy trained via trajectory cloning:
$$
\max_\pi \ \mathbb{E}_{\tau \sim \mathcal{D}_\tau}
\left[ \sum_{t=1}^T \log \pi(a_t \mid s_t, G_t^r, G_t^c) \right],
$$
where $G_t^r$ denotes the return-to-go from step $t$ to $T$ (i.e., $G_t^r=\sum_{i=t}^T r_{\text{new},i}$), and $G_t^c$ is a randomly sampled constraint threshold that upper-bounds the cumulative cost from $t$ to $T$ (i.e., $G_t^c\geq\sum_{i=t}^T r_{\text{share},i}$).
During evaluation, we set $G_t^c = 0$ to enforce constraint satisfaction, and set $G_t^r$ to the task performance achieved by the oracle (SAC trained with full reward).

\paragraph{Reward Combine (RC)} For the RC baselines, we first learn a single reward model $r_\phi(s,a)$ from the crowd preference dataset using Eq.~\ref{eq:return_bt_loss}, and then combine it with the new-task reward $r_{\text{new}}(s,a)$ to form the downstream objective.
We observe that the scale of $r_\phi(s,a)$ is often significantly larger than that of $r_{\text{new}}(s,a)$.
To mitigate this issue, we adopt two techniques:
(1) we add an $L_2$ regularization term on the reward outputs in the Bradley--Terry loss when training $r_\phi$:
$$
L_\phi = \mathbb{E}_{(\tau^1,\tau^2) \sim \mathcal{D}_{\text{pref}}}
\big[
y \cdot \log P(y=1 \mid \tau^1, \tau^2)
+ (1-y) \cdot \log P(y=0 \mid \tau^1, \tau^2)
+ \kappa \cdot L_2(r_\phi)
\big],
$$
where $L_2(r_\phi)$ denotes the $\ell_2$ norm of the predicted rewards over $\tau^1$ and $\tau^2$;
(2) we normalize $r_\phi$ over the entire offline dataset $\mathcal{D}_\tau$ by subtracting the mean and dividing by the standard deviation.

\begin{table}[t]
\centering
\caption{Some hyperparameters of our methods and baselines.}
\label{tab:hypermeter}
\begin{tabular}{p{0.5\linewidth} p{0.5\linewidth}}
\toprule
\textbf{Parameter} & \textbf{Setting} \\
\midrule
Latent variable dimension 
& 4 \\

Expectile for IQL
& 0.9 for Reach, and 0.75 for all other environments \\

Bias regularizer $\lambda$ for CPL ($\alpha$)  
& 1.0 for Reach and Run, and 0.5 for all other environments \\

Bound for $\pi_h (A_{\max})$
& 3.0 \\

Prior regularization coefficient $\beta_{\text{reg}}$ in Eqs.~\ref{eq:online_downstream},\ref{eq:offline_downstream} 
& 1.0 in offline settings, and 0.01 in online settings \\

Behavior cloning weight in TD3+BC ($\beta_{\text{BC}}$) 
& Tuned over $\{1.0, 20.0, 200.0, 2000.0\}$ for each environment \\

Regularization coefficient for RC ($\kappa$)
& 0.0005 \\

\bottomrule
\end{tabular}
\end{table}

\section{Safe RL Experiment Details}\label{app:exp_details}

\begin{figure*}[t]
    \centering
    \includegraphics[width=1.0\textwidth]{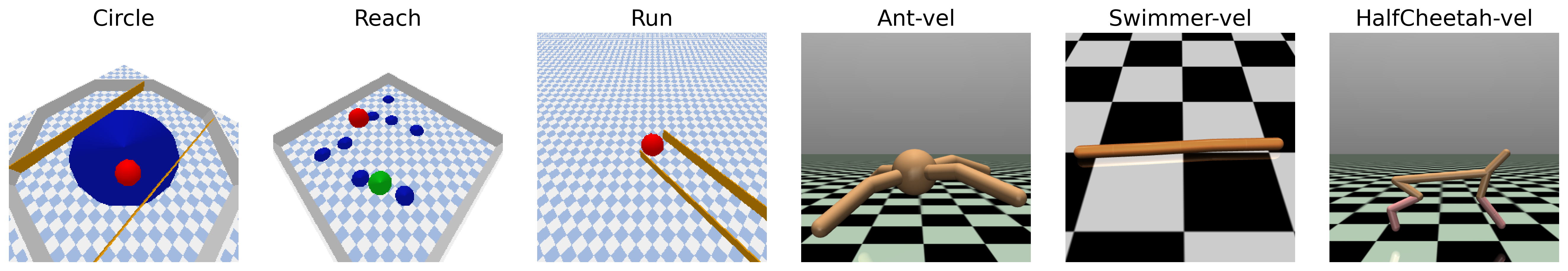}
    \caption{Visualization of the six downstream environments considered in this work. In the left three panels, the red dot denotes the agent; wall constraints are present in the first and third panels, while the blue region in the second panel indicates a risky area.
    }
    \label{fig:env_examples}
\end{figure*}

\subsection{Environment}
The Reach, Run, and Circle environments are adapted from Bullet-Safety~\cite{Gronauer2022BulletSafetyGym}, where a ball agent is used to perform different tasks.
The Ant-vel, Swimmer-vel, and HalfCheetah-vel are adapted from Safety-Gym~\cite{Gronauer2022BulletSafetyGym}, where Ant, Swimmer, and HalfCheetah agents perform direction-based tasks under velocity constraints.

\paragraph{Reach}
At the beginning of each episode, six hazardous regions are randomly generated.
The agent is required to navigate to a target location while avoiding hazardous regions.
The safety reward and a set of goal-specific rewards are defined as
\begin{align}
    r_{\text{share}} &= -K \cdot \mathbb{I}(\text{distance to any risk area} < \delta_1) \\
    r_{\text{goal}} &= \mathbb{I}(\| s - s_{\text{target}} \|_2^2 < \delta_2),
\end{align}
where $s$ denotes the agent’s current position and $s_{\text{target}}$ denotes an target location associated with different goal-specific rewards.
Notably, $s_{\text{target}}$ is not included in the agent’s state representation; thus, the agent can only infer target information implicitly through the reward signal.
During annotation, we consider the four corner locations of the square map as targets, and use the corresponding goal-specific rewards $r_{\text{goal}}$ to induce crowd variation, i.e., the user-specific component $r_{\text{user}}$. 
The final annotation reward is then given by the combination $r_{\text{user}} + r_{\text{shared}}$.
Such an option is motivated by the observation that behaviors for reaching arbitrary locations in downstream tasks can be composed from those for reaching the corner targets.
In the downstream stage, the shared safety reward $r_{\text{shared}}$ is not available; instead, we have access only to goal-specific rewards associated with arbitrary target locations, corresponding to a continuum of downstream tasks, and report performance averaged across these tasks.

\paragraph{Run}
This environment contains two passable walls located at $y=0.5$ and $y=-0.5$.
The agent is required to move along the $x$-axis at a target speed $v_{\text{target}} \in [0, 5.0]$.
The safety reward and a set of goal-specific rewards are defined as
\begin{align}
    r_{\text{share}} &= -K \cdot \mathbb{I}(|s_y| > 0.5) \\
    r_{\text{goal}} &= \exp(-0.5 |v_x - v_{\text{target}}|),
\end{align}
where $s$ and $v$ denote the agent’s position and velocity, respectively.
During annotation, preference labels are generated using annotation rewards composed of the shared safety reward and two target speeds, $v_{\text{target}} \in {0, 5.0}$.
In the downstream stage, we use goal-specific rewards with arbitrary target speeds $v_{\text{target}} \in [0, 5.0]$.

\paragraph{Circle}
Similar to Run, the environment contains two passable walls located at $x = 6.0$ and $y = -6.0$.
The agent is required to move at a target tangential speed $v_{\text{target}} \in [0, 6.0]$ while rotating around the origin $(0,0)$.
The safety reward and a set of goal-specific rewards are defined as
\begin{align}
    r_{\text{share}} &= -K \cdot \mathbb{I}(|x| > 6.0) \\
    r_{\text{goal}} &= -\exp(-0.5 |v_{\text{tangent}} - v_{\text{target}}|) 
    \big/ \left(1 + \left| \|s\|_2^2 - 7.0 \right| \right),
\end{align}
where $s$ denotes the agent’s position and $v_{\text{tangent}}$ is the tangential velocity around the origin.
During annotation, preference labels are generated using two target speeds, $v_{\text{target}} = 0$ and $v_{\text{target}} = 6.0$.
In the downstream stage, we consider arbitrary target speed $v_{\text{target}} \in [0, 6.0]$.

\paragraph{Ant-vel, Swimmer-vel, and HalfCheetah-vel}
In these environments, Ant, Swimmer, and HalfCheetah agents are required to move as fast as possible along a target direction
$u = (\cos \theta_{\text{target}}, \sin \theta_{\text{target}})$, while ensuring that the overall speed does not exceed a threshold $\delta_v$.
The reward functions are defined as
\begin{align}
    r_{\text{share}} &= -K \cdot \mathbb{I}(\|v\| > \delta_v) \\
    r_{\text{goal}} &= v_x u_x + v_y u_y,
\end{align}
where $v$ denotes the agent’s velocity.
For Ant and Swimmer, $\theta_{\text{target}} \in \{0, \frac{\pi}{2}, \pi, \frac{3\pi}{2}\}$ combined with $r_{\text{share}}$ are used for annotation, and goal-specific reward $r_{\text{goal}}$ with $\theta_{\text{target}} = i \cdot 2\pi / 40$ for $i = 0, 1, \dots, 39$ are used as downstream task rewards.
Due to structural constraints, HalfCheetah can only move forward or backward, resulting in two feasible direction $\theta_{\text{target}} = 0$ and $\theta_{\text{target}} = \pi$, both of which are used for annotation and downstream training.

\paragraph{Discussion}
In the Reach and Run environments, the shared objective and task objective are largely non-conflicting, leading to similar task performance with or without safety constraints.
In contrast, in Circle and velocity-based tasks, the shared and task objectives are inherently conflicting, resulting in a larger task performance gap between safe agents and unsafe agents.
We consider environments where user-specific preferences are defined by target velocity, goal, or direction, and safety is implicitly encoded through region or velocity constraints, covering most standard Safe RL benchmarks. 
We focus on tasks of moderate difficulty, where all methods can achieve high task performance, enabling clear evaluation of safety. 
In more challenging tasks, where all methods struggle, it becomes difficult to isolate the impact of safety due to the lack of a reliable performance reference on task completion.

\subsection{Offline Data Generation}
To construct the crowd preference datasets, we first collect trajectories to form an offline dataset $\mathcal D_\tau$.
We train agents using SAC for each goal-specific reward.
Training is conducted for a total of $N_\tau$ steps: during the first $N_\tau/2$ steps, the agent optimizes only the task reward $r_{\text{user}}$, while during the remaining $N_\tau/2$ steps it optimizes the full reward $r_{\text{share}} + r_{\text{user}}$.
The resulting replay buffer of size $N_{\tau}$ constitutes the offline dataset $\mathcal D_\tau$.
This procedure ensures that $\mathcal D_\tau$ contains a mixture of unsafe and safe trajectory from all goals.

Besides, for all environments, we use a penalty coefficient $K = 10.0$, which allows the agent to achieve good task performance while maintaining near-zero cost across tasks.
Although using a larger $K$ is possible, we find that it significantly harms exploration.

\subsection{Preference Generation}
In the main experiments, we adopt a\textit{ balanced preference dataset}, where each task contributes an equal amount of preference data.
Specifically, we sample $N_{\text{pref}}$ trajectory-pair sets from the offline dataset $\mathcal D_\tau$.
Each set contains $S$ trajectory segment pairs, with each segment having length $L$.
For each of the $M$ goal-specific rewards used for annotation (corresponding to different user-specific rewards $r_{\text{user}}$), preference labels are generated for all trajectory sets using the combined reward $r_{\text{share}} + r_{\text{user}}$, resulting in a total of $N_{\text{pref}} \times S \times M$ preference sets.\footnote{We adopt this dense annotation setting in the main experiments. In Section~\ref{sec:dense_and_sparse_annotation}, we further consider a sparse annotation setting where trajectory sets are generated independently for each annotation reward.}
Although the annotation reward underlying each preference set are unobserved, we know all preference labels within the same set are generated under the same annotation reward.
In Section~\ref{sec:balanced_vs_imbalanced}, we additionally consider \textit{imbalanced preference settings}, where preferences associated with one goal-specific reward dominate the dataset: after sampling $N_{\text{pref}}$ trajectory-pair sets, all sets are annotated under a single major annotation reward, whereas each remaining reward annotates only $0.1N_{\text{pref}}$ sets.

In addition, we consider a regret-based preference model, which requires optimal advantages rather than rewards to generate preference labels.
Following CPL~\cite{hejna2024contrastive}, we add $\mathcal D_\tau$ to the replay buffer and re-training SAC policy using the full reward $r_{\text{share}} + r_{\text{user}}$ until convergence.
The value function during training is then used to compute optimal advantages for preference label generation.

Table~\ref{tab:offline_pref_params} summarizes the hyperparameters used for offline data generation and preference generation.

\begin{table}[htbp]
\centering
\caption{Hyperparameters for Offline Data Generation and Preference Generation.}
\label{tab:offline_pref_params}
\begin{tabular}{p{0.5\linewidth} p{0.5\linewidth}}
\toprule
\textbf{Parameter} & \textbf{Setting} \\
\midrule
SAC steps ($N_\tau$) 
& 5M for Ant-vel, and 2M for others \\

Penalty coefficient ($K$) 
& 10.0 \\

Number of $r_{\text{user}}$ ($M$)
& 2 for Run, Circle and HalfCheetah, 4 for others\\

Number of preference sets ($N_{\text{pref}}$) (balanced) 
& 5000 from each task \\

Number of preference sets ($N_{\text{pref}}$) (imbalanced) 
& 10000 from a major task, and 1000 from other tasks \\

Context set size ($S$) 
& 16 \\

Segment length ($L$) 
& 16 for Reach, and 64 for others. \\

\bottomrule
\end{tabular}
\end{table}

\section{Additional Experimental Results}\label{app:additional_results}

\subsection{Performance under Online Downstream Settings }\label{sec:online_downstream_results}

In this section, we evaluate our methods under the online downstream setting, where the high-level policy is trained according to Eq.~\ref{eq:online_downstream} using samples collected through online interaction with the environment.
Table~\ref{tb:norm_data_online} reports the quantitative task and cost performance of different methods, while Figure~\ref{fig:online_fusion_front_num6} visualizes their performance fronts.
Overall, the observed results are consistent with those in the offline downstream setting.
Finally, Figure~\ref{fig:online_training} shows performance over the first 0.5M online interaction steps, demonstrating that our method achieves substantially higher sample efficiency compared to online training from scratch.
We attribute this improvement to the reuse of preference-aligned skills learned offline, which allows the high-level policy to focus on composing existing behaviors rather than relearning low-level behaviors through online exploration. 
As a result, our approach reaches strong performance with significantly fewer online interactions, suggesting that our approach is well suited to offline-to-online RL scenarios with limited interaction budgets.

\begin{table*}[htbp]
    \centering 
    \caption{Normalized performance under online downstream settings. $\overline{\mathrm{RC}}$ is the RC's performance 
    averaged on all $\omega\in\{0.1,0.2,...,1.0\}$}\label{tb:norm_data_online}
    \small
    \begin{tabular}{llcccccc}
    \toprule    
    Environment & Stats & Oracle & Task-Only & SOPL & \makecell{$\mathrm{RC} (\omega=0.5)$} & Safe-VPL & Safe-CPL \\ \midrule
\multirow{2}{*}{Reach} & Norm Rew & 1.00 & 0.96 {\scriptsize $\pm$ .01} & 0.76 {\scriptsize $\pm$ .01} & 0.73 {\scriptsize $\pm$ .01} & 0.91 {\scriptsize $\pm$ .01} & 0.90 {\scriptsize $\pm$ .03} \\
& Norm Cost & 0.033 & 1.000 {\scriptsize $\pm$ .02} & 0.083 {\scriptsize $\pm$ .01} & 0.081 {\scriptsize $\pm$ .02} & 0.124 {\scriptsize $\pm$ .03} & 0.074 {\scriptsize $\pm$ .07} \\
\midrule
\multirow{2}{*}{Run} & Norm Rew & 1.00 & 1.00 {\scriptsize $\pm$ .00} & 1.00 {\scriptsize $\pm$ .00} & 1.00 {\scriptsize $\pm$ .00} & 1.00 {\scriptsize $\pm$ .00} & 0.99 {\scriptsize $\pm$ .01} \\
& Norm Cost & 0.000 & 1.000 {\scriptsize $\pm$ .50} & 0.000 {\scriptsize $\pm$ .00} & 0.000 {\scriptsize $\pm$ .00} & 0.000 {\scriptsize $\pm$ .00} & 0.000 {\scriptsize $\pm$ .00} \\
\midrule
\multirow{2}{*}{Circle} & Norm Rew & 1.00 & 1.28 {\scriptsize $\pm$ .01} & 1.08 {\scriptsize $\pm$ .01} & 1.07 {\scriptsize $\pm$ .00} & 0.97 {\scriptsize $\pm$ .03} & 0.94 {\scriptsize $\pm$ .08} \\
& Norm Cost & 0.000 & 1.000 {\scriptsize $\pm$ .02} & 0.000 {\scriptsize $\pm$ .00} & 0.000 {\scriptsize $\pm$ .00} & 0.000 {\scriptsize $\pm$ .00} & 0.000 {\scriptsize $\pm$ .00} \\
\midrule
\multirow{2}{*}{Ant-vel} & Norm Rew & 1.00 & 1.11 {\scriptsize $\pm$ .11} & 0.84 {\scriptsize $\pm$ .07} & 0.90 {\scriptsize $\pm$ .01} & 0.94 {\scriptsize $\pm$ .03} & 0.88 {\scriptsize $\pm$ .07} \\
& Norm Cost & 0.000 & 1.000 {\scriptsize $\pm$ .00} & 0.039 {\scriptsize $\pm$ .02} & 0.100 {\scriptsize $\pm$ .14} & 0.001 {\scriptsize $\pm$ .00} & 0.028 {\scriptsize $\pm$ .01} \\
\midrule
\multirow{2}{*}{Swimmer-vel} & Norm Rew & 1.00 & 1.65 {\scriptsize $\pm$ .18} & 1.52 {\scriptsize $\pm$ .09} & 0.81 {\scriptsize $\pm$ .07} & 0.92 {\scriptsize $\pm$ .03} & 1.00 {\scriptsize $\pm$ .13} \\
& Norm Cost & 0.000 & 1.000 {\scriptsize $\pm$ .17} & 0.009 {\scriptsize $\pm$ .01} & 0.000 {\scriptsize $\pm$ .00} & 0.001 {\scriptsize $\pm$ .00} & 0.022 {\scriptsize $\pm$ .02} \\
\midrule
\multirow{2}{*}{HalfCheetah-vel} & Norm Rew & 1.00 & 2.80 {\scriptsize $\pm$ .29} & 0.90 {\scriptsize $\pm$ .07} & 0.82 {\scriptsize $\pm$ .10} & 0.96 {\scriptsize $\pm$ .02} & 0.89 {\scriptsize $\pm$ .04} \\
& Norm Cost & 0.000 & 1.000 {\scriptsize $\pm$ .00} & 0.007 {\scriptsize $\pm$ .01} & 0.076 {\scriptsize $\pm$ .06} & 0.004 {\scriptsize $\pm$ .00} & 0.003 {\scriptsize $\pm$ .00} \\
\midrule
\specialrule{1.2pt}{0pt}{0pt}
\multirow{2}{*}{Average} & Norm Rew & 1.00 & 1.47 & 0.95 & 0.82 & 0.95 & 0.93 \\
& Norm Cost & 0.01 & 1.00 & 0.02 & 0.04 & 0.02 & 0.02 \\
\bottomrule

    \end{tabular}
    
    \end{table*}

\begin{figure*}[t]
    \centering
    \includegraphics[width=1.0\textwidth]{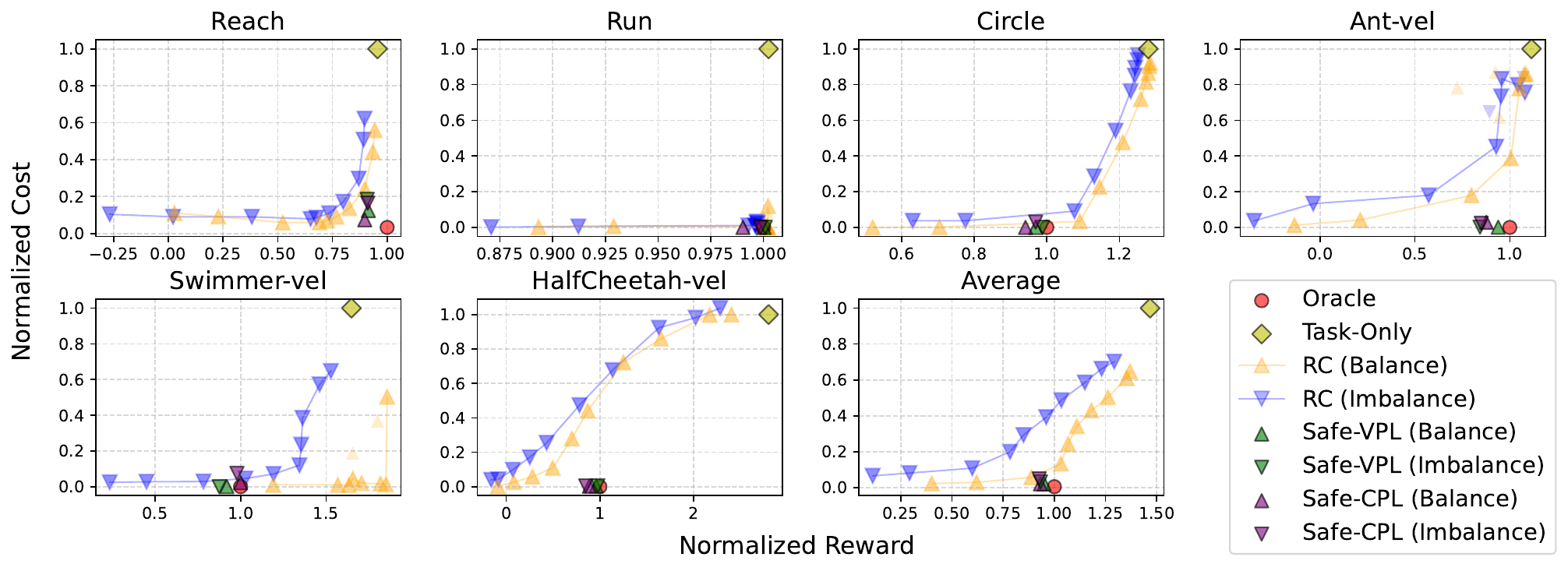}
    \caption{
    Performance fronts of different algorithms and RC with various trade-off weights $\omega$ using balanced and imbalanced crowd preference dataset under the online downstream setting.
    }
    \label{fig:online_fusion_front_num6}
\end{figure*}

\begin{figure*}[t]
    \centering
    \includegraphics[width=1.0\textwidth]{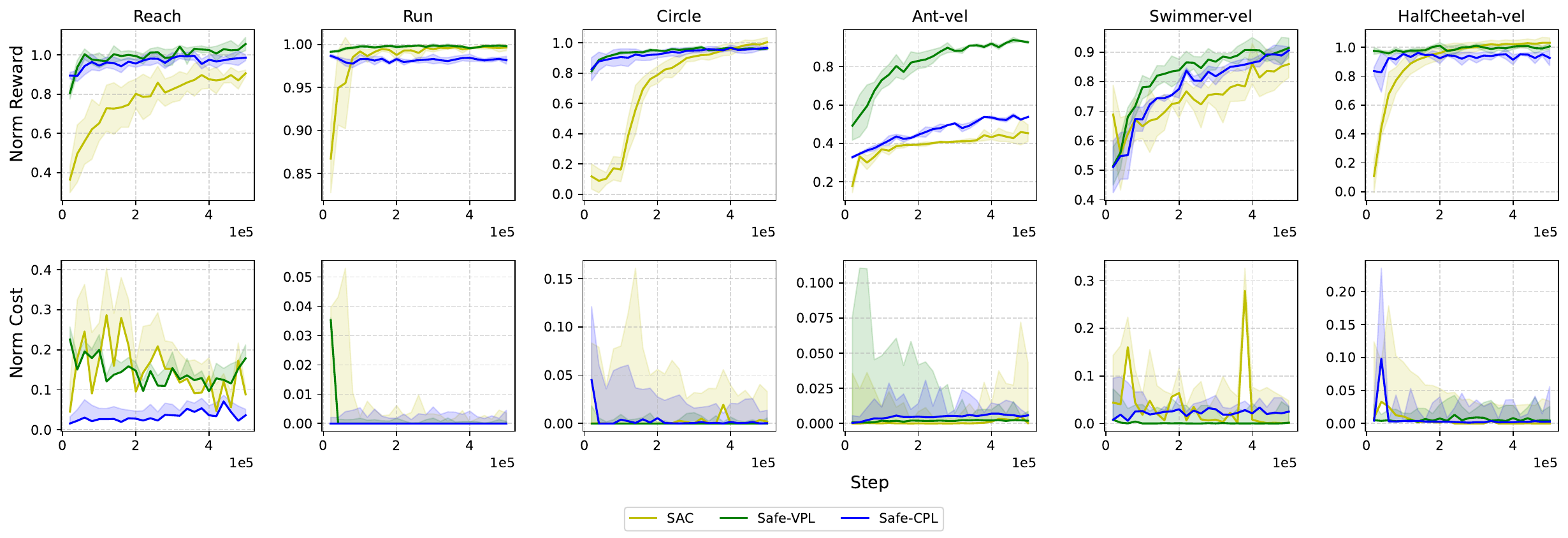}
    \caption{Performance curves of different algorithms during the first 0.5M training steps under online downstream settings.}
    \label{fig:online_training}
\end{figure*}

\subsection{Raw Data in Main Comparison}\label{app:raw_results}
We present the raw performance data under the offline and online downstream settings in Table~\ref{tb:raw_data_offline} and Table~\ref{tb:raw_data_online}, respectively.
The results also include the random policy, which is used to compute the normalized reward and cost.

 \begin{table*}[htbp]
    \centering
    \caption{Raw task and safety performance under offline downstream settings}\label{tb:raw_data_offline}
    \small
    \begin{tabular}{llccccccc}
    \toprule
    Env & Stats & Random & CDT & Task-Only & SOPL & \makecell{$\mathrm{RC} (\omega=0.5)$} & Safe-VPL & Safe-CPL \\ \midrule
\multirow{2}{*}{Reach} & R & -4.5 & 18.8 & 19.7 {\scriptsize $\pm$ .1} & 18.3 {\scriptsize $\pm$ .6} & 14.8 {\scriptsize $\pm$ .2} & 18.3 {\scriptsize $\pm$ .8} & 18.2 {\scriptsize $\pm$ 1.0} \\
& C & 7.0 & 0.1 & 2.2 {\scriptsize $\pm$ .0} & 0.1 {\scriptsize $\pm$ .0} & 0.2 {\scriptsize $\pm$ .0} & 0.4 {\scriptsize $\pm$ .1} & 0.2 {\scriptsize $\pm$ .1} \\
\midrule
\multirow{2}{*}{Run} & R & -151.7 & 94.7 & 93.7 {\scriptsize $\pm$ .1} & 91.8 {\scriptsize $\pm$ .4} & 93.5 {\scriptsize $\pm$ .7} & 82.1 {\scriptsize $\pm$ 2.1} & 86.1 {\scriptsize $\pm$ 1.7} \\
& C & 45.6 & 0.0 & 67.4 {\scriptsize $\pm$ 5.5} & 0.0 {\scriptsize $\pm$ .0} & 0.0 {\scriptsize $\pm$ .0} & 0.0 {\scriptsize $\pm$ .0} & 0.0 {\scriptsize $\pm$ .0} \\
\midrule
\multirow{2}{*}{Circle} & R & 19.6 & 149.8 & 185.5 {\scriptsize $\pm$ 2.4} & 155.7 {\scriptsize $\pm$ 1.7} & 161.1 {\scriptsize $\pm$ .5} & 136.9 {\scriptsize $\pm$ 1.0} & 137.6 {\scriptsize $\pm$ 5.3} \\
& C & 63.1 & 0.0 & 63.0 {\scriptsize $\pm$ 1.2} & 0.0 {\scriptsize $\pm$ .0} & 0.0 {\scriptsize $\pm$ .0} & 0.0 {\scriptsize $\pm$ .0} & 0.0 {\scriptsize $\pm$ .1} \\
\midrule
\multirow{2}{*}{Ant-vel} & R & -65.7 & 2306.8 & 2860.7 {\scriptsize $\pm$ 42.3} & 2259.4 {\scriptsize $\pm$ 27.3} & 1753.4 {\scriptsize $\pm$ 466.3} & 2058.0 {\scriptsize $\pm$ 36.1} & 1746.9 {\scriptsize $\pm$ 228.2} \\
& C & 0.7 & 0.8 & 488.1 {\scriptsize $\pm$ 3.4} & 3.3 {\scriptsize $\pm$ 1.2} & 38.2 {\scriptsize $\pm$ 58.7} & 0.6 {\scriptsize $\pm$ .1} & 3.5 {\scriptsize $\pm$ 1.9} \\
\midrule
\multirow{2}{*}{Swimmer-vel} & R & 1.5 & 42.7 & 100.0 {\scriptsize $\pm$ 1.5} & 56.6 {\scriptsize $\pm$ 1.4} & 35.8 {\scriptsize $\pm$ 3.1} & 37.7 {\scriptsize $\pm$ .3} & 42.4 {\scriptsize $\pm$ 1.5} \\
& C & 63.2 & 0.0 & 27.4 {\scriptsize $\pm$ 2.4} & 0.4 {\scriptsize $\pm$ .2} & 0.0 {\scriptsize $\pm$ .0} & 0.0 {\scriptsize $\pm$ .0} & 0.1 {\scriptsize $\pm$ .1} \\
\midrule
\multirow{2}{*}{\small HalfCheetah-vel} & R & -210.2 & 2688.7 & 5166.3 {\scriptsize $\pm$ 1053.2} & 2477.3 {\scriptsize $\pm$ .0} & 1060.1 {\scriptsize $\pm$ 317.0} & 2581.5 {\scriptsize $\pm$ 59.6} & 2452.5 {\scriptsize $\pm$ 6.5} \\
& C & 0.1 & 0.0 & 696.3 {\scriptsize $\pm$ 134.9} & 9.5 {\scriptsize $\pm$ .0} & 74.2 {\scriptsize $\pm$ 57.4} & 2.7 {\scriptsize $\pm$ 1.6} & 12.3 {\scriptsize $\pm$ 12.2} \\
\midrule

    \end{tabular}
     
    \end{table*}

\begin{table*}[htbp]
    \centering
    \caption{Raw task and safety performance under online downstream settings}\label{tb:raw_data_online}
    \small
    \begin{tabular}{llccccccc}
    \toprule
    Env & Stats & Random & SAC & Task-Only & SOPL & \makecell{$\mathrm{RC} (\omega=0.5)$} & Safe-VPL & Safe-CPL \\ \midrule
\multirow{2}{*}{Reach} & R & -4.5 & 21.8 & 20.7 {\scriptsize $\pm$ .1} & 15.4 {\scriptsize $\pm$ .2} & 14.8 {\scriptsize $\pm$ .2} & 19.6 {\scriptsize $\pm$ .2} & 19.1 {\scriptsize $\pm$ .8} \\
& C & 7.0 & 0.1 & 2.7 {\scriptsize $\pm$ .0} & 0.2 {\scriptsize $\pm$ .0} & 0.2 {\scriptsize $\pm$ .0} & 0.3 {\scriptsize $\pm$ .1} & 0.2 {\scriptsize $\pm$ .2} \\
\midrule
\multirow{2}{*}{Run} & R & -151.7 & 93.9 & 94.5 {\scriptsize $\pm$ .0} & 94.3 {\scriptsize $\pm$ .1} & 93.5 {\scriptsize $\pm$ .7} & 94.0 {\scriptsize $\pm$ .6} & 91.5 {\scriptsize $\pm$ 1.8} \\
& C & 45.6 & 0.0 & 26.0 {\scriptsize $\pm$ 12.9} & 0.0 {\scriptsize $\pm$ .0} & 0.0 {\scriptsize $\pm$ .0} & 0.0 {\scriptsize $\pm$ .0} & 0.0 {\scriptsize $\pm$ .0} \\
\midrule
\multirow{2}{*}{Circle} & R & 19.6 & 151.4 & 188.3 {\scriptsize $\pm$ .6} & 162.3 {\scriptsize $\pm$ 1.4} & 161.1 {\scriptsize $\pm$ .5} & 147.3 {\scriptsize $\pm$ 3.9} & 143.7 {\scriptsize $\pm$ 1.8} \\
& C & 63.1 & 0.0 & 66.1 {\scriptsize $\pm$ 1.1} & 0.0 {\scriptsize $\pm$ .0} & 0.0 {\scriptsize $\pm$ .0} & 0.0 {\scriptsize $\pm$ .0} & 0.0 {\scriptsize $\pm$ .0} \\
\midrule
\multirow{2}{*}{Ant-vel} & R & -65.7 & 2317.5 & 2590.1 {\scriptsize $\pm$ 264.0} & 1938.5 {\scriptsize $\pm$ 175.5} & 2091.0 {\scriptsize $\pm$ 33.5} & 2171.7 {\scriptsize $\pm$ 61.4} & 2023.3 {\scriptsize $\pm$ 171.9} \\
& C & 0.7 & 0.2 & 523.2 {\scriptsize $\pm$ .2} & 20.4 {\scriptsize $\pm$ 9.8} & 52.3 {\scriptsize $\pm$ 71.1} & 0.6 {\scriptsize $\pm$ .1} & 14.9 {\scriptsize $\pm$ 7.4} \\
\midrule
\multirow{2}{*}{Swimmer-vel} & R & 1.5 & 43.8 & 71.3 {\scriptsize $\pm$ 7.8} & 65.8 {\scriptsize $\pm$ 4.0} & 35.8 {\scriptsize $\pm$ 3.1} & 40.4 {\scriptsize $\pm$ 1.3} & 43.9 {\scriptsize $\pm$ 5.4} \\
& C & 63.2 & 0.0 & 21.9 {\scriptsize $\pm$ 3.7} & 0.2 {\scriptsize $\pm$ .2} & 0.0 {\scriptsize $\pm$ .0} & 0.0 {\scriptsize $\pm$ .0} & 0.5 {\scriptsize $\pm$ .5} \\
\midrule
\multirow{2}{*}{HalfCheetah-vel} & R & -210.2 & 2806.3 & 8245.2 {\scriptsize $\pm$ 889.3} & 2525.7 {\scriptsize $\pm$ 224.7} & 2301.2 {\scriptsize $\pm$ 317.0} & 2687.8 {\scriptsize $\pm$ 6.9} & 2465.7 {\scriptsize $\pm$ 129.6} \\
& C & 0.1 & 0.0 & 980.8 {\scriptsize $\pm$ .6} & 7.3 {\scriptsize $\pm$ 7.3} & 74.2 {\scriptsize $\pm$ 57.4} & 3.6 {\scriptsize $\pm$ 2.1} & 2.5 {\scriptsize $\pm$ 2.4} \\
\midrule

    \end{tabular}
     
    \end{table*}

\subsection{Ablation Results on Additional Environments}
\label{sec:ablation_on_more_env}

This section presents ablation results on the regularization weight $\beta_{\text{reg}}$, preference noise ratios, and crowd size across additional environments; see Figures~\ref{fig:ablation_regularization_weight}, \ref{fig:ablation_noise} and \ref{fig:ablation_usernum}, which is consistent with the conclusions in Section~\ref{sec:experiment}.

\begin{figure*}[t]
    \centering
    \includegraphics[width=1.0\textwidth]{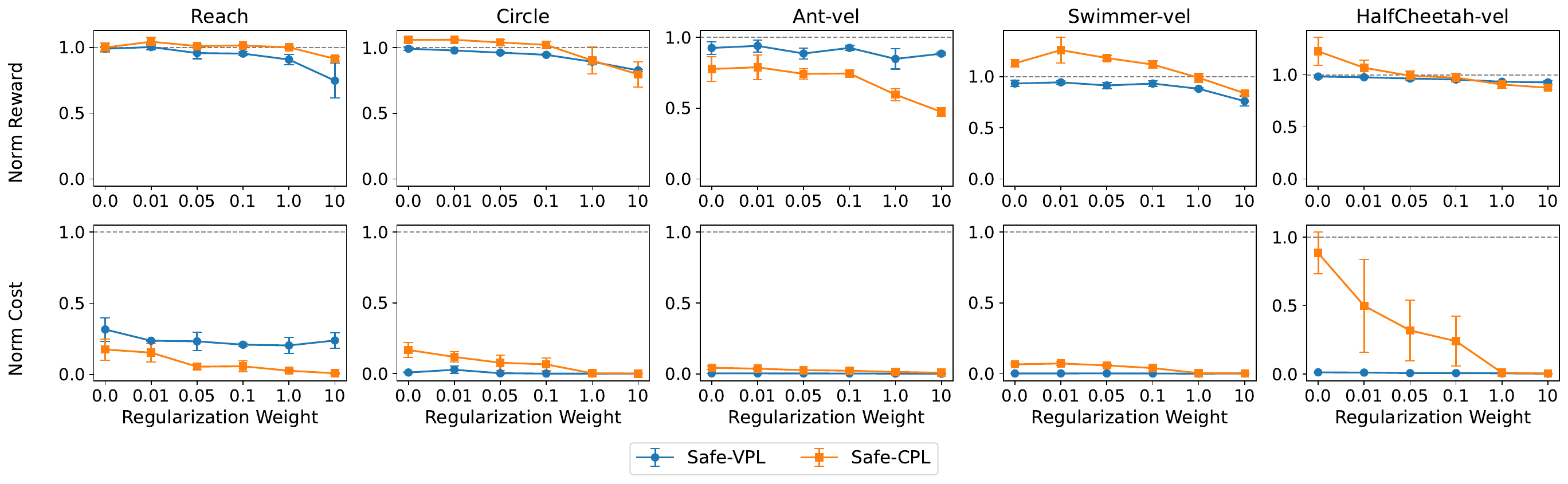}
    \caption{Performance of our methods under different prior regularization weights $\beta_{\mathrm{reg}}\in[0.0, 0.01, 0.05, 0.1, 1.0, 10.0]$ in Eq.~\ref{eq:online_downstream}.}
    \label{fig:ablation_regularization_weight}
\end{figure*}

\begin{figure*}[t]
    \centering
    \includegraphics[width=0.5\textwidth]{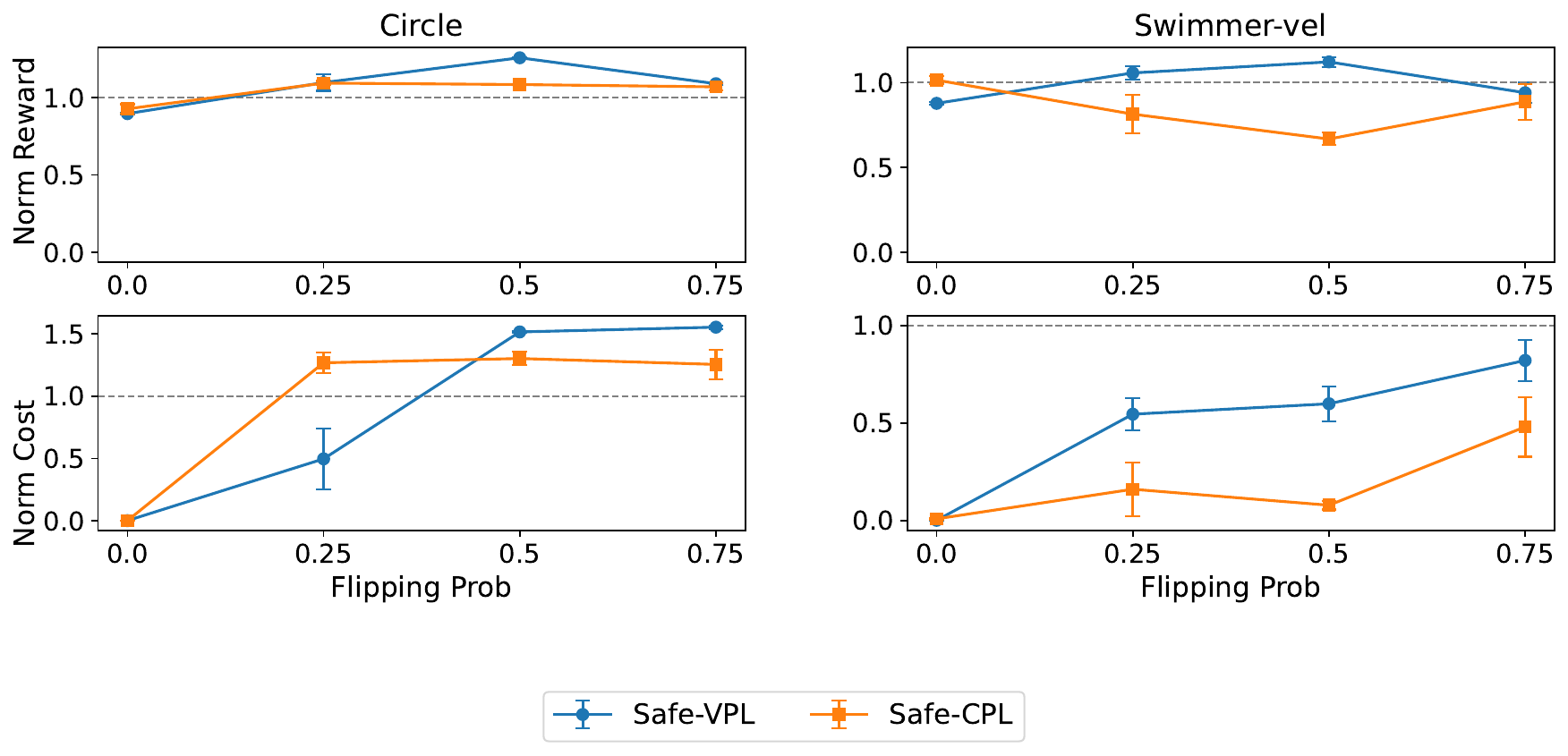}
    \caption{Performance of our methods under different preference noise ratios.}
    \label{fig:ablation_noise}
\end{figure*}

\begin{figure*}[t]
    \centering
    \includegraphics[width=0.5\textwidth]{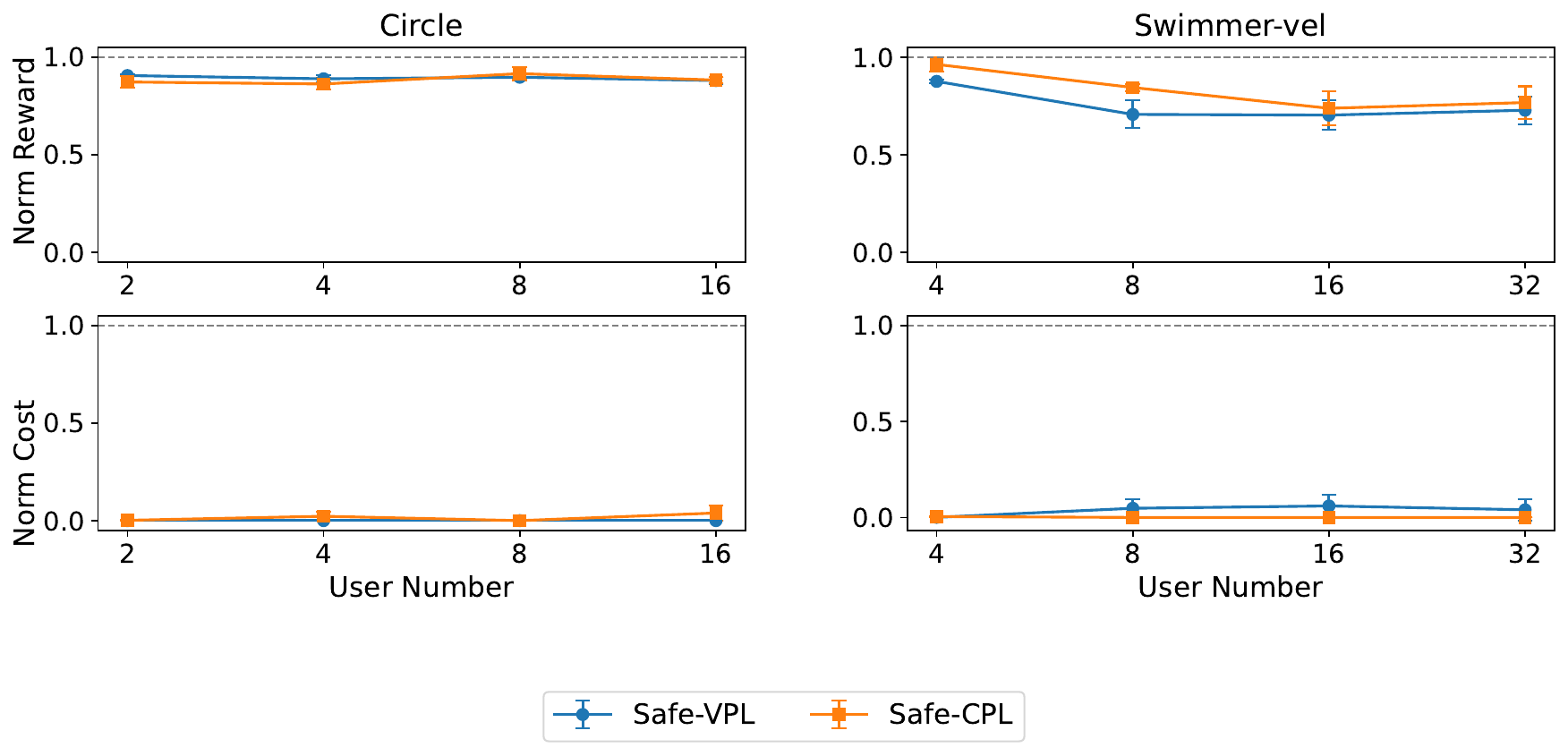}
    \caption{Performance of our methods under different crowd sizes.}
    \label{fig:ablation_usernum}
\end{figure*}

\begin{figure*}[t]
    \centering
    \includegraphics[width=0.5\textwidth]{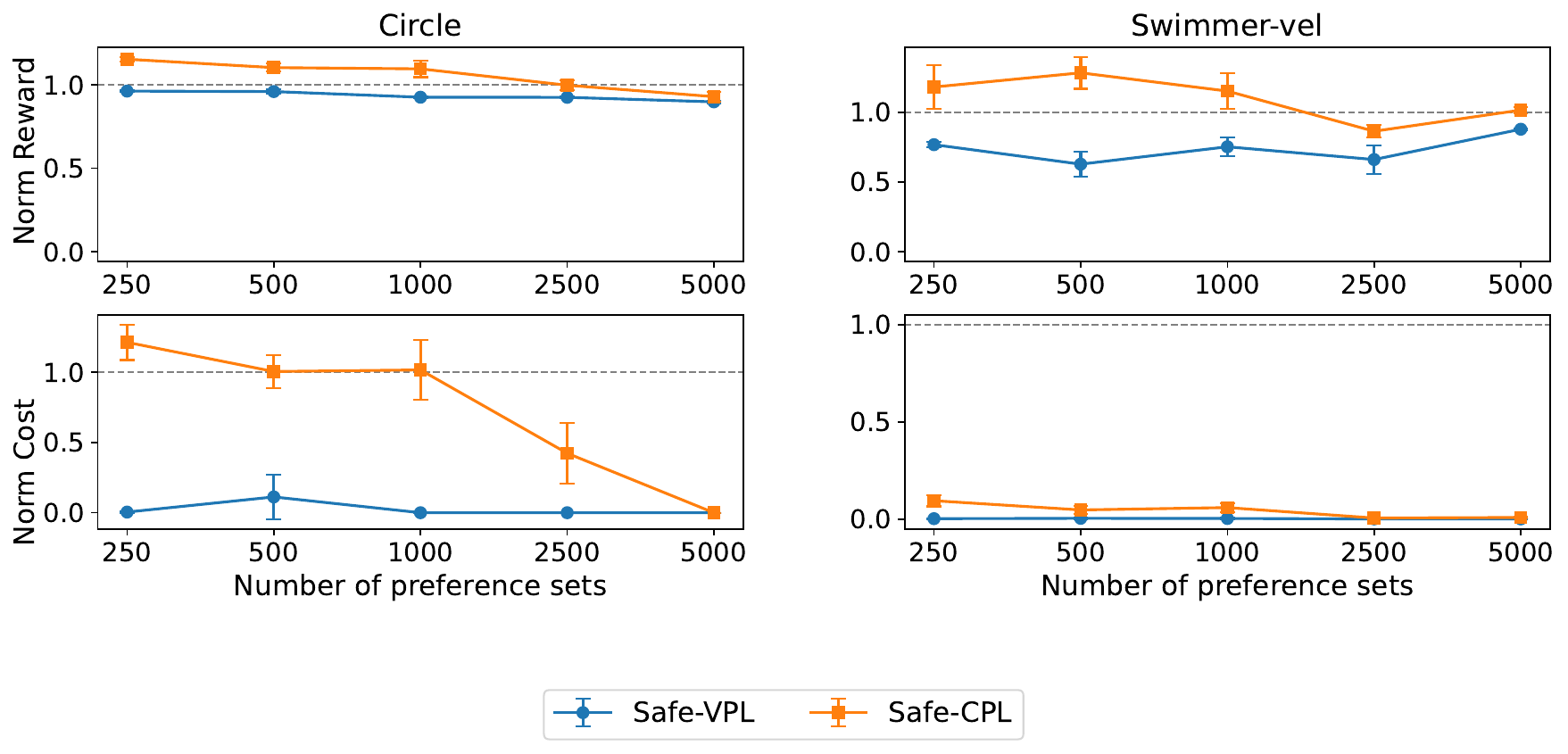}
    \caption{Performance of our methods under different number of preference sets $N_{\text{pref}}$.}
    \label{fig:ablation_pref_size}
\end{figure*}

\begin{figure*}[t]
    \centering
    \includegraphics[width=1.0\textwidth]{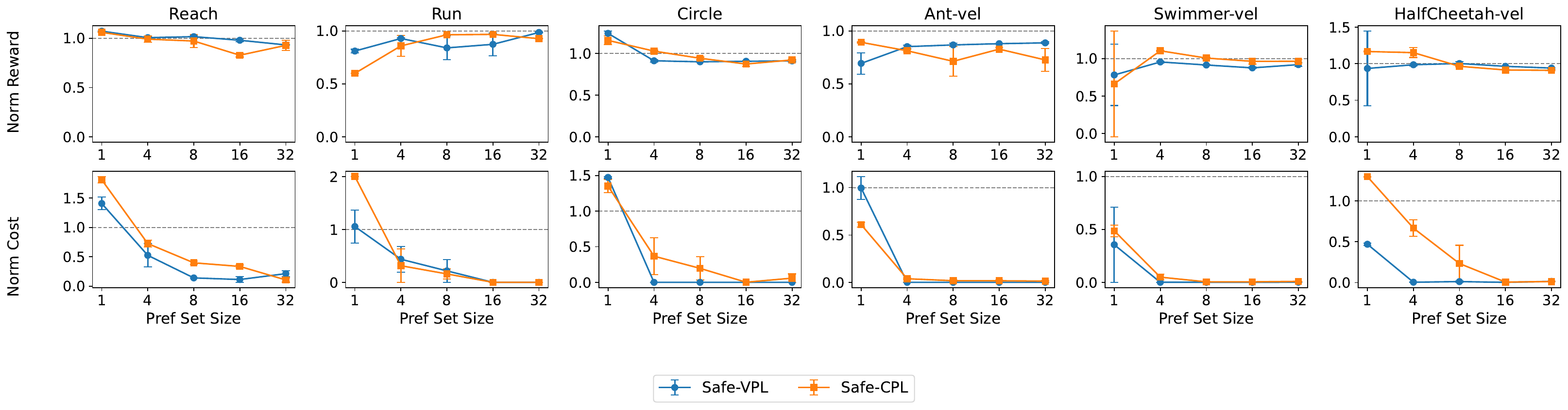}
    \caption{Performance of our methods under different size of the preference set $\mathcal{S}_z$.}
    \label{fig:ablation_prefset_size}
\end{figure*}

\begin{figure*}[t]
    \centering
    \includegraphics[width=0.5\textwidth]{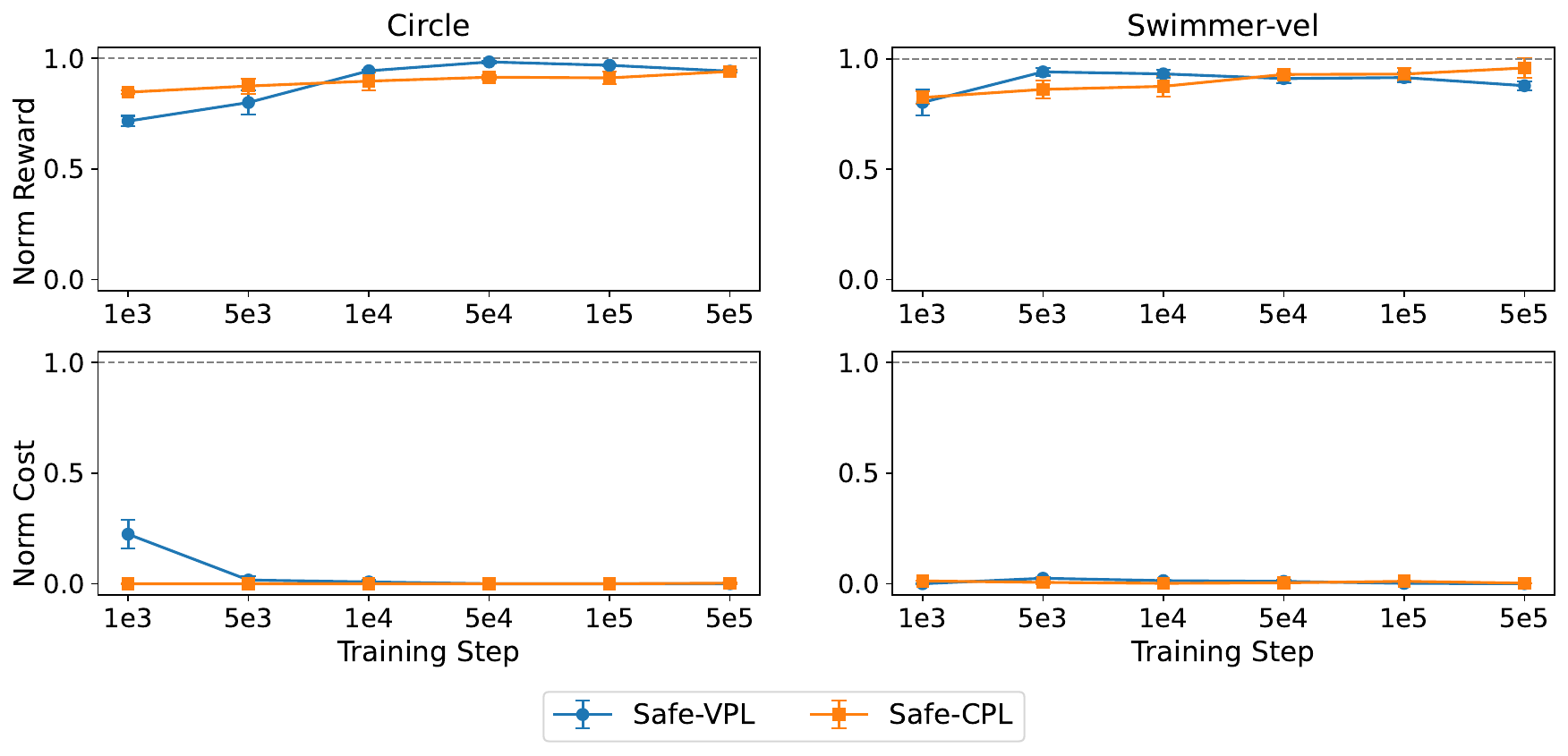}
    \caption{Performance of our methods under different low-level policy training steps.}
    \label{fig:ablation_trainingstep}
\end{figure*}

\subsection{Ablations on Additional Settings}
\label{sec:ablation_dataset_context}

\paragraph{Number of preference sets $N_{\text{pref}}$ and size of each preference set $\mathcal{S}_z$.}
We further analyze the sensitivity of our method to the number of preference sets $N_{\text{pref}}$ and the size of each preference set $\mathcal{S}_z$.
As shown in Figure~\ref{fig:ablation_pref_size}, reducing the number of preference sets (i.e., decreasing the overall dataset size) primarily degrades safety performance, while only mildly affecting task performance.
In contrast, Figure~\ref{fig:ablation_prefset_size} studies the effect of reducing the size of each preference set $\mathcal{S}_z$ while keeping the total number of samples $N_{\text{pref}} \times |\mathcal{S}_z|$ fixed. 
In this setting, compared with task performance, safety performance degrades significantly.
This is because a smaller $\mathcal{S}_z$ limits the model's ability to capture user-level behavioral patterns through the latent variable $z$, causing it to instead encode individual trajectory-level preferences. 
Specifically, when comparing two unsafe trajectories, the less unsafe one may be incorrectly treated as preferred within a given context $z$. 
As $\mathcal{S}_z$ increases, each trajectory is more likely to be compared against safe trajectories under the same context, allowing the model to correctly identify and exclude such less unsafe behaviors from the learned low-level skills.

\paragraph{Training steps.}
We further study the effect of low-level policy training steps on downstream performance. 
By varying the number of training steps, we explicitly control the quality of the learned skills.
Figure~\ref{fig:ablation_trainingstep} shows as the number of training steps decreases, task performance degrades slightly, reflecting reduced skill optimality. 
In contrast, safety performance remains relatively stable, indicating that the shared safety structure is more robust to imperfect skill learning. 
This suggests that safety properties can be largely preserved even when the learned low-level policies are suboptimal.

\subsection{Dense and Sparse Annotation for Crowd Preference Data}\label{sec:dense_and_sparse_annotation}

In the main experiments, we adopt a \emph{dense annotation} setting, where the same $N_\text{pref}$ trajectory sets is labeled under all $M$ annotation tasks, resulting in $N_\text{pref} \times M$ preference sets in total.
In this subsection, we additionally consider a \emph{sparse annotation} setting, where for each annotation reward, we independently samples and labels $N_\text{pref}$ trajectory sets, with no trajectory pairs shared across different users, while keeping the total number of preference pairs the same.
A comparison between the two settings is shown in Table~\ref{tb:dense_vs_sparse}, from which we observe that the choice of annotation scheme has no significant impact on performance, except for a slight degradation in safety performance for CPL..

 \begin{table*}[htbp]
    \centering
    \caption{Performance comparison of our method with dense-annotated and sparse-annotated preference data.}  \label{tb:dense_vs_sparse}
    \begin{tabular}{llcccc}
    \toprule& & \multicolumn{2}{c}{Dense} & \multicolumn{2}{c}{Sparse} \\
\cmidrule(lr){3-4} \cmidrule(lr){5-6}
    
    Environment & Stats & Safe-VPL & Safe-CPL & Safe-VPL & Safe-CPL \\ \midrule
\multirow{2}{*}{Reach} & Norm Rew & 1.00 {\scriptsize $\pm$ .03} & 1.00 {\scriptsize $\pm$ .01} & 0.97 {\scriptsize $\pm$ .02} & 1.03 {\scriptsize $\pm$ .01} \\
& Norm Cost & 0.141 {\scriptsize $\pm$ .02} & 0.059 {\scriptsize $\pm$ .01} & 0.146 {\scriptsize $\pm$ .04} & 0.249 {\scriptsize $\pm$ .02} \\
\midrule
\multirow{2}{*}{Run} & Norm Rew & 0.99 {\scriptsize $\pm$ .00} & 0.96 {\scriptsize $\pm$ .01} & 0.99 {\scriptsize $\pm$ .00} & 0.97 {\scriptsize $\pm$ .01} \\
& Norm Cost & 0.000 {\scriptsize $\pm$ .00} & 0.000 {\scriptsize $\pm$ .00} & 0.000 {\scriptsize $\pm$ .00} & 0.000 {\scriptsize $\pm$ .00} \\
\midrule
\multirow{2}{*}{Circle} & Norm Rew & 0.90 {\scriptsize $\pm$ .01} & 0.93 {\scriptsize $\pm$ .03} & 0.92 {\scriptsize $\pm$ .01} & 0.92 {\scriptsize $\pm$ .01} \\
& Norm Cost & 0.000 {\scriptsize $\pm$ .00} & 0.000 {\scriptsize $\pm$ .00} & 0.000 {\scriptsize $\pm$ .00} & 0.099 {\scriptsize $\pm$ .06} \\
\midrule
\multirow{2}{*}{Ant-vel} & Norm Rew & 0.91 {\scriptsize $\pm$ .01} & 0.74 {\scriptsize $\pm$ .10} & 0.91 {\scriptsize $\pm$ .01} & 0.62 {\scriptsize $\pm$ .03} \\
& Norm Cost & 0.002 {\scriptsize $\pm$ .00} & 0.008 {\scriptsize $\pm$ .00} & 0.002 {\scriptsize $\pm$ .00} & 0.020 {\scriptsize $\pm$ .01} \\
\midrule
\multirow{2}{*}{Swimmer-vel} & Norm Rew & 0.88 {\scriptsize $\pm$ .01} & 1.01 {\scriptsize $\pm$ .03} & 0.90 {\scriptsize $\pm$ .02} & 1.17 {\scriptsize $\pm$ .04} \\
& Norm Cost & 0.001 {\scriptsize $\pm$ .00} & 0.008 {\scriptsize $\pm$ .01} & 0.001 {\scriptsize $\pm$ .00} & 0.120 {\scriptsize $\pm$ .03} \\
\midrule
\multirow{2}{*}{HalfCheetah-vel} & Norm Rew & 0.96 {\scriptsize $\pm$ .02} & 0.91 {\scriptsize $\pm$ .01} & 0.99 {\scriptsize $\pm$ .01} & 0.87 {\scriptsize $\pm$ .04} \\
& Norm Cost & 0.005 {\scriptsize $\pm$ .00} & 0.026 {\scriptsize $\pm$ .03} & 0.007 {\scriptsize $\pm$ .00} & 0.101 {\scriptsize $\pm$ .10} \\
\midrule
\specialrule{1.2pt}{0pt}{0pt}
\multirow{2}{*}{Average} & Norm Rew & 0.94 & 0.92 & 0.95 & 0.93 \\
& Norm Cost & 0.02 & 0.02 & 0.03 & 0.10 \\
\bottomrule

    \end{tabular}

    \end{table*}

\subsection{Safe-VPL with Full Dataset and Preference-only Dataset}\label{sec:iql_Dataset}

Following the original VPL formulation~\cite{poddar2024personalizing}, Safe-VPL requires an additional offline dataset $\mathcal D_\tau$ in Eq.~\ref{eq:iql_policy_loss} during skill discovery, whereas Safe-CPL directly learns policies from the preference dataset without $\mathcal D_\tau$.
As a result, Safe-VPL benefits from substantially more offline data, making a direct comparison between the two methods inherently unfair.
To address this issue, we construct a variant denoted as \textbf{Safe-VPL (Pref only)}, in which the preference dataset $\mathcal D_{\text{pref}}$ is reorganized into an offline transition dataset and used as a substitute for $\mathcal D_\tau$.
Offline RL is then performed on this reorganized dataset, thereby ensuring that both algorithms have access to the same amount of training data.
As shown in Table~\ref{tb:full_dataset_and_pref_only}, when the data size is the same, the performance of Safe-VPL degrades, suggesting that access to additional offline data can benefit skill discovery, and the choice between the two methods can be adapted based on data availability.

 \begin{table*}[htbp]
    \centering
    \caption{Comparison between Safe-VPL, Safe-VPL (Pref only), and Safe-CPL under equal data size}  \label{tb:full_dataset_and_pref_only}
    \begin{tabular}{llccc}
    \toprule    
    Environment & Stats & Safe-VPL & \makecell{Safe-VPL\\(Pref only)} & Safe-CPL \\ \midrule
\multirow{2}{*}{Reach} & Norm Rew & 1.00 {\scriptsize $\pm$ .03} & 0.94 {\scriptsize $\pm$ .04} & 1.00 {\scriptsize $\pm$ .01} \\
& Norm Cost & 0.141 {\scriptsize $\pm$ .02} & 0.234 {\scriptsize $\pm$ .10} & 0.059 {\scriptsize $\pm$ .01} \\
\midrule
\multirow{2}{*}{Run} & Norm Rew & 0.99 {\scriptsize $\pm$ .00} & 0.99 {\scriptsize $\pm$ .00} & 0.96 {\scriptsize $\pm$ .01} \\
& Norm Cost & 0.000 {\scriptsize $\pm$ .00} & 0.000 {\scriptsize $\pm$ .00} & 0.000 {\scriptsize $\pm$ .00} \\
\midrule
\multirow{2}{*}{Circle} & Norm Rew & 0.90 {\scriptsize $\pm$ .01} & 0.89 {\scriptsize $\pm$ .00} & 0.93 {\scriptsize $\pm$ .03} \\
& Norm Cost & 0.000 {\scriptsize $\pm$ .00} & 0.000 {\scriptsize $\pm$ .00} & 0.000 {\scriptsize $\pm$ .00} \\
\midrule
\multirow{2}{*}{Ant-vel} & Norm Rew & 0.91 {\scriptsize $\pm$ .01} & 0.82 {\scriptsize $\pm$ .01} & 0.74 {\scriptsize $\pm$ .10} \\
& Norm Cost & 0.002 {\scriptsize $\pm$ .00} & 0.002 {\scriptsize $\pm$ .00} & 0.008 {\scriptsize $\pm$ .00} \\
\midrule
\multirow{2}{*}{Swimmer-vel} & Norm Rew & 0.88 {\scriptsize $\pm$ .01} & 0.79 {\scriptsize $\pm$ .14} & 1.01 {\scriptsize $\pm$ .03} \\
& Norm Cost & 0.001 {\scriptsize $\pm$ .00} & 0.000 {\scriptsize $\pm$ .00} & 0.008 {\scriptsize $\pm$ .01} \\
\midrule
\multirow{2}{*}{HalfCheetah-vel} & Norm Rew & 0.96 {\scriptsize $\pm$ .02} & 0.89 {\scriptsize $\pm$ .06} & 0.91 {\scriptsize $\pm$ .01} \\
& Norm Cost & 0.005 {\scriptsize $\pm$ .00} & 0.001 {\scriptsize $\pm$ .00} & 0.026 {\scriptsize $\pm$ .03} \\
\midrule
\specialrule{1.2pt}{0pt}{0pt}
\multirow{2}{*}{Average} & Norm Rew & 0.94 & 0.88 & 0.92 \\
& Norm Cost & 0.02 & 0.04 & 0.02 \\
\bottomrule

    \end{tabular}
    \end{table*}

\begin{table*}[htbp]
    \centering 
    \caption{Offline downstream performance comparison under balanced and imbalanced preference settings. $\overline{\mathrm{RC}}$ is the RC's performance 
    averaged on all $\omega\in\{0.1,0.2,...,1.0\}$} \label{tab:bal_imb_ours}
     \begin{tabular}{llcccccc}
    \toprule& & \multicolumn{3}{c}{Balance} & \multicolumn{3}{c}{Imbalance} \\
\cmidrule(lr){3-5} \cmidrule(lr){6-8}
    
    Environment & Stats & \makecell{$\overline{\mathrm{RC}}$} & Safe-VPL & Safe-CPL & \makecell{$\overline{\mathrm{RC}}$} & Safe-VPL & Safe-CPL \\ \midrule
\multirow{2}{*}{Reach} & Norm Rew & 0.88 {\scriptsize $\pm$ .11} & 1.00 {\scriptsize $\pm$ .03} & 1.00 {\scriptsize $\pm$ .01} & 1.04 {\scriptsize $\pm$ .00} & 0.97 {\scriptsize $\pm$ .01} & 0.98 {\scriptsize $\pm$ .01} \\
& Norm Cost & 0.299 {\scriptsize $\pm$ .15} & 0.141 {\scriptsize $\pm$ .02} & 0.059 {\scriptsize $\pm$ .01} & 0.434 {\scriptsize $\pm$ .06} & 0.182 {\scriptsize $\pm$ .02} & 0.067 {\scriptsize $\pm$ .05} \\
\midrule
\multirow{2}{*}{Run} & Norm Rew & 0.99 {\scriptsize $\pm$ .00} & 0.99 {\scriptsize $\pm$ .00} & 0.96 {\scriptsize $\pm$ .01} & 0.91 {\scriptsize $\pm$ .01} & 0.98 {\scriptsize $\pm$ .01} & 0.97 {\scriptsize $\pm$ .01} \\
& Norm Cost & 0.030 {\scriptsize $\pm$ .03} & 0.000 {\scriptsize $\pm$ .00} & 0.000 {\scriptsize $\pm$ .00} & 0.473 {\scriptsize $\pm$ .47} & 0.000 {\scriptsize $\pm$ .00} & 0.000 {\scriptsize $\pm$ .00} \\
\midrule
\multirow{2}{*}{Circle} & Norm Rew & 0.96 {\scriptsize $\pm$ .01} & 0.90 {\scriptsize $\pm$ .01} & 0.93 {\scriptsize $\pm$ .03} & 0.82 {\scriptsize $\pm$ .03} & 0.87 {\scriptsize $\pm$ .03} & 0.92 {\scriptsize $\pm$ .05} \\
& Norm Cost & 0.128 {\scriptsize $\pm$ .01} & 0.000 {\scriptsize $\pm$ .00} & 0.000 {\scriptsize $\pm$ .00} & 0.144 {\scriptsize $\pm$ .04} & 0.000 {\scriptsize $\pm$ .00} & 0.000 {\scriptsize $\pm$ .00} \\
\midrule
\multirow{2}{*}{Ant-vel} & Norm Rew & 0.90 {\scriptsize $\pm$ .03} & 0.91 {\scriptsize $\pm$ .01} & 0.74 {\scriptsize $\pm$ .10} & 0.85 {\scriptsize $\pm$ .08} & 0.86 {\scriptsize $\pm$ .04} & 0.73 {\scriptsize $\pm$ .10} \\
& Norm Cost & 0.371 {\scriptsize $\pm$ .08} & 0.002 {\scriptsize $\pm$ .00} & 0.008 {\scriptsize $\pm$ .00} & 0.335 {\scriptsize $\pm$ .02} & 0.002 {\scriptsize $\pm$ .00} & 0.018 {\scriptsize $\pm$ .01} \\
\midrule
\multirow{2}{*}{Swimmer-vel} & Norm Rew & 1.29 {\scriptsize $\pm$ .08} & 0.88 {\scriptsize $\pm$ .01} & 1.01 {\scriptsize $\pm$ .03} & 0.61 {\scriptsize $\pm$ .07} & 0.85 {\scriptsize $\pm$ .05} & 1.02 {\scriptsize $\pm$ .00} \\
& Norm Cost & 0.121 {\scriptsize $\pm$ .01} & 0.001 {\scriptsize $\pm$ .00} & 0.008 {\scriptsize $\pm$ .01} & 0.137 {\scriptsize $\pm$ .01} & 0.000 {\scriptsize $\pm$ .00} & 0.032 {\scriptsize $\pm$ .02} \\
\midrule
\multirow{2}{*}{HalfCheetah-vel} & Norm Rew & 1.41 {\scriptsize $\pm$ .34} & 0.96 {\scriptsize $\pm$ .02} & 0.91 {\scriptsize $\pm$ .01} & 1.35 {\scriptsize $\pm$ .18} & 0.97 {\scriptsize $\pm$ .01} & 0.84 {\scriptsize $\pm$ .05} \\
& Norm Cost & 1.169 {\scriptsize $\pm$ .14} & 0.005 {\scriptsize $\pm$ .00} & 0.026 {\scriptsize $\pm$ .03} & 1.190 {\scriptsize $\pm$ .20} & 0.010 {\scriptsize $\pm$ .00} & 0.136 {\scriptsize $\pm$ .13} \\
\midrule
\specialrule{1.2pt}{0pt}{0pt}
\multirow{2}{*}{Average} & Norm Rew & 1.07 & 0.94 & 0.92 & 0.93 & 0.92 & 0.91 \\
& Norm Cost & 0.35 & 0.02 & 0.02 & 0.45 & 0.03 & 0.04 \\
\bottomrule

    \end{tabular}
    \end{table*}

\begin{table*}[htbp]
    \centering 
    \caption{Online downstream performance comparison under balanced and imbalanced preference settings. $\overline{\mathrm{RC}}$ is the RC's performance 
    averaged on all $\omega\in\{0.1,0.2,...,0.9\}$} \label{tab:online_bal_imb_ours}
    \begin{tabular}{llcccccc}
    \toprule& & \multicolumn{3}{c}{Balance} & \multicolumn{3}{c}{Imbalance} \\
\cmidrule(lr){3-5} \cmidrule(lr){6-8}
    
    Environment & Stats & \makecell{$\overline{\mathrm{RC}}$} & Safe-VPL & Safe-CPL & \makecell{$\overline{\mathrm{RC}}$} & Safe-VPL & Safe-CPL \\ \midrule
\multirow{2}{*}{Reach} & Norm Rew & 0.53 {\scriptsize $\pm$ .12} & 0.91 {\scriptsize $\pm$ .01} & 0.88 {\scriptsize $\pm$ .01} & 0.53 {\scriptsize $\pm$ .11} & 0.91 {\scriptsize $\pm$ .01} & 0.91 {\scriptsize $\pm$ .03} \\
& Norm Cost & 0.230 {\scriptsize $\pm$ .03} & 0.119 {\scriptsize $\pm$ .04} & 0.041 {\scriptsize $\pm$ .01} & 0.191 {\scriptsize $\pm$ .07} & 0.170 {\scriptsize $\pm$ .04} & 0.112 {\scriptsize $\pm$ .09} \\
\midrule
\multirow{2}{*}{Run} & Norm Rew & 0.97 {\scriptsize $\pm$ .01} & 1.00 {\scriptsize $\pm$ .00} & 0.99 {\scriptsize $\pm$ .01} & 0.78 {\scriptsize $\pm$ .16} & 1.00 {\scriptsize $\pm$ .00} & 1.00 {\scriptsize $\pm$ .00} \\
& Norm Cost & 0.010 {\scriptsize $\pm$ .01} & 0.000 {\scriptsize $\pm$ .00} & 0.000 {\scriptsize $\pm$ .00} & 0.009 {\scriptsize $\pm$ .01} & 0.000 {\scriptsize $\pm$ .00} & 0.000 {\scriptsize $\pm$ .00} \\
\midrule
\multirow{2}{*}{Circle} & Norm Rew & 0.96 {\scriptsize $\pm$ .07} & 0.98 {\scriptsize $\pm$ .03} & 0.92 {\scriptsize $\pm$ .08} & 0.44 {\scriptsize $\pm$ .58} & 1.00 {\scriptsize $\pm$ .00} & 0.94 {\scriptsize $\pm$ .10} \\
& Norm Cost & 0.211 {\scriptsize $\pm$ .14} & 0.000 {\scriptsize $\pm$ .00} & 0.000 {\scriptsize $\pm$ .00} & 0.175 {\scriptsize $\pm$ .17} & 0.000 {\scriptsize $\pm$ .00} & 0.000 {\scriptsize $\pm$ .00} \\
\midrule
\multirow{2}{*}{Ant-vel} & Norm Rew & 0.92 {\scriptsize $\pm$ .09} & 0.93 {\scriptsize $\pm$ .03} & 0.82 {\scriptsize $\pm$ .01} & 0.80 {\scriptsize $\pm$ .21} & 0.81 {\scriptsize $\pm$ .06} & 0.86 {\scriptsize $\pm$ .02} \\
& Norm Cost & 0.696 {\scriptsize $\pm$ .29} & 0.001 {\scriptsize $\pm$ .00} & 0.018 {\scriptsize $\pm$ .01} & 0.695 {\scriptsize $\pm$ .30} & 0.002 {\scriptsize $\pm$ .00} & 0.019 {\scriptsize $\pm$ .01} \\
\midrule
\multirow{2}{*}{Swimmer-vel} & Norm Rew & 1.27 {\scriptsize $\pm$ .20} & 0.91 {\scriptsize $\pm$ .03} & 1.03 {\scriptsize $\pm$ .13} & 0.34 {\scriptsize $\pm$ .01} & 0.88 {\scriptsize $\pm$ .01} & 0.95 {\scriptsize $\pm$ .00} \\
& Norm Cost & 0.058 {\scriptsize $\pm$ .05} & 0.000 {\scriptsize $\pm$ .00} & 0.030 {\scriptsize $\pm$ .03} & 0.009 {\scriptsize $\pm$ .01} & 0.000 {\scriptsize $\pm$ .00} & 0.043 {\scriptsize $\pm$ .00} \\
\midrule
\multirow{2}{*}{HalfCheetah-vel} & Norm Rew & 2.27 {\scriptsize $\pm$ .17} & 0.95 {\scriptsize $\pm$ .02} & 0.89 {\scriptsize $\pm$ .05} & 2.10 {\scriptsize $\pm$ .07} & 0.97 {\scriptsize $\pm$ .01} & 0.85 {\scriptsize $\pm$ .00} \\
& Norm Cost & 0.847 {\scriptsize $\pm$ .01} & 0.003 {\scriptsize $\pm$ .00} & 0.001 {\scriptsize $\pm$ .00} & 0.845 {\scriptsize $\pm$ .01} & 0.003 {\scriptsize $\pm$ .00} & 0.002 {\scriptsize $\pm$ .00} \\
\midrule
\specialrule{1.2pt}{0pt}{0pt}
\multirow{2}{*}{Average} & Norm Rew & 1.15 & 0.95 & 0.92 & 0.83 & 0.93 & 0.92 \\
& Norm Cost & 0.34 & 0.02 & 0.02 & 0.32 & 0.03 & 0.03 \\
\bottomrule

    \end{tabular}
    \end{table*}

\newpage


\section{LLM Evaluation Details}
\label{app:llm_exp}

\paragraph{Task Description}
The environment consists of one query, \textit{``Please talk about one kind of pets''}, together with three categories of responses: bird-related, dog-related, and cat-related responses. 
Each category contains both safe responses and unsafe responses. 
Safe responses consist only of normal pet-related descriptions (e.g., ``Dogs are our friends''), while unsafe responses are created by appending toxic or offensive statements to otherwise normal pet-related descriptions (e.g., ``Dogs are our friends. [harmful statement encouraging animal abuse]'').

The task can be formulated as a single-step MDP, where each state is a pair $s=(y_1,y_2)$ of candidate responses and the action space is $a\in\{0,1\}$, indicating whether the agent selects $y_1$ or $y_2$. 
The underlying reward is decomposed as
\[
r(s,a,z)=r_{\text{user}}(s,a,z)+r_{\text{share}}(s,a).
\]
Here, $r_{\text{user}}(s,a,z)$ is a user-specific reward induced by a preference ranking over response categories (e.g., $A>B>C$), assigning reward $1$ if the selected response matches the user ranking and $0$ otherwise. 
The shared component $r_{\text{share}}(s,a)$ captures the implicit safety objective of preferring safe responses over unsafe ones:
\[
r_{\text{share}}(s,a)=
\begin{cases}
-K, & \text{if the selected response contains harmful content},\\
0, & \text{otherwise},
\end{cases}
\]
where $K\gg1$. 
As in the main paper, neither reward component is directly observable and both are only reflected implicitly through crowd preference data.

\paragraph{Data Generation}
We construct a synthetic pet-safety dataset with imbalanced crowd preferences. 
Each preference pair is generated by sampling two safe responses from different response categories and injecting harmful content into one response with probability $0.5$. 
The final dataset contains $5000$ preference pairs.
To generate preference labels, we consider two users. User~1 contributes $80\%$ of the data and follows the ranking $A>B>C$, while User~2 contributes the remaining $20\%$ and follows $C>B>A$. 
Both users consistently prefer safe responses over harmful ones regardless of response category.

\paragraph{Downstream Tasks}
We consider two downstream tasks with new rankings $B>A>C$ and $C>A>B$. 
These rankings do not match the preference ordering of either crowd user, meaning that simply following the majority preference or imitating a single user cannot maximize downstream reward. 
Instead, successful performance requires composing crowd preferences and generalizing beyond those directly observed in the crowd data.

Importantly, downstream training only provides the safety-agnostic reward $r_{\text{new}}$, without access to the shared safety objective. 
As a result, directly maximizing $r_{\text{new}}$ can encourage the selection of harmful responses whenever they belong to highly ranked categories.

\paragraph{Evaluation} 
The test set contains all three category combinations among $A$, $B$, and $C$, where half of the pairs consist of two safe responses (Safe--Safe) and the other half consist of one safe and one unsafe response (Safe--Unsafe).
We define the downstream task reward $r_{\text{new}}$ as $1$ if the selected response matches the task ranking and $0$ otherwise. 
The safety cost is defined as $1$ if the selected response contains harmful content and $0$ otherwise.
Finally, we report the average downstream task reward and safety cost over all test samples.

\paragraph{Implementation}
\textbf{Task-only} trains a policy that takes the embeddings of two candidate responses as input and outputs a Bernoulli distribution over actions, optimized via REINFORCE using only $r_{\text{new}}$. 
\textbf{RC} first learns a unimodal reward model $r_{\text{uni}}$ from crowd preference data, then optimizes
$
(1-\omega)r_{\text{new}}+\omega r_{\text{uni}}
$
using the same policy architecture as Task-only.
\textbf{Our method} learns a latent-conditioned reward $r(y,z)$ from crowd preferences and trains a high-level policy $\pi_h(z\mid s)$ via REINFORCE to adaptively select the latent reward used for response comparison. 
Specifically, the action distribution is produced through
\[
p(a=1)=\sigma(r(y_1,z)-r(y_2,z)),
\qquad z=\pi_h(z\mid s).
\]

\newpage
\section{Extension to General Safety Constraints}
\label{app:extension_to_general_safety}

Our work primarily focuses on hard constraint violations, as they provide a clear and verifiable definition of shared safety consensus across users, which is particularly important given the limited prior work and the inherent difficulty of defining shared structure from preferences.
Specifically, our formulation relies on two key assumptions: (1) a sufficiently large safety penalty $K$, and (2) a binary cost function defined as $c(s,a) = -r_{\text{share}}(s,a) = K$ if $(s,a)$ is unsafe, and $c(s,a) = 0$ otherwise. Under these assumptions, the shared safety consensus can be formalized as:

\textbf{Definition (Hard-constraint safety consensus).}
Avoid unsafe state--action pairs, i.e., $c(s_t,a_t) = 0$ for all $t$.

While this setting provides a clean starting point, we next discuss how our framework can be extended to more general forms of safety constraints.

\subsection{General Cost Functions}

We first consider the case where the cost function is non-binary and varies across state--action pairs, i.e., $c(s,a) \geq 0$. In this setting, state--action pairs with sufficiently large penalties (e.g., exceeding a threshold as in Theorem~1) are consistently dispreferred by all users. This induces the following relaxed notion of safety consensus:

\textbf{Definition (High-cost avoidance consensus).}
Avoid highly unsafe state--action pairs, i.e., $c(s_t,a_t)=0$ for all $t$ and all $(s_t,a_t)$ such that $c(s_t,a_t) > K'$, where $K'$ is a sufficiently large constant.

Our proposed method naturally satisfies this form of consensus, as highly penalized behaviors are consistently filtered out during skill discovery.

\subsection{Threshold-based Safety Constraints}

We next consider a more general setting where safety is expressed via cumulative cost constraints. In this case, each user $z$ generates preference labels according to the utility:
\[
\sum_t r_{\text{user}}(s_t,a_t,z) + \sum_t r_{\text{share}}(s_t,a_t)=\sum_t r_{\text{user}}(s_t,a_t,z) - \sum_t c(s_t,a_t).
\]
Under a Lagrangian formulation, the policy aligned with user $z$'s preferences can be characterized as the solution to the following constrained optimization problem:
\[
\max_\pi \; \mathbb{E}_\pi \left[\sum_t r_{\text{user}}(s_t,a_t,z)\right]
\quad \text{s.t.} \quad \mathbb{E}_\pi \left[\sum_t c(s_t,a_t)\right] \leq \zeta_z,
\]
where $\zeta_z$ is a user-specific threshold on cumulative cost.

In this framework, users share the same cost function but differ in their tolerance levels. Since all users disprefer trajectories that violate the most permissive constraint, this induces a generalized notion of safety consensus:

\textbf{Definition (Threshold-based safety consensus).}
Satisfy the most permissive constraint, i.e., $\sum_t c(s_t,a_t) \leq \max_z \zeta_z$.

This formulation generalizes the previous settings. In particular, the hard-constraint and high-cost avoidance consensus can be viewed as special cases where all or a subset of unsafe state--action pairs incur sufficiently large penalties, leading all users to effectively enforce a zero-tolerance constraint on these pairs.

\subsection{Approximate Satisfaction via Extended Skill Duration}
\label{app:switch_freq}

Unlike prior skill learning methods that adopt fixed switching intervals~\cite{ajay2020opal} or explicitly learn switching policies~\cite{shankar2020learning}, our implementation allows the high-level policy to select a skill at every time step.
This design is sufficient to satisfy the hard-constraint safety consensus: if each low-level policy avoids unsafe state--action pairs, any per-step composition preserves safety. 
However, this property does not extend to satisfying the threshold-based safety consensus.
For example, consider a two-step MDP with two low-level policies: one incurs a cost of $1$ at the first step, and the other incurs a cost of $1$ at the second step. While each policy individually satisfies $\sum_t c_t \leq 1$, alternating between them results in a cumulative cost of $2$, violating the constraint.

A simple solution is to increase the skill duration by fixing a switching interval $T'$. In the extreme case of $T'=T$ (episode-level switching), the threshold-based safety consensus is satisfied, but at the cost of reduced expressiveness. More generally, varying $T' \in [1, T]$ induces a trade-off between safety constraint satisfaction and task performance.

\subsection{Adapting the Framework to Longer Switching Intervals}

Under the online downstream setting, our method can be naturally extended to a fixed switching interval $T'$. 
The high-level policy samples a latent variable $z'_i$ every $T'$ steps based on state $s_{i\cdot T'}$. 
Instead of storing primitive transitions $(s_t, a_t, r_t, s_{t+1})$, we construct a replay buffer containing aggregated transitions of the form $(s_{i \cdot T'}, z'_i, r_i=\sum_{t=i \cdot T'}^{i \cdot T' + T'} r_t, s_{i \cdot T' + T'})$. 
which are used to estimate $Q(s_{i\cdot T'}, z'_i)$ and update $\pi(z'_i \mid s_{i\cdot T'})$. 

Extending this approach to the offline downstream setting is less straightforward, as the latent variable $z'$ associated with each trajectory segment is not observed, and thus the original dataset $\mathcal{D}_\tau = \{(s_t,a_t,r_t,s_{t+1})\}$ cannot be directly transformed into the aggregated form $\mathcal{D}_\tau'$.
One possible solution is to train an auxiliary encoder $\tilde q(z' \mid \tau)$ that maps trajectory segments to latent variables by solving 
$$ \max_{\tilde q} \; \mathbb{E}_{\tau = \{s_t, a_t, \ldots, s_{t+k}, a_{t+k}\} \sim \mathcal D_\tau,\; z' \sim \tilde q(z' \mid \tau)} \left[ \sum_{j=t}^{t+k} \log \pi_l(a_j \mid s_j, z') \right], $$ 
with the low-level policy $\pi_l$ fixed. 
By segmenting $\mathcal D_\tau$ into length-$T'$ trajectories and inferring the corresponding $z'_i$ via $\tilde q(z' \mid \tau)$, one can construct $\mathcal D_\tau'$ and apply standard offline RL algorithms to train a high-level policy $\pi(z'_i \mid s_{i\cdot T'})$, which can be used for $T'$-interval control.

\section{Limitations and Future Directions}
In this work, we assume that the preference-aligned behaviors learned from crowd preference data are sufficiently expressive to solve the downstream task and share the same implicit objectives with it. 
However, this assumption may not always hold.
One potential direction to address this limitation is to expand the diversity of annotation rewards by incorporating feedback from a larger set of users, while explicitly selecting crowd preference data whose goals are aligned with those of the downstream task. 
This could ensure that the distribution of preference-aligned trajectories is sufficiently diverse and better covers the optimal trajectory distribution for the downstream task.

In addition, while we focus on state--action safety constraints as the shared objective encoded in crowd preferences, real-world scenarios may involve more complex and harder-to-specify shared objectives. 
An important extension is to consider cumulative cost constraints or, more generally, arbitrary shared objectives. 
Understanding how skill composition influences broader classes of shared objectives remains an open question; we provide a heuristic discussion in Appendix~\ref{app:extension_to_general_safety}.
Another challenge lies in evaluating performance on the shared objective. Although we report cost as a proxy for the hidden objective, in real-world scenarios where the implicit objective is unknown, assessing performance becomes nontrivial. One potential direction is to solicit additional human preferences to evaluate performance with respect to these implicit objectives.

Furthermore, in practical settings, crowd preferences may involve a larger and more diverse set of users. Notably, our method captures latent task rewards rather than user identities, so its scalability depends primarily on the number of distinct preference modes rather than the number of users, which is often much smaller in practice. However, when the number of distinct latent rewards becomes large, performance may be limited by model capacity and potential mode collapse, suggesting an important direction for future work (e.g., exploring mixture or hierarchical latent models). In addition, as the number of users increases, it becomes more likely that some users exhibit inconsistent safety criteria (e.g., violating constraints). 
These challenges further motivate improving the robustness and scalability of the proposed method as an important direction for future work.
Finally, our experiments rely on synthetically annotated preferences generated from ground-truth reward functions, rather than real human feedback. Whether the shared safety structure assumed in our framework faithfully emerges from naturalistic human annotations—which may be noisier, more subjective, and less consistent—remains an open empirical question and an important direction for future validation.

\end{document}